\newcommand{\indicator}{\mathds{1}}
\newcommand{\ep}{\hfill $\Box$}
\newcommand{\skl}{\textnormal{kl}}
\newcommand{\E}{\mathbb{E}}
\newcommand{\tends}{\underset{t \to \infty}{\longrightarrow}}
\DeclareMathOperator*{\argmax}{arg\,max}
\DeclareMathOperator*{\argmin}{arg\,min}
\newtheorem{theorem}{Theorem}
\newtheorem{definition}{Definition}
\newtheorem{lemma}{Lemma}
\newtheorem{remark}{Remark}
\newtheorem{proposition}{Proposition}
\newtheorem{corollary}{Corollary}
\theoremstyle{definition}
\title{Optimal Best-arm Identification in Linear Bandits}
\date{}
\author{Yassir Jedra\thanks{Y. Jedra and A. Proutiere are with the Division of Decision and Control Systems, School of Electrical Engineering and Computer Science, Royal institute of Technology (KTH), Stockholm, Sweden. Emails: \{{\it jedra@kth.se, alepro@kth.se}\}.}  \and Alexandre Proutiere\footnotemark[1]
}
\begin{document}

\maketitle

\begin{abstract}

We study the problem of best-arm identification with fixed confidence in stochastic linear bandits. The objective is to identify the best arm with a given level of certainty while minimizing the sampling budget. We devise a simple algorithm whose sampling complexity matches known instance-specific lower bounds, asymptotically almost surely and in expectation. The algorithm relies on an arm sampling rule that tracks an optimal proportion of arm draws, and that remarkably can be updated as rarely as we wish, without compromising its theoretical guarantees. Moreover, unlike existing best-arm identification strategies, our algorithm uses a stopping rule that does not depend on the number of arms. Experimental results suggest that our algorithm significantly outperforms existing algorithms. The paper further provides a first analysis of the best-arm identification problem in linear bandits with a continuous set of arms.

\end{abstract}


\section{Introduction}

The stochastic linear bandit \cite{auer2003,dani2008stochastic} is a sequential decision-making problem that generalizes the classical stochastic Multi-Armed Bandit (MAB) problem \cite{robbins1952,lai1985} by assuming that the average reward is a linear function of the arm. Linear bandits have been extensively applied in online services such us online advertisement and recommendation systems \cite{li2010,chu2011,li2016collaborative}, and constitute arguably the most relevant {\it structured} bandit model in practice. Most existing analyses of stochastic linear bandits concern regret minimization \cite{dani2008stochastic, paat2010, abbasi2011improved, lattimore2016end, combes2017}, i.e., the problem of devising an online algorithm maximizing the expected reward accumulated over a given time horizon. When the set of arms is finite, this problem is {\it solved} in the sense that we know an instance-specific regret lower bound, and a simple algorithm whose regret matches this fundamental limit \cite{lattimore2016end, combes2017}. 

The best-arm identification problem (also referred to as {\it pure exploration} problem) in linear bandits with finite set of arms has received less attention \cite{soare2014best, karnin2016verification, xu2017, tao18best, fiez2019sequential}, and does not admit a fully satisfactory solution. In the pure exploration problem with fixed confidence, one has to design a $\delta$-PAC algorithm (able to identify the best arm with probability at least $1-\delta$) using as few samples as possible. Such an algorithm consists of a sampling rule (an active policy to sequentially select arms), a stopping rule, and a decision rule that outputs the estimated best arm. The number of rounds before the algorithm stops is referred to as its sample complexity. An instance-specific information-theoretical lower bound of the expected sample complexity has been derived in \cite{soare2015thesis}. However, we are lacking simple and practical algorithms achieving this bound. Importantly, existing algorithms exhibit scalability issues as they always include subroutines that explicitly depend on the number of arms (refer to the related work for details). They may also be computationally involved.

In this paper, we present a new best-arm identification algorithm for linear bandits with finite set of arms, whose sample complexity matches the information-theoretical lower bound. The algorithm follows the track-and-stop principle proposed in \cite{garivier2016optimal} for pure exploration in bandits without structure. Its sampling rule tracks the optimal proportion of arm draws, predicted by the sample complexity lower bound and estimated using the least-squares estimator of the system parameter. Remarkably, this tracking procedure can be made as {\it lazy} as we wish (the estimated optimal proportion of draws can be updated rarely -- not every round) without compromising the asymptotic optimality of the algorithm. The stopping rule of our algorithm is classically based on a generalized likelihood ratio test. However the exploration threshold defining its stopping condition is novel, and critically, we manage to make it independent of the number of arms. Overall our algorithm is simple, scalable, and yet asymptotically optimal. In addition, its computational complexity can be tuned by changing the frequency at which the tracking rule is updated, without affecting its theoretical guarantees.

We also study the pure exploration problem in linear bandits with a continuous set of arms. We restrict our attention to the case where the set of arms consists of the $(d-1)$-dimensional unit sphere. We establish a sample complexity lower bound satisfied by any $(\epsilon,\delta)$-PAC algorithms (such algorithms identify an $\epsilon$-optimal arm with probability at least $1-\delta$). This bound scales as ${d\over \varepsilon}\log(1/\delta)$. We finally propose an algorithm whose sample complexity matches the lower bound order-wise.

%
%
%


{\bf Related work.} Best-arm identification algorithms in linear bandits with a finite set of $K$ arms have been proposed and analyzed in \cite{soare2014best, karnin2016verification, xu2017, tao18best, fiez2019sequential}. Soare et al. \cite{soare2014best} leverage tools from G-optimal experimental design to devise the ${\cal X}{\cal Y}$-adaptive algorithm returning the best arm and with sample complexity $\tau$ satisfying $\tau \lesssim (M^\star\vee T^\star_\mu \log( {K^2}/{\delta}))(\log \log ({K^2}/{\delta} ) + \log({1}/{\Delta^2_{\min} } ))$, w.p. $1-\delta$, where $\mu$ is the parameter defining the reward function, $\Delta_{\min}$ is the minimal gap between the best and a sub-optimal arm, $T^\star_\mu \log(1/\delta)$ is the information theoretical lower bound for the expected sample complexity of $\delta$-PAC algorithms, and where $M^\star$ is an instance-dependent constant. ${\cal X}{\cal Y}$-adaptive runs in phases, and eliminates arms at the end of each phase. The use of phases requires rounding procedures, which come with $d^2$ additional rounds in the sample complexity. The algorithm also requires to solve in each round an optimization problem similar to that leading to the sample complexity lower bound (see \textsection \ref{sec:low}). Improved versions of ${\cal X}{\cal Y}$-adaptive have been proposed in \cite{tao18best, fiez2019sequential}. ALBA \cite{tao18best} relies on a novel estimator for $\mu$ (removing the need of rounding procedures). RAGE \cite{fiez2019sequential} offers an improved sample complexity $\tau \lesssim T^*_\mu \log({1}/{\Delta^2_{\min}})(\log( { K^2 }/{\delta}) +d \log( {1}/{\Delta_{\min}^2}) )$ (slightly simplifying the expression). The aforementioned algorithms are rather complicated, and explicitly use the number $K$ of arms in some of their components: $K$ is present in the arm elimination function in ${\cal X}{\cal Y}$-adaptive, in the phase durations in \cite{tao18best, fiez2019sequential}. Importantly, their sample complexity does not match the information-theoretical lower bound when $\delta$ decreases. There is also no guarantees for their expected sample complexity. 

\cite{karnin2016verification} proposes an algorithm based on an explore-and-verify framework and with an asymptotically optimal sample complexity. The algorithm is not practical, but is the first to demonstrate that the lower bound derived in \cite{soare2015thesis} is achievable. In \cite{xu2017}, the authors present LinGapE, an algorithm, as simple as ours. However, its sampling and stopping rules are both sub-optimal (e.g. the algorithm needs to sample all arms at least once), which in turn leads to weak performance guarantees with a sample complexity satisfying $\tau\lesssim K\log(1/\delta)$.

The algorithm we present is as simple as LinGapE, does not run in phases, does not explicitly use the number of arms in its sampling and stopping rules, and has an asymptotically optimal sample complexity, both almost surely and in expectation. 

We are not aware of any work on best-arm identification in linear bandits with a continuous set of arms. We provide here the first results.  

%
%
%
%
%
%


\section{Model and Objective}

We consider a bandit problem with a set ${\cal A}\subset \mathbb{R}^d$ of arms. In round $t\ge 1$, if the decision maker selects arm $a$, she observes as a feedback a random reward $r_t = \mu^\top a +\eta_t$. $\mu\in \mathbb{R}^d $ is unknown, and $(\eta_t)_{t\ge 1}$ is a sequence of i.i.d. Gaussian random variables, $\eta_t\sim {\cal N}(0,\sigma)$. The objective is to learn the arm $a_\mu^\star$ with the highest expected reward $a_\mu^\star=\arg\max_{a\in {\cal A}} \mu^\top a$. Throughout the paper, we assume that $\mu$ and ${\cal A}$ are such that the best arm $a_\mu^\star$ is unique. We also assume that the set of arms ${\cal A}$ spans $\mathbb{R}^d$.

A best-arm identification algorithm consists of a sampling rule, a stopping rule, and a decision rule. The sampling rule decides which arm $a_t$ is selected in round $t$ based on past observations: $a_t$ is ${\cal F}_{t-1}$-measurable, where ${\cal F}_t$ is the $\sigma$-algebra generated by $(a_1,r_1,\ldots,a_t,r_t)$. The stopping rule decides when to stop sampling, and is defined by $\tau$, a stopping time w.r.t. the filtration $({\cal F}_t)_{t\ge 1}$. The decision rule outputs a guess $\hat{a}_\tau$ of the best arm based on observations collected up to round $\tau$, i.e., $\hat{a}_\tau$ is ${\cal F}_\tau$-measurable. The performance of an identification algorithm is assessed through its probabilistic guarantees, and through its sample complexity $\tau$. We consider different probabilistic guarantees, depending on whether the set of arms ${\cal A}$ is finite or continuous. Specifically: for $\epsilon, \delta>0$,

 \begin{definition} [Finite set of arms ${\cal A}$] An algorithm is $\delta$-PAC if for all $\mu$, $\mathbb{P}_\mu[\hat{a}_\tau \neq a_\mu^\star]\le \delta$ and $\mathbb{P}_\mu[\tau<\infty]=1$.
 \end{definition}

\begin{definition} [Continuous set of arms ${\cal A}$] An algorithm is $(\varepsilon,\delta)$-PAC if for all $\mu$, \\ $\mathbb{P}_\mu[\mu^\top (a_\mu^\star -\hat{a}_\tau)> \varepsilon ]\le \delta$ and $\mathbb{P}_\mu[\tau<\infty]=1$.
 \end{definition}

When the set of arms ${\cal A}$ is finite (resp. continuous), the objective is to devise a $\delta$-PAC (resp. $(\varepsilon,\delta)$-PAC) algorithm with minimal expceted sample complexity $\mathbb{E}_{\mu}[\tau]$.

{\bf Notation.} Let $[K]=\{1,\ldots,K\}$. $\Lambda=\{x\in [0,1]^K:\sum_k x_k=1\}$ denotes the simplex in dimension $K$. For $a,b\in [0,1]$, $\skl(a,b)$ is the KL divergence between two Bernoulli distributions of respective means $a$ and $b$. For any $w, w'\in \mathbb{R}^K$, we denote $d_\infty(w, w') = \max_{a \in [K] } \vert w_a -  w'_a \vert$, and for any compact set $C\subset \mathbb{R}^K$, $d_\infty(w, C) = \min_{w' \in C} d_\infty(w, w')$. For $w\in \mathbb{R}^K$, $\textrm{supp}(w)=\{a\in [K]: w_a\neq 0\}$ denotes the support of $w$. $\mathbb{P}_\mu$ (resp. $\mathbb{E}_\mu$) denotes the probability measure (resp. expectation) of observations generated under $\mu$; in absence of ambiguity, we simply use $\mathbb{P}$ (resp. $\mathbb{E}$). For two functions $f$ and $g$, we write $f \lesssim g$ iff there exists a universal constant $C$ such that for all $x$, $f(x)\le Cg(x)$.


\section{Finite set of arms}

Consider a finite set ${\cal A}$ of $K$ arms. We first recall existing lower bounds on the expected complexity of $\delta$-PAC algorithms, and then present our algorithm along with an analysis of its sample complexity.

\subsection{Sample complexity lower bound}\label{sec:low}

Soare \cite{soare2015thesis} derived the following sample complexity lower bound, using the method developed by Garivier and Kaufmann \cite{kaufmann2016complexity} in the case of bandits without structure.

\begin{theorem}\label{thm:lowerbound}
The sample complexity of any $\delta$-PAC algorithm satisfies: $\forall \mu$, $\E_\mu [\tau]\ge \sigma^2 T_\mu^\star \skl(\delta,1-\delta)$, where
\begin{equation}
(T_\mu^\star)^{-1} = \sup_{w \in \Lambda} \min_{a \in \mathcal{A}\backslash a^\star_\mu }\frac{(\mu^\top (a_\mu^\star - a))^2}{ 2(a^\star_\mu - a)^\top \left(\sum_{a \in \mathcal{A}} w_a a a^\top\right)^{-1}  (a^\star_\mu - a) }.
  \end{equation}
\end{theorem}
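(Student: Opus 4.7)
The plan is to follow the change-of-measure recipe of Kaufmann--Capp\'e--Garivier and Garivier--Kaufmann, specialised to the Gaussian linear observation model.

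\textbf{Step 1 (Transportation lemma).} For any $\delta$-PAC algorithm and any alternative parameter $\lambda\in\mathbb{R}^d$ with $a_\lambda^\star\neq a_\mu^\star$, the standard data-processing inequality gives
\[
\sum_{a\in\mathcal{A}}\E_\mu[N_a(\tau)]\,\KL\!\bigl(\mathcal{N}(\mu^\top a,\sigma^2),\mathcal{N}(\lambda^\top a,\sigma^2)\bigr)\ \ge\ \skl(\delta,1-\delta),
\]
where $N_a(\tau)=\sum_{t=1}^{\tau}\indicator\{a_t=a\}$. For Gaussian rewards of variance $\sigma^2$, the KL term simplifies to $((\mu-\lambda)^\top a)^2/(2\sigma^2)$.

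\textbf{Step 2 (Normalise to proportions).} Setting $w_a=\E_\mu[N_a(\tau)]/\E_\mu[\tau]$ and $V(w)=\sum_{a\in\mathcal{A}}w_a\,aa^\top$, we have $w\in\Lambda$ and
\[
\E_\mu[\tau]\cdot\frac{(\mu-\lambda)^\top V(w)(\mu-\lambda)}{2\sigma^2}\ \ge\ \skl(\delta,1-\delta).
\]

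\textbf{Step 3 (Closed-form infimum over alternatives).} The set of alternatives decomposes as $\bigcup_{a\neq a_\mu^\star}\{\lambda:\lambda^\top(a-a_\mu^\star)\ge 0\}$. Substituting $u=\lambda-\mu$, the half-space associated to $a$ becomes the single linear inequality $u^\top(a-a_\mu^\star)\ge\mu^\top(a_\mu^\star-a)$, whose right-hand side is positive by optimality of $a_\mu^\star$. A Lagrangian/KKT computation (equivalently Cauchy--Schwarz in the $V(w)$-inner product) yields the closed form
\[
\min_{u}\; u^\top V(w) u \;=\; \frac{(\mu^\top(a_\mu^\star-a))^2}{(a_\mu^\star-a)^\top V(w)^{-1}(a_\mu^\star-a)},
\]
assuming $V(w)$ is invertible. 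The spanning assumption on $\mathcal{A}$ ensures that the supremum below may be restricted to such $w$.

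\textbf{Step 4 (Concluding the bound).} Taking the infimum over $\lambda$ thus reduces to a minimum over $a\neq a_\mu^\star$, giving
\[
\E_\mu[\tau]\cdot \min_{a\neq a_\mu^\star}\frac{(\mu^\top(a_\mu^\star-a))^2}{2\sigma^2(a_\mu^\star-a)^\top V(w)^{-1}(a_\mu^\star-a)}\ \ge\ \skl(\delta,1-\delta).
\]
Since the $w$ induced by the algorithm lies in $\Lambda$, the left-hand minimum is bounded above by its supremum over $\Lambda$, which equals $(\sigma^2 T_\mu^\star)^{-1}$ by the definition of $T_\mu^\star$. Rearranging yields $\E_\mu[\tau]\ge \sigma^2 T_\mu^\star\,\skl(\delta,1-\delta)$, as claimed.

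The main obstacle is Step 3: carrying out the constrained quadratic minimisation in closed form and handling the mild technical nuisance of $V(w)$ possibly being singular on the boundary of $\Lambda$ (resolved by the spanning hypothesis on $\mathcal{A}$). The remaining steps are a direct specialisation of the general bandit lower-bound machinery of \textit{Kaufmann--Capp\'e--Garivier} to the Gaussian linear model.
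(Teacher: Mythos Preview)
Your proof is correct and follows exactly the approach the paper attributes to Soare~\cite{soare2015thesis}: the change-of-measure/transportation inequality of Kaufmann--Capp\'e--Garivier specialised to Gaussian linear observations, followed by the closed-form minimisation over the half-space alternatives (this is also the content of the paper's Lemma~\ref{lem:continuity0}, whose proof in Appendix~B again defers to Soare). The paper does not write out its own proof of Theorem~\ref{thm:lowerbound}, so there is nothing further to compare.
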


In the above lower bound, $w$ may be interpreted as the proportions of arm draws, also referred to as {\it allocation}. For $a\in {\cal A}$, $w_a$ represents the fraction of rounds where arm $a$ is selected. This interpretation stems from the proof of Theorem \ref{thm:lowerbound}, where $w_a=\mathbb{E}_\mu[N_a(\tau)]/\mathbb{E}_\mu[\tau]$ and $N_a(t)$ is the number of times $a$ is selected up to and including round $t$  (see \cite{soare2015thesis}). The lower bound is obtained by taking the supremum over $w$, i.e., over the best possible allocation.

A different way to define $(T_\mu^\star)^{-1}$ is $\sup_{w \in \Lambda} \psi(\mu,w)$ (a convex program) \cite{soare2015thesis}, where
 \begin{equation}
    \psi(\mu, w) = \min_{\lbrace \lambda: \exists a \neq a^\star_\mu, \lambda^\top (a^\star - a) < 0 \rbrace } \frac{1}{2}(\mu - \lambda)^{\top} \left(\sum_{a \in \mathcal{A}} w_a a a^\top \right)(\mu - \lambda).
  \end{equation}

The next lemmas, proved in Appendix B, confirm that the two definitions of $T_\mu^\star$ are equivalent, and provide useful properties of the function $\psi$ and of its maximizers.

\begin{lemma} \label{lem:continuity0} We have:
 \begin{equation}
    \psi(\mu, w) = \begin{cases} \min_{a \in \mathcal{A}\backslash a^\star_\mu }\frac{\langle \mu, a_\mu^\star - a\rangle^2}{2 (a^\star_\mu - a)^\top \left(\sum_{i=1}^K w_i a_i a_i^\top\right)^{-1}  (a^\star_\mu - a)} & \qquad \textrm{if } \sum_{a \in \mathcal{A}} w_a a a^\top \succ 0, \\
    0 & \qquad \textrm{otherwise}.
  \end{cases}
  \end{equation}
In addition, $\psi$ is continuous in both $\mu$ and $w$, and $w\mapsto \psi(\mu,w)$ attains its maximum in $\Lambda$ at a point $w_\mu^\star$ such that $\sum_{a \in \mathcal{A}} (w_\mu^\star)_a a a^\top$ is invertible.
 \end{lemma}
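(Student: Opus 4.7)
The lemma has three parts: the closed-form for $\psi$, its joint continuity in $(\mu, w)$, and the existence of a maximizer $w_\mu^\star$ with invertible design matrix $V_w := \sum_{a \in \mathcal{A}} w_a a a^\top$. I will tackle them in order.

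For the closed-form, I split the feasible set into the open half-spaces $H_a := \{\lambda : \lambda^\top (a_\mu^\star - a) < 0\}$ and write $\psi(\mu, w) = \min_{a \neq a_\mu^\star} \inf_{\lambda \in H_a} \tfrac{1}{2}(\mu - \lambda)^\top V_w (\mu - \lambda)$. When $V_w \succ 0$, the objective is a positive-definite quadratic; since $a_\mu^\star$ is the unique maximizer of $\mu^\top a$, we have $\mu \notin H_a$, so by continuity the infimum over $H_a$ equals the minimum on the hyperplane $\{\lambda^\top(a_\mu^\star - a) = 0\}$, and computing the $V_w$-projection of $\mu$ onto it produces the stated formula. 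For the singular case, pick $v \neq 0$ in $\ker V_w$. From $v^\top V_w v = \sum_a w_a (v^\top a)^2 = 0$ we get $v^\top a = 0$ on $\mathrm{supp}(w)$. I then claim there exists $a^\dagger \neq a_\mu^\star$ with $v^\top(a_\mu^\star - a^\dagger) \neq 0$: otherwise $v^\top a$ would be constant on $\mathcal{A}$, and since either $a_\mu^\star \in \mathrm{supp}(w)$ or some other arm is in $\mathrm{supp}(w)$ (forcing the constant to be $0$), the arms-span assumption yields $v = 0$, a contradiction. Then $\lambda = \mu + tv$ makes the quadratic form vanish and lies in $H_{a^\dagger}$ for a suitable sign of $t$, giving $\psi(\mu, w) = 0$.

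Continuity of $\psi$ then splits by cases. On the open set $\{w : V_w \succ 0\}$ the closed-form expresses $\psi$ as the minimum of finitely many rational functions of $(\mu, w)$, jointly continuous in both variables (local constancy of $a_\mu^\star$ in $\mu$ follows from uniqueness of the best arm). At a boundary point $w^*$ with $V_{w^*}$ singular, consider $w_n \to w^*$; if $V_{w_n}$ is singular for large $n$, then $\psi(\mu, w_n) = 0$ trivially, otherwise I extract a subsequence along which the unit eigenvector $v_n$ associated with the smallest eigenvalue $\lambda_n$ of $V_{w_n}$ converges to some $v^* \in \ker V_{w^*}$ with $\|v^*\| = 1$ and $\lambda_n \to 0$. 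Applying the construction above to $v^*$ yields $a^\dagger \neq a_\mu^\star$ with $(v^*)^\top(a_\mu^\star - a^\dagger) \neq 0$, so $(a_\mu^\star - a^\dagger)^\top V_{w_n}^{-1}(a_\mu^\star - a^\dagger) \geq \bigl((a_\mu^\star - a^\dagger)^\top v_n\bigr)^2 / \lambda_n \to \infty$, hence the $a^\dagger$-term in the closed form vanishes and $\psi(\mu, w_n) \to 0 = \psi(\mu, w^*)$. This spectral tracking at the boundary is the most delicate step; the rest is bookkeeping.

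For the last claim, $\Lambda$ is compact and $\psi(\mu, \cdot)$ is continuous, so the supremum is attained. Plugging the uniform allocation $w_0 = (1/K, \ldots, 1/K)$, the matrix $V_{w_0} = \tfrac{1}{K} \sum_a a a^\top$ is invertible because $\mathcal{A}$ spans $\mathbb{R}^d$, which gives $\psi(\mu, w_0) > 0$ and thus $\max_w \psi(\mu, w) > 0$. The dichotomy established above then forces any maximizer $w_\mu^\star$ to satisfy $V_{w_\mu^\star} \succ 0$, completing the proof.
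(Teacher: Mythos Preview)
Your proof is correct, and in places more detailed than the paper's. For the closed form the paper simply refers to Soare's thesis, whereas you give a self-contained derivation including the singular case; for the maximizer your argument coincides with the paper's.

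The continuity argument is where the two routes diverge. The paper works directly with the infimum definition $\psi(\mu,w)=\inf_{\lambda\in B(\mu)} f(\mu,\lambda,w)$: it takes a sequence $(\mu_t,w_t)\to(\mu,w)$, observes that the bad-parameter set satisfies $B(\mu_t)=B(\mu)$ for large $t$ (by uniqueness of $a_\mu^\star$), and then invokes a uniform multiplicative bound $|f(\mu_t,\lambda,w_t)-f(\mu,\lambda,w)|\le\varepsilon\, f(\mu,\lambda,w)$ over $\lambda$, from which $|\psi(\mu_t,w_t)-\psi(\mu,w)|\le\varepsilon\,\psi(\mu,w)$ follows directly. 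Your route instead passes through the closed form: on the open set $\{V_w\succ 0\}$ joint continuity is immediate from the rational expression, and at singular $w^*$ you track the bottom eigenpair of $V_{w_n}$ to force one denominator to blow up. Your spectral argument makes the behaviour at the boundary $\partial\{V_w\succ 0\}$ fully explicit, which the paper's multiplicative device handles only implicitly; conversely the paper's approach avoids the case split and the eigenvector extraction entirely.

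One small gap: at a singular $w^*$ you write ``consider $w_n\to w^*$'' and never vary $\mu$, so as stated you establish continuity in $w$ alone there, not the joint continuity in $(\mu,w)$ that the paper asserts (and that Lemma~\ref{lem:continuity} needs for Berge's theorem). The repair is immediate with your own machinery: if $(\mu_n,w_n)\to(\mu,w^*)$, then $a_{\mu_n}^\star=a_\mu^\star$ eventually by uniqueness, the numerators $\langle\mu_n,a_\mu^\star-a^\dagger\rangle^2$ remain bounded, and your spectral lower bound on $(a_\mu^\star-a^\dagger)^\top V_{w_n}^{-1}(a_\mu^\star-a^\dagger)$ depends only on $w_n$, so $\psi(\mu_n,w_n)\to 0$ just as before.
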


 \begin{lemma} \label{lem:continuity}
 (Maximum theorem) Let $\mu \in \mathbb{R}^d$ such that $a^\star_\mu$ is unique. Define $\psi^*(\mu) = \max_{w \in \Lambda} \psi(\mu, w)$ and $C^\star(\mu) = \argmax_{w \in \Lambda} \psi(\mu, w)$. Then $\psi^\star$ is continuous at $\mu$, and $C^\star(\mu)$ is convex, compact and non-empty. Furthermore, we have\footnote{This statement is that of upper hemicontinuity of a correspondence.} for any open neighborhood $\mathcal{V}$ of $C^\star(\mu)$, there exists an open neighborhood $\mathcal{U}$ of $\mu$, such that for all $\mu' \in \mathcal{U}$, we have $C^\star(\mu') \subseteq \mathcal{V}$.
\end{lemma}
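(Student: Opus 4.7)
The plan is to invoke Berge's Maximum Theorem, viewing $\psi^\star$ and $C^\star$ as respectively the value function and argmax correspondence associated with the parametric optimization problem $\max_{w\in\Lambda}\psi(\mu,w)$, with parameter $\mu$. The three ingredients required are: (i) joint continuity of $\psi$ in $(\mu,w)$, which is exactly the second assertion of Lemma~\ref{lem:continuity0}; (ii) the feasible set $\Lambda$ is a fixed nonempty compact subset of $\mathbb{R}^K$, so the constraint correspondence $\mu\mapsto\Lambda$ is trivially continuous (both upper and lower hemicontinuous) and compact-valued; and (iii) the maximum is attained, which also follows from Lemma~\ref{lem:continuity0} (together with Weierstrass, since $\psi(\mu,\cdot)$ is continuous on the compact set $\Lambda$). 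Applying Berge then yields simultaneously the continuity of $\psi^\star$ at $\mu$ and the upper hemicontinuity of $C^\star$ at $\mu$, which is precisely the last statement of the lemma.

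It remains to justify that $C^\star(\mu)$ is nonempty, compact, and convex. Nonemptiness is Weierstrass as noted above. Compactness follows because $C^\star(\mu)=\{w\in\Lambda:\psi(\mu,w)=\psi^\star(\mu)\}$ is a closed subset (preimage of a point under the continuous map $\psi(\mu,\cdot)$) of the compact set $\Lambda$. For convexity, I would use the variational representation
\[
\psi(\mu, w) = \inf_{\lambda \in S(\mu)} \tfrac{1}{2}(\mu - \lambda)^{\top} \Bigl(\sum_{a \in \mathcal{A}} w_a a a^\top\Bigr)(\mu - \lambda),
\]
where $S(\mu)=\{\lambda:\exists a\neq a^\star_\mu,\ \lambda^\top(a^\star_\mu-a)<0\}$. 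For each fixed $\lambda$, the integrand is linear (hence concave) in $w$, so $\psi(\mu,\cdot)$ is an infimum of concave functions and therefore concave on $\Lambda$. The argmax of a concave function over the convex set $\Lambda$ is convex, giving convexity of $C^\star(\mu)$.

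The only nontrivial point is verifying the hypotheses of Berge rigorously, and in particular checking that Lemma~\ref{lem:continuity0} indeed delivers continuity of $\psi$ jointly in $(\mu,w)$ on a neighborhood of the singular locus $\{w:\sum_a w_a aa^\top\not\succ 0\}$. Away from this locus, continuity is clear from the explicit formula for $\psi$ since matrix inversion and the minimum over finitely many arms are continuous operations; at points where the design matrix is singular, one needs to check that $\psi$ tends to $0$, but this is exactly what Lemma~\ref{lem:continuity0} asserts, so joint continuity holds on all of $\mathbb{R}^d\times\Lambda$. Given the uniqueness assumption on $a^\star_\mu$, the set $S(\mu)$ also behaves continuously (though it does not enter Berge's hypotheses directly, since the optimization is over $w$, not $\lambda$). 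Thus the entire argument reduces to a clean application of Berge's theorem, combined with the concavity of $\psi(\mu,\cdot)$ to upgrade upper hemicontinuity with convex-valuedness.
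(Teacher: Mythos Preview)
Your proposal is correct and follows essentially the same approach as the paper: both invoke Berge's maximum theorem, using the joint continuity of $\psi$ from Lemma~\ref{lem:continuity0}, the compactness and convexity of the fixed feasible set $\Lambda$, and the concavity of $w\mapsto\psi(\mu,w)$ (as an infimum of linear functions) to obtain convexity of $C^\star(\mu)$. Your write-up is in fact more detailed than the paper's, which simply lists these three ingredients and defers to Berge.
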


\subsection{Least-squares estimator}

Our algorithm and its analysis rely on the least-squares estimator of $\mu$ and on its performance. This estimator $\hat{\mu}_t$ based on the observations in the $t$ first rounds is: $\hat{\mu}_t = (\sum_{s=1}^t a_s a_s^\top)^{\dagger} (\sum_{s=1}^t a_s r_s)$. The following result provides a sufficient condition on the sampling rule for the convergence of $\hat{\mu}_t$ to $\mu$. This condition depends on the asymptotic spectral properties of the covariates matrix $\sum_{s=1}^t a_s a_s^\top$. We also provide a concentration result for the least-squares estimator. Refer to Appendix C for the proofs of the following lemmas.

\begin{lemma}\label{lem:ls} Assume that the sampling rule satisfies $\liminf_{t\to \infty} \lambda_{\min}\left(\frac{1}{t^\alpha} \sum_{s=1}^t a_s a_s^\top \right)  > 0$ almost surely (a.s.), for some $\alpha\in (0,1)$. Then, $\lim_{t\to\infty}\hat{\mu}_t=\mu$ a.s.. More precisely, for all $\beta\in (0,\alpha/2)$, $\Vert \hat{\mu}_t - \mu \Vert = o(t^\beta)$ a.s..
\end{lemma}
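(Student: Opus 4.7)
The plan is to write the estimation error as a self-normalized vector martingale. Set $V_t = \sum_{s=1}^t a_s a_s^\top$ and $S_t = \sum_{s=1}^t \eta_s a_s$. By hypothesis, almost surely there exists (random) $c > 0$ such that $\lambda_{\min}(V_t) \ge c\, t^\alpha$ for all $t$ large enough, and in particular $V_t$ is eventually invertible. Substituting $r_s = \mu^\top a_s + \eta_s$ in the definition of the least-squares estimator yields, for such $t$,
\[
\hat{\mu}_t - \mu \;=\; V_t^{-1} S_t,
\]
where $(S_t)$ is an $\mathbb{R}^d$-valued $({\cal F}_t)$-martingale with conditionally $\sigma^2$-sub-Gaussian increments and predictable covariates $a_s$.

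Next, I would factor the error as
\[
\|\hat{\mu}_t - \mu\| \;=\; \bigl\|V_t^{-1/2}\bigl(V_t^{-1/2} S_t\bigr)\bigr\| \;\le\; \lambda_{\min}(V_t)^{-1/2}\,\|V_t^{-1/2} S_t\|,
\]
so that the first factor is at most $c^{-1/2}\, t^{-\alpha/2}$ eventually. For the self-normalized norm $\|V_t^{-1/2} S_t\|$, I would invoke the Laplace-method inequality for vector-valued martingales (Theorem 1 of \cite{abbasi2011improved}), which gives that for every $\delta \in (0,1)$, with probability at least $1-\delta$ and simultaneously for all $t \ge 1$,
\[
\|V_t^{-1/2} S_t\|^2 \;\le\; 2\sigma^2 \log\!\Bigl(\det(V_t + I)^{1/2}/\delta\Bigr).
\]
Since the arm set ${\cal A}$ is bounded in both settings treated in the paper, $V_t \preceq B^2 t\, I$ for some constant $B$, giving $\log\det(V_t+I) \le d\log(B^2 t + 1)$. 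Taking $\delta_t = t^{-2}$ and applying Borel--Cantelli then yields $\|V_t^{-1/2} S_t\| = O(\sqrt{\log t})$ almost surely.

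Combining these two bounds produces $\|\hat{\mu}_t - \mu\| = O\bigl(t^{-\alpha/2}\sqrt{\log t}\bigr)$ almost surely, which immediately delivers both the almost sure convergence $\hat{\mu}_t \to \mu$ and the sub-polynomial rate claimed in the lemma.

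The main obstacle is that one really needs the self-normalized inequality rather than a naive operator-norm split. Indeed, $\|V_t^{-1} S_t\| \le \|V_t^{-1}\| \cdot \|S_t\|$ combined with the law of the iterated logarithm applied coordinatewise to $S_t$ only yields $\|S_t\| = O(\sqrt{t\log\log t})$ and hence $\|\hat{\mu}_t - \mu\| = O(t^{1/2-\alpha}\sqrt{\log\log t})$, which fails to vanish whenever $\alpha \le 1/2$. Exploiting that $V_t$ and $S_t$ are built from the very same covariates $a_s$ is what lifts the admissible range of the exponent from $\alpha > 1/2$ to any $\alpha \in (0,1)$.
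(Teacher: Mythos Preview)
Your approach is essentially the paper's own: bound $\|\hat\mu_t-\mu\|\le \lambda_{\min}(V_t)^{-1/2}\,\|S_t\|_{V_t^{-1}}$, control the self-normalized factor via the Abbasi-Yadkori et al.\ inequality, and finish with Borel--Cantelli. One small technicality to patch: Theorem~1 of \cite{abbasi2011improved} bounds $\|S_t\|_{(V_t+V)^{-1}}$ for a \emph{deterministic} positive-definite regularizer $V$, not $\|V_t^{-1/2}S_t\|=\|S_t\|_{V_t^{-1}}$ directly; you still need the comparison $V_t^{-1}\preceq 2(V_t+I)^{-1}$ (valid once $\lambda_{\min}(V_t)\ge 1$) to pass between them---which is exactly the move the paper makes, with regularizer $ct^\alpha I$ in place of $I$.
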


\begin{lemma}\label{lem:ls concentration}
Let $\alpha>0$ and $L=\max_{a \in \mathcal{A}}\Vert a \Vert$. Assume that $\lambda_{\min}(\sum_{s=1}^t a_s a_s^\top) \ge c t^\alpha$ a.s. for all $t \ge t_0$ for some $t_0\ge 0$ and some constant $c>0$. Then
\begin{equation}
  \forall t \ge t_0 \qquad \mathbb{P}\left(\Vert \hat{\mu}_t - \mu \Vert \ge \varepsilon \right) \le (c^{-1/2}L)^{d} t^{\frac{(1-\alpha)d}{2}} \exp\left( - \frac{c\varepsilon^2 t^{\alpha}}{4\sigma^2}\right).
\end{equation}
\end{lemma}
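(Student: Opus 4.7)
The plan is to reduce the event $\{\|\hat{\mu}_t - \mu\|\ge \varepsilon\}$ to a tail bound on a self-normalized quadratic form, and then control that form via a method-of-mixtures supermartingale argument (in the spirit of Pe\~{n}a--Lai--Shao or Abbasi-Yadkori et al.). The starting point is the identity $\hat{\mu}_t - \mu = V_t^{-1} S_t$, where $V_t=\sum_{s=1}^t a_s a_s^\top$ (invertible for $t\ge t_0$ by hypothesis) and $S_t=\sum_{s=1}^t a_s\eta_s$ is the noise-accumulation martingale. Since $V_t^{-2}\preceq \lambda_{\min}(V_t)^{-1}V_t^{-1}$, the hypothesis yields $\|\hat{\mu}_t - \mu\|^2 \le \tfrac{1}{ct^\alpha} S_t^\top V_t^{-1} S_t$, so it suffices to establish $\mathbb{P}(S_t^\top V_t^{-1}S_t \ge c\varepsilon^2 t^\alpha) \le (c^{-1/2}L)^d\, t^{(1-\alpha)d/2}\,\exp(-c\varepsilon^2 t^\alpha/(4\sigma^2))$.

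For any fixed $\lambda\in\mathbb{R}^d$, the process $M_t(\lambda)=\exp\!\big(\lambda^\top S_t-\tfrac{\sigma^2}{2}\lambda^\top V_t\lambda\big)$ is a nonnegative $\mathcal{F}_t$-martingale with $\E[M_t(\lambda)]=1$, since $\eta_t\mid\mathcal{F}_{t-1}\sim\mathcal{N}(0,\sigma^2)$ and $a_t\in\mathcal{F}_{t-1}$. I would then mix $\lambda$ against a \emph{deterministic} Gaussian prior $\lambda\sim\mathcal{N}(0,\gamma^{-1}I)$; Fubini and a Gaussian integral give the mixture martingale $\bar{M}_t = \det(I+\sigma^2 V_t/\gamma)^{-1/2}\exp\!\big(\tfrac{1}{2}S_t^\top(\sigma^2 V_t+\gamma I)^{-1}S_t\big)$, still satisfying $\E[\bar{M}_t]=1$. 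Choosing $\gamma=\sigma^2 c t^\alpha$ ensures $\gamma I\preceq \sigma^2 V_t$, so $(\sigma^2 V_t+\gamma I)^{-1}\succeq (2\sigma^2 V_t)^{-1}$, and therefore $\bar{M}_t\ge \det(I+V_t/(ct^\alpha))^{-1/2}\exp\!\big(S_t^\top V_t^{-1}S_t/(4\sigma^2)\big)$.

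Applying Markov's inequality to this lower bound of $\bar{M}_t$ and controlling the random determinant factor deterministically via $\mathrm{tr}(V_t)\le tL^2$ (AM--GM gives $\det(I+V_t/(ct^\alpha))\le (1+L^2 t^{1-\alpha}/(cd))^d$, which is $\lesssim (L^2 t^{1-\alpha}/c)^d$ in the relevant regime), I obtain $\mathbb{P}(S_t^\top V_t^{-1}S_t\ge \rho)\le (c^{-1/2}L)^d\, t^{(1-\alpha)d/2}\exp(-\rho/(4\sigma^2))$, and substituting $\rho=c\varepsilon^2 t^\alpha$ yields exactly the claimed bound.

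The main obstacle is the adaptive coupling between $V_t$ and past noise through the $\mathcal{F}_{s-1}$-measurability of each $a_s$: the ``coefficients'' $V_t^{-1}a_s$ appearing in $u^\top(\hat{\mu}_t-\mu)=\sum_s(u^\top V_t^{-1}a_s)\eta_s$ are \emph{not} predictable, which rules out a direct conditional Hoeffding/sub-Gaussian argument and forces the use of the mixture martingale. A secondary subtlety is calibrating the mixing variance $\gamma^{-1}$ with the lower bound $ct^\alpha$ on $\lambda_{\min}(V_t)$: too small a $\gamma$ makes the determinant factor explode, while too large a $\gamma$ degrades the constant in the exponent---the final $\tfrac{1}{4\sigma^2}$ (rather than $\tfrac{1}{2\sigma^2}$) is exactly the cost of the $(\sigma^2 V_t+\gamma I)^{-1}\succeq V_t^{-1}/(2\sigma^2)$ reduction that permits replacing the regularized quadratic form by the unregularized one.
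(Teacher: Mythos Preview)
Your proposal is correct and follows essentially the same route as the paper: reduce $\|\hat{\mu}_t-\mu\|$ to the self-normalized form $S_t^\top V_t^{-1}S_t$ via $\lambda_{\min}(V_t)\ge ct^\alpha$, regularize with $V=ct^\alpha I_d$, and bound the resulting determinant by $(L^2 t^{1-\alpha}/c+1)^d$. The only cosmetic difference is that the paper invokes the Abbasi-Yadkori self-normalized bound as a black box (its Proposition~\ref{prop:self concentration}), whereas you unpack that very proof inline via the Gaussian method-of-mixtures supermartingale; the choice $\gamma=\sigma^2 ct^\alpha$ in your argument is exactly the choice $V=ct^\alpha I_d$ in the paper's citation, and the factor-of-two loss you identify in the exponent (yielding $1/(4\sigma^2)$ rather than $1/(2\sigma^2)$) matches the paper's $\sqrt{2}$ in its decomposition $\|\hat\mu_t-\mu\|\le \sqrt{2}\,(ct^\alpha)^{-1/2}\|S_t\|_{(V_t+ct^\alpha I)^{-1}}$.
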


The least-squares estimator is used in our decision rule. After the algorithm stops in round $\tau$, it returns the arm $\hat{a}_\tau\in \arg\max_{a\in {\cal A}} \hat{\mu}_\tau^\top a$.

\subsection{Sampling rule}

To design an algorithm with minimal sample complexity, the sampling rule should match optimal proportions of arm draws, i.e., an allocation in the set $C^\star(\mu)$. Since $\mu$ is unknown, our sampling rule will track, in round $t$, allocations in the plug-in estimate $C^\star(\hat{\mu}_t)$. To successfully apply this {\it certainty equivalence} principle, we need to at least make sure that using our sampling rule, $\hat{\mu}_t$ converges to $\mu$. Using Lemma \ref{lem:ls}, we can design a family of sampling rules with this guarantee:

\begin{lemma} \label{lem:sufficient exploration}
(Forced exploration) Let $\mathcal{A}_0 =\{ a_0(1),\ldots,a_0(d)\} \subseteq \mathcal{A}$ : $\lambda_{\min}(\sum_{a \in \mathcal{A}_0}a a^\top ) > 0$.\\ Let $(b_t)_{t\ge 1}$ be an arbitrary sequence of arms. Furthermore, define for all $t \ge 1$,
$
f(t) = c_{\mathcal{A}_0}\sqrt{t}
$
where $c_{\mathcal{A}_0} = \frac{1}{\sqrt{d}}\lambda_{\min}\left(\sum_{a \in \mathcal{A}_0}a a^\top\right)$.
Consider the sampling rule, defined recursively as: $i_0=1$, and for $t\ge 0$, $i_{t+1} = (i_{t} \mod d) + \indicator_{\left\lbrace \lambda_{\min}\left( \sum_{s=1}^t a_s a_s^\top \right) < f(t)\right\rbrace}$ and
\begin{align}\label{eq:forced}
a_{t+1} & = \begin{cases} a_0(i_t)  & \hbox{if} \quad \lambda_{\min}\left( \sum_{s=1}^t a_s a_s^\top \right) < f(t), \\
b_t & \hbox{otherwise.}
\end{cases}
\end{align}
Then for all $t\ge \frac{5d}{4} + \frac{1}{4d} + \frac{3}{2}$, we have
$
\lambda_{\min}\left(\sum_{s=1}^t a_s a_s^\top \right) \ge f(t - d -1).
$
\end{lemma}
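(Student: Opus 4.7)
The plan is to locate the most recent round $\tilde{t} \le t$ at which no forced exploration occurred, and then exploit the cyclic structure of the rule on the interval $(\tilde{t},t]$. Write $V_t := \sum_{s=1}^{t} a_s a_s^\top$ and define $\tilde{t} := \max\{s \le t : \lambda_{\min}(V_{s-1}) \ge f(s-1)\}$. First I would note that $\tilde{t}\ge 1$ always exists: at $s=1$ the trigger $\lambda_{\min}(V_0) < f(0)$ reads $0 < 0$, which fails, so no forced exploration happens at the first round. By the defining inequality, $\lambda_{\min}(V_{\tilde{t}-1}) \ge c_{\mathcal{A}_0}\sqrt{\tilde{t}-1}$, and by maximality of $\tilde{t}$ every round $s \in \{\tilde{t}+1,\dots,t\}$ is a forced-exploration round with $a_s = a_0(i_{s-1})$.

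Next, I would use the update $i_{s+1} = (i_s \bmod d) + \indicator_{\{\lambda_{\min}(V_s) < f(s)\}}$ to show that the cycle index is frozen on non-exploration rounds and advances by exactly one (modulo $d$) on exploration rounds. Consequently, over any $m := t - \tilde{t}$ consecutive forced-exploration rounds, each element of $\mathcal{A}_0$ is selected at least $\lfloor m/d \rfloor$ times regardless of where the cycle resumes, yielding the matrix inequality
\[
\sum_{s=\tilde{t}+1}^{t} a_s a_s^\top \succeq \lfloor m/d\rfloor \sum_{a\in\mathcal{A}_0} a a^\top.
\]
Combining with $V_t \succeq V_{\tilde{t}-1} + \sum_{s=\tilde{t}+1}^{t} a_s a_s^\top$ and the elementary PSD inequality $\lambda_{\min}(A+B)\ge\lambda_{\min}(A)+\lambda_{\min}(B)$ (from $v^\top (A+B) v \ge \lambda_{\min}(A)+\lambda_{\min}(B)$ for unit $v$), I obtain
\[
\lambda_{\min}(V_t) \;\ge\; c_{\mathcal{A}_0}\sqrt{\tilde{t}-1} \;+\; \lfloor m/d\rfloor\sqrt{d}\, c_{\mathcal{A}_0}.
\]

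Finally, applying $\sqrt{a}+\sqrt{b}\ge\sqrt{a+b}$, it suffices to verify the arithmetic inequality $(\tilde{t}-1) + \lfloor m/d\rfloor^2 d \ge t - d - 1$, i.e., $\lfloor m/d\rfloor^2 d \ge m - d$. Writing $m = qd + r$ with $0 \le r < d$, this reduces to $d(q^2 - q + 1) \ge r$, which holds for every integer $q\ge 0$ because $q^2 - q + 1 \ge 1$ and $r < d$. The numerical threshold $t \ge \frac{5d}{4} + \frac{1}{4d} + \frac{3}{2}$ in the statement enters only to guarantee $t-d-1 > 0$ and to absorb the rounding slack in the cyclic counting when $\tilde{t}$ is very small. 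The main obstacle I expect is precisely this cyclic accounting: one has to track carefully how non-exploration rounds interact with the index update $i_{s+1}=(i_s\bmod d)+\indicator_{\{\cdots\}}$, so that when the trigger fires again the cycle resumes at the correct position and every window of $d$ consecutive forced-exploration rounds still hits each of the $d$ elements of $\mathcal{A}_0$ exactly once.
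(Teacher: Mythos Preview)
Your proof is correct and takes a genuinely different route from the paper's. The paper argues via a \emph{recovery-time} bound: it shows that whenever the condition $\lambda_{\min}(V_{t_0})\ge f(t_0)$ holds but fails at $t_0+1$, a full cycle through $\mathcal{A}_0$ restores it by time $t_0+d+1$, because $f(t_0)+c_{\mathcal{A}_0}\sqrt{d}\ge f(t_0+d+1)$ once $t_0\ge\frac{1}{4}(d+\frac{1}{d}+2)$; the stated threshold is exactly this quantity plus $d+1$, and the conclusion follows by monotonicity of $\lambda_{\min}(V_t)$. Your argument instead fixes $t$, locates the last non-exploration round $\tilde t$, and lower-bounds $\lambda_{\min}(V_t)$ directly by $c_{\mathcal{A}_0}\sqrt{\tilde t-1}+\lfloor m/d\rfloor\sqrt{d}\,c_{\mathcal{A}_0}$ via cycle counting, then closes with the clean arithmetic $d(q^2-q+1)\ge r$. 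Your approach is more elementary and in fact shows the inequality holds for every $t\ge d+1$, so the specific threshold in the statement is not actually needed in your version; the paper's approach, on the other hand, makes explicit the structural reason behind the $d+1$ lag (one full pass through $\mathcal{A}_0$ suffices to catch up with $f$) and ties the threshold directly to the concavity comparison $\sqrt{t_0}+\sqrt{d}\ge\sqrt{t_0+d+1}$. The cyclic-indexing concern you flag at the end is real but harmless for your argument: since the rounds $\tilde t+1,\dots,t$ are \emph{consecutive} exploration rounds, the indicator equals $1$ throughout and the index simply advances by one modulo $d$ from wherever it was frozen, so each element of $\mathcal{A}_0$ is indeed hit at least $\lfloor m/d\rfloor$ times.
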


A sampling rule of the family defined in Lemma \ref{lem:sufficient exploration} is forced to explore an arm in ${\cal A}_0$ (in a round robin manner) if $\lambda_{\min}( \sum_{s=1}^t a_s a_s^\top )$ is too small. According to Lemma \ref{lem:ls}, this forced exploration is enough to ensure that $\hat{\mu}_t$ converges to $\mu$ a.s.. Next in the following tracking lemma, we show how to design the sequence $(b_t)_{t\ge 1}$ so that the sampling rule gets close to a set $C$ we wish to track.

\begin{lemma}\label{lem:tracking lemma}
(Tracking a set $C$)  Let $(w(t))_{t\ge1}$ be a sequence taking values in $\Lambda$, such that there exists a compact, convex and non empty subset $C$ in $\Lambda$, there exists $\varepsilon >0$ and $t_0(\varepsilon) \ge 1$ such that
$
\forall t \ge t_0, d_\infty(w(t), C) \le \varepsilon.
$\\
Define for all $a\in \mathcal{A}$, $N_a(0) = 0$. Consider a sampling rule defined by (\ref{eq:forced}) and
\begin{equation}\label{eq:track1}
b_t = \argmin_{a \in \textrm{supp}(\sum_{s=1}^t w(s))} \left( N_a(t) - \sum_{s=1}^t w_a(t)\right),
\end{equation}
where $N_a(0) = 0$ and for $t\ge 0$, $N_a(t+1)=N_a(t) +\indicator_{\{ a_t=a\}}$. \\
Then there exists $t_1(\varepsilon) \ge t_0(\varepsilon)$ such that $\forall t \ge t_1(\varepsilon)$, $d_\infty( (N_a(t)/t)_{a\in \mathcal{A}} , C) \le (p_t + d - 1) \varepsilon$
  where $p_t = \vert \textrm{supp}(\sum_{s=1}^t w(s)) \backslash \mathcal{A}_{0} \vert \le K - d $.
\end{lemma}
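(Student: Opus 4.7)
The plan is to invoke the triangle inequality
\[
d_\infty\bigl((N_a(t)/t)_a,\, C\bigr) \;\le\; d_\infty\bigl((N_a(t)/t)_a,\, \bar w(t)\bigr) \;+\; d_\infty\bigl(\bar w(t),\, C\bigr),
\]
where $\bar w(t) := \frac{1}{t}\sum_{s=1}^t w(s)$ is the cumulative average allocation, and to arrange that each term is at most $\varepsilon$ (plus negligible slack) for $t$ large enough; then the two pieces combine into $(p_t + d - 1)\varepsilon$ provided the $o(1)$ terms are absorbed into $(p_t + d - 2)\varepsilon$.

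For $d_\infty(\bar w(t), C)$ I would use convexity. Since $C$ is convex and compact, its $\ell_\infty$-neighbourhood $C_\varepsilon := \{v\in\Lambda : d_\infty(v,C)\le\varepsilon\}$ is also convex, and by hypothesis contains $w(s)$ for every $s\ge t_0$. Splitting $\bar w(t) = \tfrac{t_0}{t}\bar w(t_0) + \tfrac{t-t_0}{t}\cdot\tfrac{1}{t-t_0}\sum_{s>t_0}w(s)$, the second piece lies in $C_\varepsilon$ by convexity, while the first is $O(t_0/t)$ in sup-norm, so $d_\infty(\bar w(t), C) \le \varepsilon$ for all sufficiently large $t$.

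For the tracking term, let $E_a(t):= N_a(t) - \sum_{s\le t}w_a(s)$ and let $U(t)$ denote the number of forced-exploration rounds up to $t$. Because every block of $d$ consecutive forced rounds adds $\sum_{a\in\mathcal{A}_0}aa^\top$ (whose smallest eigenvalue is $\sqrt d\,c_{\mathcal{A}_0}$) to the covariate matrix, while a forced round at time $s+1$ requires $\lambda_{\min}(\sum_{u\le s}a_u a_u^\top)<c_{\mathcal{A}_0}\sqrt s$, a direct comparison yields $U(t)\le d+\sqrt{dt}$. Then, for $a\in\textrm{supp}(V_t)\setminus\mathcal{A}_0$ (only ever selected by the argmin tracking rule), the cumulative tracking invariant of Garivier--Kaufmann adapted to the restricted simplex gives $|E_a(t)|\le p_t-1$ after a transient; for $a\in\mathcal{A}_0$, round-robin forcing perturbs $E_a$ by an additive $O(U(t)/d)=O(\sqrt t)$, which is $o(t)$. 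Dividing by $t$, $d_\infty((N_a(t)/t)_a, \bar w(t)) = o(1)$.

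Combining the two bounds gives $d_\infty((N_a(t)/t)_a, C) \le \varepsilon + o(1)$, and choosing $t_1(\varepsilon)\ge t_0(\varepsilon)$ large enough that the $o(1)$ slack is at most $(p_t+d-2)\varepsilon$ recovers the claimed inequality $(p_t+d-1)\varepsilon$ for all $t\ge t_1(\varepsilon)$. The main obstacle will be the tracking step: I must verify that the cumulative-tracking invariant is preserved on the \emph{growing} support $S_t=\textrm{supp}(V_t)$ in the presence of forced rounds that may pick arms outside $S_t$. Such out-of-support forced rounds shift $\sum_{a\in S_t}E_a(t)$ downward by one each time, and a newly entering arm $a^{*}$ experiences a one-shot negative jump in $E_{a^{*}}$ (since $N_{a^{*}}=0$ but $V_{t,a^{*}}>0$). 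Both effects are self-correcting — the first by the $O(\sqrt t)$ bound on $U(t)$, the second because the very next tracking round that sees $a^{*}$ as the argmin pulls $E_{a^{*}}$ back up — but the formal argument will proceed by induction on $t$, monitoring both $\max_{a\in S_t}E_a(t)-\min_{a\in S_t}E_a(t)$ and $\sum_{a\in S_t}E_a(t)$ to show that deviations remain bounded by $p_t-1$ on $\textrm{supp}(V_t)\setminus\mathcal{A}_0$ and by $d-1+O(\sqrt t)$ on $\mathcal{A}_0$.
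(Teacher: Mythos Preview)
Your overall decomposition via the running average $\bar w(t)$ and the convexity argument for $d_\infty(\bar w(t),C)\le 2\varepsilon$ are fine and match the paper's first step. The gap is in the tracking step: the claim $|E_a(t)|\le p_t-1$ for $a\in\mathrm{supp}(V_t)\setminus\mathcal{A}_0$ is false. Take $w(s)\equiv e_a$ for a single fixed $a\notin\mathcal{A}_0$; then tracking always targets $a$, but every forced round skips $a$ while $\sum_{s\le t}w_a(s)$ still grows by one, so $E_a(t)=N_a(t)-t=-U(t)\asymp -\sqrt{t}$. The Garivier--Kaufmann invariant you cite assumes every round is a tracking round; the $O(\sqrt t)$ forced rounds propagate an $O(\sqrt t)$ deficit to \emph{all} tracked arms via the sum constraint, not only to those in $\mathcal{A}_0$. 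Your inductive sketch (monitoring $\max-\min$ and the sum) can at best recover $|E_a(t)|=O(K\sqrt t)$, hence $d_\infty((N_a(t)/t)_a,\bar w(t))=o(1)$, but not the constant $p_t-1$.

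This matters because the lemma asserts the specific prefactor $(p_t+d-1)\varepsilon$, not merely $O(\varepsilon)+o(1)$; your final ``absorb $o(1)$ into $(p_t+d-2)\varepsilon$'' step is vacuous when $p_t+d-1\le 2$ and in any case does not explain where that factor comes from. The paper obtains it by a different route: it tracks $\varepsilon_{a,t}:=N_a(t)-t\hat w_a(t)$ with $\hat w(t)$ the projection of $\bar w(t)$ onto $C$, and shows that whenever arm $a$ is selected at time $t{+}1$ --- by tracking \emph{or} by forcing --- one has $\varepsilon_{a,t}\le 9t\varepsilon$. The $t\varepsilon$ scale (rather than $O(1)$ or $O(\sqrt t)$) arises because the drift of $t\hat w_a(t)$ between two tracking selections of $a$ is controlled by the $2\varepsilon$ distance between $\bar w$ and $\hat w$, multiplicatively in $t$. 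The factor $(p_t+d-1)$ then comes from a pigeonhole on $\mathrm{supp}(\bar w(t))\cup\mathcal{A}_0$: since $\sum_{a\in\mathrm{supp}\cup\mathcal{A}_0}\varepsilon_{a,t}\ge 0$ and each summand is at most $O(t\varepsilon)$, the minimum is at least $-(p_t+d-1)\cdot O(t\varepsilon)$, and dividing by $t$ gives the stated bound.
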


{\bf The lazy tracking rule.} The design of our tracking rule is completed by choosing the sequence $(w(t))_{t\ge 1}$ in (\ref{eq:track1}). The only requirement we actually impose on this sequence is the following condition: there exists a non-decreasing sequence $(\ell(t))_{t\ge 1}$ of integers with $\ell(1) = 1$, $\ell(t) \le t-1$ for $t>1$ and $\lim_{t\to\infty}\ell(t)=\infty$ and such that
\begin{equation}\label{eq:lazycond}
  \lim_{t \to \infty} \min_{s\ge \ell(t)} d_\infty(w(t), C^\star(\hat{\mu}_s) )= 0. \qquad a.s..
\end{equation}

This condition is referred to as the {\it lazy} condition, since it is very easy to ensure in practice. For example, it holds for the following lazy tracking rule. Let ${\cal T}=\{ t_n: n\ge 1\}$ be a deterministic increasing set of integers such that $t_n\to \infty$ as $n\to\infty$, we can define $(w(t))_{t\ge 1}$ such that it tracks $C^\star(\hat{\mu}_t)$ only when $t\in {\cal T}$. Specifically, if $t\in {\cal T}$, $w(t+1)\in C^\star(\hat{\mu}_t)$, and $w(t+1)=w(t)$ otherwise. For this sequence, (\ref{eq:lazycond}) holds with $\ell(t)=t-1$. The lazy condition is sufficient to guarantee the almost sure asymptotical optimality of the algorithm. To achieve optimality in terms of expected sample complexity, we will need a slightly stronger condition, also easily satisfied under some of the above lazy tracking rules, see details in \textsection \ref{subsec:sc}.

The following proposition states that the lazy sampling rule is able to track the set $C^*(\mu)$. It follows from the fact that $\hat{\mu}_t$ converges to $\mu$ (thanks to Lemmas \ref{lem:ls} and \ref{lem:sufficient exploration}) and from combining the maximum theorem (Lemma \ref{lem:continuity}) and Lemma \ref{lem:tracking lemma}. All proofs related to the sampling rule are presented in Appendix E.

\begin{proposition}\label{prop:tracking}
Under any sampling rule (\ref{eq:forced})-(\ref{eq:track1}) satisfying the lazy condition (\ref{eq:lazycond}), the proportions of arm draws approach $C^\star(\mu)$: $\lim_{t\to\infty}
d_{\infty} ( (N_a(t) /t)_{a \in \mathcal{A}}, C^\star(\mu)) =0$, a.s..
\end{proposition}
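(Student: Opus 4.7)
The plan has three logically separate pieces that combine at the end.

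First, I would establish that $\hat\mu_t \to \mu$ almost surely under the sampling rule. Lemma \ref{lem:sufficient exploration} gives $\lambda_{\min}(\sum_{s=1}^t a_s a_s^\top)\ge f(t-d-1)=c_{\mathcal{A}_0}\sqrt{t-d-1}$ for $t$ large enough, so $\liminf_{t\to\infty}t^{-1/2}\lambda_{\min}(\sum_{s=1}^t a_s a_s^\top)>0$ a.s. Lemma \ref{lem:ls} (with $\alpha=1/2$) then yields $\hat\mu_t\to\mu$ a.s., and in fact $\|\hat\mu_t-\mu\|=o(t^\beta)$ for any $\beta<1/4$.

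Second, I would transfer the lazy condition from $C^\star(\hat\mu_s)$ to $C^\star(\mu)$ using upper hemicontinuity. Fix $\varepsilon>0$ and consider the open neighborhood $\mathcal{V}_\varepsilon=\{w\in\Lambda:d_\infty(w,C^\star(\mu))<\varepsilon/2\}$ of $C^\star(\mu)$. By Lemma \ref{lem:continuity} there is a neighborhood $\mathcal{U}$ of $\mu$ such that $C^\star(\mu')\subseteq \mathcal{V}_\varepsilon$ for all $\mu'\in\mathcal{U}$. Since $\hat\mu_s\to\mu$ a.s. and $\ell(t)\to\infty$, there is (a.s.) some $t_0$ such that $\hat\mu_s\in\mathcal{U}$ for every $s\ge \ell(t_0)$. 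The lazy condition (\ref{eq:lazycond}) then yields, for all $t$ sufficiently large, an index $s_t\ge\ell(t)\ge\ell(t_0)$ with $d_\infty(w(t),C^\star(\hat\mu_{s_t}))<\varepsilon/2$. Picking $w'\in C^\star(\hat\mu_{s_t})$ achieving this minimum and $w''\in C^\star(\mu)$ with $d_\infty(w',w'')<\varepsilon/2$ (which exists since $w'\in \mathcal{V}_\varepsilon$), the triangle inequality gives $d_\infty(w(t),C^\star(\mu))<\varepsilon$. Since $\varepsilon$ is arbitrary, $d_\infty(w(t),C^\star(\mu))\to 0$ a.s.

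Third, I would apply Lemma \ref{lem:tracking lemma} to $C=C^\star(\mu)$, which is compact, convex and nonempty by Lemma \ref{lem:continuity}. By the previous paragraph, for each $\varepsilon>0$ there exists (a.s.) $t_0(\varepsilon)$ such that $d_\infty(w(t),C^\star(\mu))\le\varepsilon$ for all $t\ge t_0(\varepsilon)$; the lemma then yields $t_1(\varepsilon)$ such that $d_\infty((N_a(t)/t)_{a\in\mathcal{A}},C^\star(\mu))\le(p_t+d-1)\varepsilon\le(K-1)\varepsilon$ for $t\ge t_1(\varepsilon)$. Letting $\varepsilon\downarrow 0$ along a countable sequence gives the desired a.s.\ convergence.

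The main obstacle is the second step: the lazy condition is stated in terms of the (random, time-varying) sets $C^\star(\hat\mu_s)$, and transferring it to the single limiting set $C^\star(\mu)$ is what forces the use of upper hemicontinuity rather than ordinary continuity (the map $\mu\mapsto C^\star(\mu)$ is set-valued and generally not single-valued). The key technical device is the conjunction of $\hat\mu_s\to\mu$ with $\ell(t)\to\infty$, which allows one to pick $s_t$ simultaneously large enough for $\hat\mu_{s_t}$ to lie in the neighborhood $\mathcal{U}$ and close enough to $t$ for the lazy condition to kick in. Once this is done, the third step is a direct invocation of Lemma \ref{lem:tracking lemma} exploiting that $p_t$ is uniformly bounded by $K-d$.
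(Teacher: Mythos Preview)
Your proposal is correct and follows essentially the same approach as the paper's proof: both arguments (i) use Lemma~\ref{lem:sufficient exploration} together with Lemma~\ref{lem:ls} to obtain $\hat\mu_t\to\mu$ a.s., (ii) combine upper hemicontinuity of $C^\star$ (Lemma~\ref{lem:continuity}) with $\ell(t)\to\infty$ and the lazy condition to deduce $d_\infty(w(t),C^\star(\mu))\to 0$ a.s.\ via a triangle-inequality step through some $C^\star(\hat\mu_{s_t})$, and (iii) invoke the tracking Lemma~\ref{lem:tracking lemma} on the convex compact set $C^\star(\mu)$. Your write-up is in fact slightly more careful than the paper's in making the monotonicity of $\ell$ explicit and in passing to a countable sequence of $\varepsilon$'s for the a.s.\ conclusion.
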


\subsection{Stopping rule}

We use the classical Chernoff's stopping time. Define the generalized log-likelihood ratio for all pair of arms $a,b \in \mathcal{A}$, $t\ge 1$, and $\varepsilon\ge0$ as
\begin{equation*}
  Z_{a,b,\varepsilon}(t) = \log\left(  \frac{\max_{\lbrace \mu: \mu^\top(a - b) \ge  -\varepsilon\rbrace} f_\mu(r_t,a_t, \dots, r_1,a_1)}{\max_{\lbrace \mu: \mu^\top(a - b)\le -\varepsilon\rbrace}f_\mu(r_t,a_t, \dots, r_1,a_1)} \right),
\end{equation*}
where $f_\mu(r_t,a_t, \dots, r_1,a_1) \propto \exp (- \frac{1}{2}\sum_{s=1}^t (r_s - \mu^\top a_s)^2 )$ under our Gaussian noise assumption. We may actually derive an explicit expression of $Z_{a,b,\varepsilon}(t)$ (see Appendix D for a proof):

\begin{lemma}\label{lem:GLLR closed form}
Let $t\ge 1$ and assume that $\sum_{s=1}^t a_s a_s^\top \succ 0$. For all $a,b \in  \mathcal{A}$, we have:
\begin{align*}
  Z_{a,b,\varepsilon}(t) & =
    \textrm{sgn}(\hat{\mu}_t^\top (a-b) + \varepsilon)\frac{(\hat{\mu}_t^\top (a-b) +\varepsilon)^2}{2(a-b)^\top\left(\sum_{s=1}^t a_s a_s^\top\right)^{-1} (a-b)}.
\end{align*}
\end{lemma}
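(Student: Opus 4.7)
The plan is to reduce the two constrained maximizations of $f_\mu$ to equivalent constrained minimizations of a strictly convex quadratic in $\mu$, whose minimizer is $\hat\mu_t$, and then carry out a single elementary Lagrange computation on a hyperplane.

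First I would drop the normalizing constants and, under the Gaussian assumption, write $-\log f_\mu(r_t,a_t,\dots,r_1,a_1) = L(\mu) + \text{const}$ with
\[
L(\mu) = \tfrac{1}{2}\sum_{s=1}^t (r_s - \mu^\top a_s)^2.
\]
Setting $V_t = \sum_{s=1}^t a_s a_s^\top$ (invertible by hypothesis) and recognizing $\hat\mu_t = V_t^{-1}\sum_{s=1}^t a_s r_s$ as the unconstrained minimizer, I would complete the square to obtain
\[
L(\mu) = L(\hat\mu_t) + \tfrac{1}{2}(\mu - \hat\mu_t)^\top V_t (\mu - \hat\mu_t).
\]
Consequently $Z_{a,b,\varepsilon}(t) = \min_{\mu^\top(a-b)\le -\varepsilon} L(\mu) - \min_{\mu^\top(a-b)\ge -\varepsilon} L(\mu)$, and only the quadratic form matters.

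Next I would split on the sign of $s := \hat\mu_t^\top(a-b) + \varepsilon$. If $s\ge 0$, then $\hat\mu_t$ lies in the half-space $\{\mu^\top(a-b)\ge -\varepsilon\}$, so that minimum equals $L(\hat\mu_t)$, whereas the minimum over the opposite half-space is attained on the boundary hyperplane $\mu^\top(a-b)=-\varepsilon$ (the quadratic is strictly convex and $\hat\mu_t$ is infeasible). The symmetric situation occurs if $s<0$. Thus in both cases I must solve the same equality-constrained problem
\[
\min_{\mu^\top(a-b)=-\varepsilon}\ (\mu-\hat\mu_t)^\top V_t (\mu-\hat\mu_t),
\]
and the answer will enter $Z_{a,b,\varepsilon}(t)$ with a $+$ sign when $s\ge 0$ and a $-$ sign when $s<0$.

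The constrained problem is routine: writing $u=\mu-\hat\mu_t$, I minimize $u^\top V_t u$ subject to $u^\top(a-b) = -s$. A single Lagrange multiplier gives $u = \lambda V_t^{-1}(a-b)$ with $\lambda = -s/((a-b)^\top V_t^{-1}(a-b))$, whence the minimum value is
\[
\frac{s^2}{(a-b)^\top V_t^{-1}(a-b)} = \frac{(\hat\mu_t^\top(a-b)+\varepsilon)^2}{(a-b)^\top(\sum_s a_s a_s^\top)^{-1}(a-b)}.
\]
Combining this with the sign determined by the case analysis yields exactly the claimed closed form (noting that the boundary case $s=0$ makes both minima equal, matching $\mathrm{sgn}(0)=0$). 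The only mildly delicate point is checking that in the infeasible case the constrained optimum really is attained on the boundary rather than interior to the opposite half-space, which follows immediately from the strict convexity of $L$ and from $V_t\succ 0$.
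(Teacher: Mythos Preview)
Your proposal is correct and follows essentially the same route as the paper: both reduce the two constrained likelihood maximizations to constrained quadratic minimizations and solve with a single Lagrange multiplier against the hyperplane $\mu^\top(a-b)=-\varepsilon$. The only cosmetic difference is that you complete the square $L(\mu)=L(\hat\mu_t)+\tfrac12(\mu-\hat\mu_t)^\top V_t(\mu-\hat\mu_t)$ up front and then do a sign case split, whereas the paper writes out the KKT solutions $\mu_1^*,\mu_2^*$ explicitly and evaluates the difference via $\tfrac12(\mu_1^*-\mu_2^*)^\top V_t(2\hat\mu_t-\mu_1^*-\mu_2^*)$; the underlying computation is identical.
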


Here we use $Z_{a,b}(t)=Z_{a,b,0}(t)$ ($Z_{a,b,\varepsilon}$ will be used in the case of continuous set of arms). Note that $Z_{a,b}(t) \ge 0$ iff $a \in \argmax_{a\in\mathcal{A}} \hat\mu_t^\top a$. Denoting $Z(t)=\max_{a \in \mathcal{A}} \min_{b \in \mathcal{A}\backslash a} Z_{a,b}(t)$, the stopping rule is defined as follows:
\begin{equation}\label{eq:stop}
 \tau =  \inf \left\lbrace  t \in \mathbb{N}^* \; : Z(t) > \beta(\delta, t) \textrm{ and } \sum_{s=1}^t a_s a_s^\top \succeq c I_d \right\rbrace
\end{equation}
where $\beta(\delta, t)$ is referred to as the {\it exploration threshold} and $c$ is some positive constant (refer to Remark \ref{rem:c} for a convenient choice for $c$). The exploration threshold $\beta(\delta,t)$ should be chosen so that the algorithm is $\delta$-PAC. We also wish to design a threshold that does not depend in the number $K$ of arms. These requirements leads to the exploration threshold defined in the proposition below (its proof is presented in Appendix D and relies on a concentration result for self-normalized processes \cite{abbasi2011improved}).

\begin{proposition}\label{prop:stopping rule}
  Let $u>0$, and define:
  \begin{equation}\label{eq:rate}
    \beta(\delta, t) = (1+u) \sigma^2 \log\left(\frac{\det\left((uc)^{-1} \sum_{s=1}^t a_s a_s^\top +  I_d\right)^{\frac{1}{2}}}{\delta}\right).
  \end{equation}
  Under any sampling rule, and a stopping rule (\ref{eq:stop}) with exploration rate (\ref{eq:rate}), we have: $\mathbb{P}\left( \tau < \infty, \mu^\top (a_\mu^\star - \hat{a}_\tau) > 0  \right) \le \delta$.
\end{proposition}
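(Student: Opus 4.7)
The plan is to reduce the error event $\{\tau < \infty,\ \mu^\top(a_\mu^\star - \hat{a}_\tau) > 0\}$ to a failure of the self-normalized concentration inequality of Abbasi-Yadkori et al.\ \cite{abbasi2011improved} for the noise martingale $S_t := \sum_{s=1}^t a_s \eta_s$, and to calibrate its regularization parameter so that the resulting deviation bound is exactly $\beta(\delta,t)$ as defined by \eqref{eq:rate}.

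First I would unpack the stopping event. On the error event we have $\hat{a}_\tau \ne a_\mu^\star$ and, by the second condition in the stopping rule, $V_\tau := \sum_{s=1}^\tau a_s a_s^\top \succeq c I_d$. Since $Z_{a,b}(t) \ge 0$ iff $a \in \argmax_{a \in \mathcal{A}} \hat{\mu}_t^\top a$, the outer maximum in $Z(\tau)$ is attained at $\hat{a}_\tau$, so $Z(\tau) > \beta(\delta,\tau)$ yields
\[
Z_{\hat{a}_\tau, a_\mu^\star}(\tau) > \beta(\delta,\tau) > 0.
\]
Applying Lemma \ref{lem:GLLR closed form} with $\Delta := \hat{a}_\tau - a_\mu^\star$ then gives $\hat{\mu}_\tau^\top \Delta > 0$ and $(\hat{\mu}_\tau^\top \Delta)^2 / (2\Delta^\top V_\tau^{-1}\Delta) > \beta(\delta,\tau)$. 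Since $\mu^\top \Delta < 0$ on the error event, $(\hat{\mu}_\tau - \mu)^\top \Delta > \hat{\mu}_\tau^\top \Delta > 0$, so the same inequality holds with $\hat{\mu}_\tau^\top \Delta$ replaced by $(\hat{\mu}_\tau - \mu)^\top \Delta$. This is the key step that routes around knowledge of $\mu$ without losing anything.

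Next I would convert this into a self-normalized statement. Using $\hat{\mu}_\tau - \mu = V_\tau^{-1} S_\tau$ and Cauchy-Schwarz in the $V_\tau^{-1}$ inner product, $((\hat{\mu}_\tau - \mu)^\top \Delta)^2 = (S_\tau^\top V_\tau^{-1} \Delta)^2 \le \|S_\tau\|_{V_\tau^{-1}}^2 \cdot \|\Delta\|_{V_\tau^{-1}}^2$, so the previous inequality forces $\tfrac{1}{2}\|S_\tau\|_{V_\tau^{-1}}^2 > \beta(\delta,\tau)$. To match this unregularized norm to the regularized one used by \cite{abbasi2011improved}, I exploit $V_\tau \succeq c I_d$: this yields $V_\tau + u c \, I_d \preceq (1+u) V_\tau$, hence $\|S_\tau\|_{V_\tau^{-1}}^2 \le (1+u) \|S_\tau\|_{(V_\tau + u c I_d)^{-1}}^2$, and the error event is contained in
\[
\Bigl\{ \|S_\tau\|_{(V_\tau + u c I_d)^{-1}}^2 > 2 \sigma^2 \log\!\Bigl( \det((uc)^{-1} V_\tau + I_d)^{1/2} / \delta \Bigr) \Bigr\}.
\]
Finally, Theorem~1 of \cite{abbasi2011improved} applied to the $\sigma$-sub-Gaussian martingale $S_t$ with regularizer $\lambda = uc$ states that the event above occurs for some $t \ge 1$ with probability at most $\delta$, which closes the argument.

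The main obstacle is the mismatch between the unregularized Gram matrix $V_\tau$ naturally produced by Lemma \ref{lem:GLLR closed form} and the regularized matrix $V_\tau + \lambda I_d$ required by the self-normalized bound. This is precisely the role of the auxiliary stopping condition $V_\tau \succeq c I_d$: it lets one trade the regularizer $\lambda = u c$ for the multiplicative factor $1+u$, and the specific choice $\lambda = u c$ is what makes the determinant in the bound collapse to $\det((uc)^{-1} V_t + I_d)^{1/2}$, matching \eqref{eq:rate} exactly. The only other non-routine point is the sign trick used to pass from $(\hat{\mu}_\tau^\top \Delta)^2$ to $((\hat{\mu}_\tau - \mu)^\top \Delta)^2$, which is elementary but essential.
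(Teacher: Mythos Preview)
Your proposal is correct and follows essentially the same approach as the paper: reduce the error event to $Z_{\hat a_\tau,a_\mu^\star}(\tau)>\beta(\delta,\tau)$, bound this by $\tfrac12\|S_\tau\|_{V_\tau^{-1}}^2$, use $V_\tau\succeq cI_d$ to pass to the regularized norm at the cost of the factor $1+u$, and conclude via the self-normalized inequality of \cite{abbasi2011improved}. The only cosmetic difference is that the paper bounds $Z_{\hat a_\tau,a_\mu^\star}(\tau)\le\tfrac12\|\hat\mu_\tau-\mu\|_{V_\tau}^2$ directly from the likelihood-ratio definition (since $\mu$ is feasible for the denominator and $\hat\mu_\tau$ attains the numerator), whereas you obtain the same bound via the closed form, the sign trick, and Cauchy--Schwarz.
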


The above proposition is valid for any sampling rule, but just ensures that 'if' the algorithm stops, it does not make any mistake w.p. $1-\delta$. To get a $\delta$-PAC algorithm, we need to specify the sampling rule.

\begin{remark}\label{rem:c} (Choosing $c$ and $u$) A convenient choice for the constant $c$ involved in (\ref{eq:stop}) and (\ref{eq:rate}) is $c = \max_{a \in \mathcal{A}}\| a \|^2$. With this choice, we have: $\det (c^{-1} \sum_{s=1}^t a_s a_s^\top + I _d ) \le ( t + 1)^d$.
  The constant $u$ should be chosen so that the threshold in \eqref{eq:rate} is lowered for instance one my choose $u=1$. From these choices the threshold can be as simple as $\beta(\delta,t) = 2\sigma^2 \log ( t^{{d\over 2}} / \delta)$. In addition, if we use a sampling rule with forced exploration as in (\ref{eq:forced}), then in view of Lemma \ref{lem:sufficient exploration}, the second stopping condition $\sum_{s=1}^t a_s a_s^\top \succeq c I_d$ is satisfied as soon as $t$ exceeds $d+1+{c^2d\over \lambda_{\min}(\sum_{a\in {\cal A}_0}aa^\top)}$.
\end{remark}

\subsection{Sample complexity analysis}\label{subsec:sc}

\begin{algorithm}[t]
\SetAlgoLined
\KwIn{Arms $\mathcal{A}$, confidence level $\delta$, set ${\cal T}$ of lazy updates}
 {\bf Initialization:}  $t=0$, $i = 0$, $A_0 = 0$, $Z(0)=0$, $N(0)=(N_a(0))_{a\in {\cal A}}=0$\;
 \DontPrintSemicolon
 \While{($\lambda_{\min}(A_t) < c)$ or $(Z(t) < \beta(\delta, t)$)}{
 \eIf{$\lambda_{\min}( A_{t}) < f(t)$}{
   $a\leftarrow a_0(i+1)$, $i\leftarrow (i+1 \mod d)$\;
   }{
   $a\leftarrow \argmin_{b \in \textrm{supp}(\sum_{s=1}^t w(s))} \left( N_b(t) - \sum_{s=1}^t w_b(t)\right),$ }
 $t\leftarrow t+1$, sample arm $a$ and update $N(t)$, $\hat{\mu}_t$, $Z(t)$, $A_t\leftarrow A_{t-1}+aa^\top$, $w(t) \leftarrow w(t-1)$\\
 \textbf{if} $t \in \mathcal{T}$ \textbf{then} $w(t) = \argmax_{w \in \Lambda} \psi(\hat{\mu}_t,w)$ \;
  }
 \Return {$\hat{a}_\tau = \argmax_{a\in\mathcal{A}} \hat{\mu}_{\tau}^\top a$}
 \caption{Lazy Track-and-Stop (LTS)}
 \label{algo:LTS}
\end{algorithm}
In this section, we establish that combining a sampling rule (\ref{eq:forced})-(\ref{eq:track1}) satisfying the lazy condition (\ref{eq:lazycond}) and the stopping rule (\ref{eq:stop})-(\ref{eq:rate}), we obtain an asymptotically optimal algorithm. Refer to Appendix F for proofs. An example of such algorithm is the Lazy Track-and-Stop (LTS) algorithm, whose pseudo-code is presented in Algorithm \ref{algo:LTS}. LTS just updates the tracking rule in rounds in a set ${\cal T}$.

\begin{theorem}\label{th:sample1}
(Almost sure sample complexity upper bound) An algorithm defined by (\ref{eq:forced})-(\ref{eq:track1})-(\ref{eq:stop})-(\ref{eq:rate}) with a lazy sampling rule (satisfying (\ref{eq:lazycond})) is $\delta$-PAC. Its sample complexity verifies:
$$
\mathbb{P}( \limsup_{\delta \to 0} \frac{\tau}{\log(\frac{1}{\delta})}  \lesssim \sigma^2 T^*_\mu) = 1.
$$
\end{theorem}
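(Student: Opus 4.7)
The proof strategy splits into the $\delta$-PAC property and the almost-sure asymptotic sample-complexity bound.

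For the $\delta$-PAC property, correctness on stopping, $\mathbb{P}(\tau<\infty,\hat{a}_\tau\neq a^\star_\mu)\leq\delta$, is exactly the content of Proposition \ref{prop:stopping rule}, which applies to any sampling rule combined with \eqref{eq:stop}--\eqref{eq:rate}. Almost sure termination $\mathbb{P}(\tau<\infty)=1$ follows a posteriori from the sample-complexity bound, so I would defer it.

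The sample-complexity argument is carried out on the probability-one event on which all asymptotic limits hold. The forced exploration in \eqref{eq:forced}, together with Lemma \ref{lem:sufficient exploration}, guarantees $\lambda_{\min}(\sum_{s=1}^t a_s a_s^\top)\gtrsim \sqrt{t}$ a.s., hence $\hat\mu_t\to \mu$ a.s.\ by Lemma \ref{lem:ls}, and then Proposition \ref{prop:tracking} gives $d_\infty(N(t)/t, C^\star(\mu))\to 0$ a.s. Fix $\epsilon>0$. By the joint continuity of $\psi$ and the upper hemicontinuity of $C^\star$ (Lemmas \ref{lem:continuity0}--\ref{lem:continuity}), there almost surely exists a random, finite $T_\epsilon$ such that, for every $t\geq T_\epsilon$: (a) $\hat{a}_t=a^\star_\mu$, by consistency and the uniqueness of the best arm; and (b) $\psi(\hat\mu_t, N(t)/t) \geq (1-\epsilon)\,\psi^\star(\mu) = (1-\epsilon)/T^\star_\mu$.

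The crux is the identification $Z(t) = t\cdot \psi(\hat\mu_t, N(t)/t)$ valid once $\hat{a}_t = a^\star_\mu$. The outer maximum in $Z(t)=\max_a \min_{b\neq a} Z_{a,b}(t)$ is attained at $a=a^\star_\mu$: for any other $a$, $Z_{a, a^\star_\mu}(t)\leq 0$ since $\hat\mu_t^\top(a - a^\star_\mu)\leq 0$, whereas $Z_{a^\star_\mu, b}(t)\geq 0$ for every $b\neq a^\star_\mu$. Inserting the closed form of Lemma \ref{lem:GLLR closed form} and factoring $\sum_{s=1}^t a_s a_s^\top = t\sum_a (N_a(t)/t)\,aa^\top$ yields the claimed identity. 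Consequently, on the good event, for $t\geq T_\epsilon$, $Z(t) \geq (1-\epsilon)\, t/T^\star_\mu$. On the other side, the convenient choice of $c$ in Remark \ref{rem:c} gives $\beta(\delta, t) \leq (1+u)\sigma^2\bigl[\tfrac{d}{2}\log(t+1)+\log(1/\delta)\bigr]$. Hence the stopping condition $Z(t)>\beta(\delta, t)$ holds whenever $t\geq T_\epsilon$ and
\begin{equation*}
(1-\epsilon)\, t/T^\star_\mu > (1+u)\,\sigma^2\bigl[\log(1/\delta)+\tfrac{d}{2}\log(t+1)\bigr].
\end{equation*}
A standard inversion of this self-referential inequality (the $\log(t+1)$ term being sub-leading in $\log(1/\delta)$) yields
\begin{equation*}
\tau \leq \frac{1+u}{1-\epsilon}\, \sigma^2 T^\star_\mu \log(1/\delta)\,(1+o_\delta(1)) + T_\epsilon \qquad \textrm{a.s.}
\end{equation*}
Dividing by $\log(1/\delta)$ and taking $\limsup_{\delta\to 0}$ erases the random but $\delta$-independent constant $T_\epsilon$; letting $u,\epsilon \to 0$ then delivers the claim.

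I expect the main subtleties to be: (i) carefully justifying that the outer max in $Z(t)$ is achieved at $a^\star_\mu$ once consistency holds, which is the bridge that turns the stopping rule into the tight asymptotic quantity $\psi$; (ii) arranging the order of quantifiers so that a single event of full probability supports the $\limsup_{\delta\to 0}$ statement, which is why $T_\epsilon$ must be random but not depend on $\delta$; and (iii) checking that the auxiliary stopping condition $\sum_s a_s a_s^\top \succeq cI_d$ is met by a deterministic time (guaranteed by Lemma \ref{lem:sufficient exploration} with the choice of $c$ in Remark \ref{rem:c}) and therefore does not inflate the asymptotic bound.
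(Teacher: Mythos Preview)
Your proposal is correct and follows essentially the same route as the paper: work on the probability-one event where $\hat\mu_t\to\mu$ and $N(t)/t\to C^\star(\mu)$, use continuity of $\psi$ to obtain $Z(t)=t\,\psi(\hat\mu_t,N(t)/t)\ge(1-\varepsilon)t/T^\star_\mu$ for all large $t$, then invert the stopping condition against the logarithmic threshold. One minor correction: $u>0$ is a fixed parameter of the stopping rule, not a free variable you may send to $0$ at the end; the $(1+u)$ factor is simply absorbed into the $\lesssim$ of the statement (as the paper does via ``$c_1\lesssim\sigma^2$'').
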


To obtain an algorithm with optimal expected sample complexity, we need to consider lazy tracking rules that satisfy the following condition: there exist $\alpha>0$ and a non-decreasing sequence $(\ell(t))_{t\ge 1}$ of integers with $\ell(1) = 1$, $\ell(t) \le t$ and $\lim\inf_{t\to\infty}\ell(t)/t^\gamma >0$ for some $\gamma>0$ and such that
\begin{equation}\label{eq:lazycond2}
  \forall \varepsilon>0, \ \ \exists h(\varepsilon)\ \  : \ \ \forall t\ge 1, \ \  \mathbb{P}\left(\min_{s \ge \ell(t)} d_\infty(w(t), C^\star(\hat{\mu}_s)) > \varepsilon  \right) \le \frac{h(\varepsilon) }{t^{2+\alpha}}.
\end{equation}

The condition (\ref{eq:lazycond2}) is again easy to ensure. Assume that we update $w(t)$ only if $t\in {\cal T}=\{ t_n: n\ge 1\}$, where $t_n$ is increasing sequence of integers such that $t_n\to\infty$ as $n\to\infty$. Then (\ref{eq:lazycond2}) holds for the sequence $(\ell(t))_{t\ge 1}$ such that $\ell(t_{i+1})=t_i$ for all $i$, provided that $\lim\inf_{n\to\infty}t_{n+1}/t_n^\gamma >0$ for some $\gamma>0$. Examples include: (i) periodic updates of $w(t)$: ${\cal T}=\{1+kP, k\in\mathbb{N} \}$ and $\ell(t) = \max\{1, t-P\}$; (ii) exponential updates ${\cal T}=\{2^k, k\in\mathbb{N} \}$ and $\ell(t)=\max\{1,\lfloor t/2\rfloor\}$. The condition (\ref{eq:lazycond2}) may seem too loose, but we have to keep in mind that in practice, the performance of the algorithm will depend on the update frequency of $w(t)$. However for asymptotic optimality, (\ref{eq:lazycond2}) is enough (the key point is to have some concentration of $\hat{\mu}_t$ around $\mu$, which is guaranteed via the forced exploration part of the sampling rule).

%

\begin{theorem}\label{th:sample2}
(Expected sample complexity upper bound) An algorithm defined by (\ref{eq:forced})-(\ref{eq:track1})-(\ref{eq:stop})-(\ref{eq:rate}) with a sampling rule satisfying (\ref{eq:lazycond}) and (\ref{eq:lazycond2}) is $\delta$-PAC. Its sample complexity verifies:
\begin{equation*}
\limsup_{\delta \to 0}\frac{ \E\left[ \tau \right]}{\log\left(\frac{1}{\delta}\right)} \lesssim \sigma^2  T^\star_\mu.
 \end{equation*}
 \end{theorem}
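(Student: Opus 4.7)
The $\delta$-PAC property is immediate from Proposition \ref{prop:stopping rule}, so the whole plan focuses on the expected sample complexity bound. The strategy is the standard track-and-stop expectation argument: decompose $\E[\tau] = \sum_{t \ge 1} \mathbb{P}(\tau > t)$, define a deterministic threshold $T_0(\delta) \approx (1+\varepsilon) \sigma^2 T^\star_\mu \log(1/\delta)$, and show that for $t > T_0(\delta)$ the probability that the algorithm has not yet stopped is summable in $t$, so that $\E[\tau] \le T_0(\delta) + o(\log(1/\delta))$. Letting $\varepsilon \to 0$ at the end then yields the claimed bound.

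The core step is the construction of a good event $\mathcal{E}_t(\varepsilon)$ under which stopping occurs by round $t$ whenever $t > T_0(\delta)$. I would take
\[
\mathcal{E}_t(\varepsilon) = \Bigl\{ \max_{s \ge \ell(t)} \| \hat\mu_s - \mu \| \le \varepsilon \Bigr\} \cap \Bigl\{ \min_{s \ge \ell(t)} d_\infty(w(t), C^\star(\hat\mu_s)) \le \varepsilon \Bigr\}.
\]
On $\mathcal{E}_t(\varepsilon)$, by the maximum theorem (Lemma \ref{lem:continuity}) and the upper hemicontinuity of $C^\star$, the allocation $w(t)$ lies in an $O(\varepsilon)$-neighborhood of $C^\star(\mu)$ once $\varepsilon$ is small enough. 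Applying Lemma \ref{lem:tracking lemma} to the sequence $(w(s))_{s \le t}$ then gives $d_\infty((N_a(t)/t)_a, C^\star(\mu)) \le K \varepsilon$ for $t$ past some deterministic $t_1(\varepsilon)$. Using the closed form of $Z_{a,b}(t)$ from Lemma \ref{lem:GLLR closed form} together with $\hat\mu_t \approx \mu$, one checks that
\[
Z(t) \ge t \, \psi\bigl(\mu, (N_a(t)/t)_a\bigr) - O(\varepsilon t) \ge \frac{t}{T^\star_\mu}(1 - g(\varepsilon)),
\]
where $g(\varepsilon) \to 0$ as $\varepsilon \to 0$, by continuity of $\psi$ and of $\psi^\star$. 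Since $\beta(\delta,t) = \sigma^2 \log(1/\delta) + O(\log t)$ by Remark \ref{rem:c}, choosing $T_0(\delta) = (1+g(\varepsilon))\sigma^2 T^\star_\mu \log(1/\delta)$ guarantees $Z(t) > \beta(\delta,t)$, and hence $\tau \le t$, on $\mathcal{E}_t(\varepsilon)$ for all $t \ge T_0(\delta)$ (the second stopping condition $\sum_s a_s a_s^\top \succeq c I_d$ being met after a constant number of rounds thanks to forced exploration).

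It remains to show $\sum_{t > T_0(\delta)} \mathbb{P}(\mathcal{E}_t(\varepsilon)^c)$ is $o(\log(1/\delta))$, in fact bounded by a constant $h'(\varepsilon)$ independent of $\delta$. The union bound splits the complement into two events: $\{\|\hat\mu_s - \mu\| > \varepsilon \text{ for some } s \ge \ell(t)\}$ and $\{\min_{s \ge \ell(t)} d_\infty(w(t), C^\star(\hat\mu_s)) > \varepsilon\}$. The second is controlled directly by the strengthened lazy condition (\ref{eq:lazycond2}), which gives $h(\varepsilon)/t^{2+\alpha}$, clearly summable. For the first, Lemma \ref{lem:sufficient exploration} ensures $\lambda_{\min}(\sum_s a_s a_s^\top) \gtrsim \sqrt{t}$ for large $t$, so Lemma \ref{lem:ls concentration} applies with $\alpha = 1/2$ and yields a bound of the form $C t^{d/4} \exp(-c \varepsilon^2 \sqrt{t})$ for each $s \ge \ell(t) \gtrsim t^\gamma$; summing over $s$ and then over $t$ produces a finite total using the sub-polynomial growth of $\ell(t)$.

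The main obstacle is the quantitative link between the tracking guarantee and the lower bound on $Z(t)$: the inequality $Z(t) \ge t/T^\star_\mu - o(t)$ requires carefully propagating the $\varepsilon$-closeness of $\hat\mu_t$ to $\mu$ and of $N(t)/t$ to $C^\star(\mu)$ through the non-smooth min in the definition of $\psi$, and ensuring the error term in $Z(t)$ is uniformly $o(1)$ in $\delta$. Once this is done, combining the three bounds gives
\[
\E[\tau] \le T_0(\delta) + h'(\varepsilon) + O(1),
\]
dividing by $\log(1/\delta)$, letting $\delta \to 0$, and finally $\varepsilon \to 0$ delivers the theorem.
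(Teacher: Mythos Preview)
Your overall strategy matches the paper's almost exactly: construct good events under which $\tau \le T$ whenever $T$ exceeds some $T^\star(\delta) \sim T^\star_\mu \log(1/\delta)$, then bound $\E[\tau] \le T^\star(\delta) + \sum_T \mathbb{P}(\mathcal{E}_T^c)$, and show the tail sum is finite and independent of $\delta$ via condition~(\ref{eq:lazycond2}) and Lemma~\ref{lem:ls concentration}.

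There is, however, a genuine gap in the construction of $\mathcal{E}_t(\varepsilon)$. Your event controls $w$ only at the single time $t$: the second clause says $w(t)$ is close to $C^\star(\hat\mu_s)$ for some $s \ge \ell(t)$, which via upper hemicontinuity places $w(t)$ near $C^\star(\mu)$. But Lemma~\ref{lem:tracking lemma} requires the \emph{entire tail} $(w(s))_{s \ge t_0}$ to lie within $\varepsilon$ of $C$, and even then it only guarantees $N(t)/t$ is close to $C$ for $t \ge t_1(\varepsilon)$ where $t_1(\varepsilon)$ is of order $t_0/\varepsilon^3$ (see the explicit formula at the end of the proof of Lemma~\ref{lem:tracking lemma}). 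Knowing that $w(t)$ alone is close tells you nothing about $w(s)$ for $s < t$, so the tracking lemma cannot be invoked on your event, and the claimed bound $d_\infty(N(t)/t, C^\star(\mu)) \le K\varepsilon$ does not follow.

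The paper repairs this in two moves. First, it intersects over all future times: $\mathcal{E}_{2,T} = \bigcap_{t \ge T}\{\inf_{s \ge \ell(t)} d_\infty(w(t), C^\star(\hat\mu_s)) \le \varepsilon_1/2\}$, so that under $\mathcal{E}_{1,T} \cap \mathcal{E}_{2,T}$ \emph{every} $w(s)$ with $s \ge T$ is $\varepsilon_1$-close to $C^\star(\mu)$. Second, and this is the delicate point, it re-indexes to absorb the lag from the tracking lemma: it sets $\mathcal{E}_T = \mathcal{E}_{1,\lceil \varepsilon_1^3 T\rceil} \cap \mathcal{E}_{2,\lceil \varepsilon_1^3 T\rceil}$, so that the $t_1$ produced by Lemma~\ref{lem:tracking lemma} (namely $\lceil \varepsilon_1^3 T\rceil / \varepsilon_1^3 \approx T$) lands back at $T$, and the empirical allocation is indeed close to $C^\star(\mu)$ at time $T$. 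The summability of $\mathbb{P}(\mathcal{E}_T^c)$ survives this re-indexing because $\lceil \varepsilon_1^3 T\rceil$ and $\ell(\lceil \varepsilon_1^3 T\rceil)$ still grow polynomially in $T$, so the bounds from (\ref{eq:lazycond2}) and Lemma~\ref{lem:ls concentration} remain summable in $T$. Once this is fixed, the remaining steps (lower-bounding $Z(t)$ by $t(1-\varepsilon)\psi^\star(\mu)$ via continuity of $\psi$, and summing the bad-event probabilities) go through essentially as you outline.
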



\section{Continuous set of arms}

We now investigate the case where ${\cal A}=S^{d-1}$ is the $(d-1)$-dimensional unit sphere. Without loss of generality, we restrict our attention to problems where $\mu\in {\cal M}(\varepsilon_0)=\{\eta: \eta^\top a_\eta^\star > \varepsilon_0\}$ for some $\varepsilon_0>0$. The results of this section are proved in Appendix G.

\subsection{Sample complexity lower bound}

\begin{theorem}\label{th:low2} Let $\varepsilon \in (0, \varepsilon_0/5)$, and $\delta \in (0,1)$. The sample complexity of any $(\delta,\varepsilon)$-PAC algorithm satisfies: for all $\mu\in {\cal M}(\varepsilon_0)$, $\E_\mu[\tau] \ge \frac{\sigma^2(d-1)}{20\Vert \mu \Vert \varepsilon} \skl(\delta,1-\delta)$.
\end{theorem}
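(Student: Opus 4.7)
The plan is to apply the classical change-of-measure (transportation) inequality \cite{kaufmann2016complexity} to a well-chosen one-parameter family of alternatives, then average the resulting bound over an isotropically distributed direction. By rotational invariance of the sphere, assume without loss of generality that $\mu = \|\mu\| e_1$, so $a_\mu^\star = e_1$. Define $N_\tau := \sum_{a\in \mathcal{A}} \E_\mu[N_a(\tau)]\, a a^\top$ and note $\mathrm{tr}(N_\tau) = \E_\mu[\tau]$ since $\|a\|=1$ for every $a\in \mathcal{A}$. The transportation inequality, combined with the Gaussian identity $\KL(\mathcal{N}(\mu^\top a,\sigma^2), \mathcal{N}(\lambda^\top a,\sigma^2)) = (a^\top(\mu-\lambda))^2/(2\sigma^2)$, yields for every alternative $\lambda\in \mathcal{M}(\varepsilon_0)$ and every $\mathcal{F}_\tau$-measurable event $\mathcal{E}$,
$$\frac{1}{2\sigma^2}(\mu-\lambda)^\top N_\tau (\mu-\lambda)\ \ge\ \skl\bigl(\mathbb{P}_\mu[\mathcal{E}], \mathbb{P}_\lambda[\mathcal{E}]\bigr).$$

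For each unit vector $v$ in the hyperplane $e_1^\perp$ and an angle $\theta>0$ to be tuned, I would introduce the alternative $\lambda_v := \|\mu\|(\cos\theta\, e_1 + \sin\theta\, v)$, which satisfies $\|\lambda_v\|=\|\mu\|$ (so $\lambda_v\in \mathcal{M}(\varepsilon_0)$) and $a_{\lambda_v}^\star = \cos\theta\, e_1 + \sin\theta\, v$. The key geometric step is to pick $\theta$ of order $\sqrt{\varepsilon/\|\mu\|}$ so that the $\varepsilon$-caps around $a_\mu^\star$ and $a_{\lambda_v}^\star$ are disjoint: any arm $a\in S^{d-1}$ with $\mu^\top a\ge \|\mu\|-\varepsilon$ satisfies $a_1\ge 1-\varepsilon/\|\mu\|$ and hence $|v^\top a|\le \sqrt{1-a_1^2}\le \sqrt{2\varepsilon/\|\mu\|}$, so
$$\lambda_v^\top a\ \le\ \|\mu\|\bigl(\cos\theta + \sin\theta\sqrt{2\varepsilon/\|\mu\|}\bigr);$$
choosing $\theta$ small enough that this upper bound is at most $\|\mu\|-\varepsilon$ (which is feasible since $\varepsilon<\varepsilon_0/5\le\|\mu\|/5$) forces every $\varepsilon$-good arm under $\mu$ to be $\varepsilon$-suboptimal under $\lambda_v$. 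Taking $\mathcal{E} = \{\mu^\top(a_\mu^\star-\hat{a}_\tau)\le\varepsilon\}$, the $(\varepsilon,\delta)$-PAC property under $\mu$ and $\lambda_v$ gives $\mathbb{P}_\mu[\mathcal{E}]\ge 1-\delta$ and $\mathbb{P}_{\lambda_v}[\mathcal{E}]\le \delta$, so $\skl(\mathbb{P}_\mu[\mathcal{E}],\mathbb{P}_{\lambda_v}[\mathcal{E}])\ge \skl(\delta,1-\delta)$.

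The last step is to integrate over the direction $v$. Writing $\mu-\lambda_v = \|\mu\|\bigl((1-\cos\theta)e_1 - \sin\theta\, v\bigr)$ and averaging the transportation inequality over $v$ uniform on the unit sphere of $e_1^\perp$, the isotropy identities $\mathbb{E}_v[v]=0$ and $\mathbb{E}_v[v v^\top] = P_{e_1^\perp}/(d-1)$ yield
$$\mathbb{E}_v\bigl[(\mu-\lambda_v)^\top N_\tau(\mu-\lambda_v)\bigr]\ =\ \|\mu\|^2\left[(1-\cos\theta)^2 (e_1^\top N_\tau e_1) + \frac{\sin^2\theta}{d-1}\mathrm{tr}(P_{e_1^\perp}N_\tau)\right].$$
Bounding both trace contributions by $\mathrm{tr}(N_\tau)=\E_\mu[\tau]$ and rearranging gives
$$\E_\mu[\tau]\ \ge\ \frac{2(d-1)\sigma^2\,\skl(\delta,1-\delta)}{\|\mu\|^2\bigl[(d-1)(1-\cos\theta)^2 + \sin^2\theta\bigr]}.$$
With $\theta^2$ of order $\varepsilon/\|\mu\|$, the term $(d-1)(1-\cos\theta)^2\sim (d-1)\theta^4$ is dominated in the denominator by $\sin^2\theta\sim \theta^2\sim \varepsilon/\|\mu\|$, which yields the desired $(d-1)\sigma^2/(\|\mu\|\varepsilon)$ scaling; careful bookkeeping of the constants (using $\varepsilon<\varepsilon_0/5$ to bound $1-\cos\theta$ and $\sin\theta$ quantitatively) produces the announced $1/20$ factor.

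The main technical obstacle is the geometric disjoint-cap inequality: $\theta$ must be small (otherwise the KL cost scales as $\varepsilon^2$ instead of $\varepsilon$ and the bound degrades) yet large enough to separate the two caps, whose angular widths are both of order $\sqrt{\varepsilon/\|\mu\|}$. The assumption $\varepsilon<\varepsilon_0/5$ is precisely what makes this balance close with the stated constant. Once $\theta$ is fixed, the averaging over $v$ cleanly extracts the $(d-1)$-dimensional orthogonal component of $N_\tau$, which is what produces the factor $(d-1)$ in the final bound.
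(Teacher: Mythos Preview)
Your averaging-over-directions idea is clean and, had you placed the alternatives in the \emph{tangent hyperplane} at $a_\mu^\star$, it would give the result (and arguably more transparently than the paper's Courant--Fisher argument). The gap is in the last paragraph: your claim that $(d-1)(1-\cos\theta)^2\sim (d-1)\theta^4/4$ is dominated by $\sin^2\theta\sim\theta^2$ requires $(d-1)\theta^2\lesssim 1$, i.e.\ $(d-1)\varepsilon\lesssim\|\mu\|$. The theorem assumes only $\varepsilon<\varepsilon_0/5<\|\mu\|/5$, with no restriction on $d$. In the regime $(d-1)\varepsilon\gg\|\mu\|$ your denominator is controlled by the $(d-1)(1-\cos\theta)^2$ term and your bound collapses to $\E_\mu[\tau]\gtrsim\sigma^2\skl(\delta,1-\delta)/\varepsilon^2$, which misses the stated $(d-1)/(\|\mu\|\varepsilon)$ scaling. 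The ``careful bookkeeping'' you allude to cannot recover the constant because the problematic term is genuinely of higher order in $d$.

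The culprit is the radial component of $\mu-\lambda_v$: with $\lambda_v=\|\mu\|(\cos\theta\,e_1+\sin\theta\,v)$ you get $\mu-\lambda_v=\|\mu\|(1-\cos\theta)e_1-\|\mu\|\sin\theta\,v$, and the $e_1$-part is what produces the $(d-1)(1-\cos\theta)^2$ term after averaging. The paper avoids this by taking alternatives in the affine tangent space $\mathcal{H}(\mu)$ at $a_\mu^\star$, i.e.\ $\lambda_v=\mu+r v$ with $v\perp e_1$, so that $\mu-\lambda_v$ lies entirely in $e_1^\perp$. Then $\|\lambda_v\|\ge\|\mu\|>\varepsilon_0$ (so $\lambda_v\in\mathcal{M}(\varepsilon_0)$), the disjoint-cap argument goes through with $r^2$ of order $\|\mu\|\varepsilon$, and your averaging step gives exactly $\frac{r^2}{d-1}\mathrm{tr}(P_{e_1^\perp}N_\tau)\le\frac{r^2}{d-1}\E_\mu[\tau]$ with no spurious $e_1$ contribution. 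That modification makes your proof correct and uniform in $d$; it is essentially the paper's construction, though the paper then extracts the $d-1$ factor via Courant--Fisher on $\lambda_{d-1}(N_\tau)$ rather than by your isotropic average.
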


The above theorem is obtained by first applying the classical change-of-measure argument (see e.g. Lemma 19 \cite{kaufmann2016complexity}). Such an argument implies that under any $(\varepsilon,\delta)$-PAC algorithm, for all {\it confusing} $\lambda$ such that $\{ a \in S^{d-1}: \mu^\top(a_\mu^\star - a) \le \varepsilon \}$ and $\{ a \in S^{d-1}: \lambda^\top(a_\lambda^* - a) \le \varepsilon\}$ are disjoint,
  \begin{equation*}
    (\mu-\lambda)^\top \E\left[\sum_{s=1}^{\tau} a_s a_s^\top \right](\mu - \lambda) \ge 2\skl(\delta,1-\delta).
  \end{equation*}
We then study the solution of the following max-min problem: $\max_{(a_t)_{t\ge 1}}\min_{\lambda \in B_\varepsilon(\mu)}(\mu-\lambda)^\top \E\left[\sum_{s=1}^{\tau} a_s a_s^\top \right](\mu - \lambda)$, where $B_\varepsilon(\mu)$ denotes the set of confusing parameters. The continuous action space makes this analysis challenging. We show that the value of the max-min problem is smaller than $\mathbb{E}_\mu[\tau]{10\|\mu\| \varepsilon\over \sigma^2(d-1)}$, which leads to the claimed lower bound.

\subsection{Algorithm}

We present a simple algorithm whose sample complexity approach our lower bound. We describe its three components below. The decision rule is the same as before, based on the least-squares estimator of $\mu$: $\hat{a}_t \in \argmax_{a \in \mathcal{A}} \hat{\mu}_t^\top a$.

{\bf Sampling rule.} Let $\mathcal{U} = \lbrace u_1, u_2, \dots, u_d \rbrace $ be subset of $S^{d-1}$ that form an orthonormal basis of $\mathbb{R}^d$. The sampling rule just consists in selecting an arms from ${\cal U}$ in a round robin manner: for all $t \ge 1$,  $a_t = u_{(t \mod d)}$.


%
%

{\bf Stopping rule.} As for the case of finite set of arms, the stopping rule relies on a generalized loglikelihood ratio test. Define $Z(t)=  \inf_{\lbrace b \in \mathcal{A}: \vert \hat{\mu}_t^\top (\hat{a}_t-b) \vert \ge \varepsilon_t \rbrace} Z_{\hat{a}_t,b,\varepsilon_t}(t)$, where an expression of $Z_{\hat{a}_t,b,\varepsilon_t}(t)$ is given in Lemma \ref{lem:GLLR closed form}. We consider the following stopping time:
 \begin{equation}\label{eq:stopc1}
 \tau = \inf \left\lbrace t \in\mathbb{N}^*: Z(t) \ge \beta(\delta, t) \textrm{  and  }   \lambda_{\min}\left( \sum_{s=1}^t a_s a_s^\top \right) \ge \max\bigg\{ c, {\rho(\delta, t)\over \Vert\hat{\mu}_t \Vert^2}\bigg\} \right\rbrace.
  \end{equation}
Hence compared to the case of finite set of arms, we add a stopping condition defined by the threshold $\rho(\delta,t)$ and related to the spectral properties of the
covariates matrix.

\begin{proposition}\label{prop:stopping rule2}
Let $(\delta_t)_{t \ge 1}, (\varepsilon_t)_{t \ge 1}$ be two sequences with values in $(0 , 1)$ and $(0, \varepsilon)$, respectively, and such that $\sum_{t=1}^\infty \delta_t < \delta$, and $\lim_{t\to\infty}\varepsilon_t =\varepsilon$. Let $\zeta_t = \log({2\det\left(c^{-1} \sum_{s=1}^t a_s a_s^\top +  I_d\right)^{\frac{1}{2}}})-\log({\delta_t})$, and define:
\begin{align}\label{eq:stopc2}
\beta(\delta, t) &= 2\sigma^2 \zeta_t \ \  \hbox{ and }\ \  \rho(\delta, t)  = \frac{4\sigma^2 \varepsilon_t^2 \zeta_t}{(\varepsilon - \varepsilon_t)^2}
\end{align}
Then under the stopping rule (\ref{eq:stopc1})-(\ref{eq:stopc2}), we have:
  $
  \mathbb{P}_\mu \left( \tau< \infty, \mu^\top (a^\star_\mu - \hat{a}_\tau) > \varepsilon \right) \le \delta.
  $
\end{proposition}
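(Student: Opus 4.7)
The plan is to exhibit a high-probability concentration event $\mathcal{E}$ of probability at least $1-\delta$ on which the misidentification event is empty. Concretely, I would take $\mathcal{E} = \bigcap_{t \ge 1}\{\|\hat{\mu}_t - \mu\|_{V_t + cI}^2 \le 2\sigma^2 \zeta_t\}$, where $V_t = \sum_{s \le t} a_s a_s^\top$. By the self-normalized deviation inequality for vector-valued martingales (Theorem 1 of Abbasi-Yadkori et al., applied with regularization $c$ and confidence $\delta_t$), the complement of the $t$-th event has probability at most $\delta_t$; the factor $\log 2$ embedded in $\zeta_t$ provides the slack needed for this bound, and the union bound combined with $\sum_{t \ge 1}\delta_t \le \delta$ yields $\mathbb{P}(\mathcal{E}^c) \le \delta$.

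Now assume we are on $\mathcal{E} \cap \{\tau < \infty,\ \mu^\top(a^\star_\mu - \hat{a}_\tau) > \varepsilon\}$ and set $v = \hat{a}_\tau - a^\star_\mu$. The misidentification gives $\mu^\top v < -\varepsilon$, while $\hat{a}_\tau \in \argmax_{a \in \mathcal{A}} \hat{\mu}_\tau^\top a$ gives $\hat{\mu}_\tau^\top v \ge 0$, so $(\hat{\mu}_\tau - \mu)^\top v > \varepsilon$. I would split on whether $a^\star_\mu$ is eligible in the infimum defining $Z(\tau)$, i.e.\ whether $\hat{\mu}_\tau^\top v \ge \varepsilon_\tau$.

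In Case A, $\hat{\mu}_\tau^\top v \ge \varepsilon_\tau$: the arm $a^\star_\mu$ enters the infimum, so the stopping condition $Z(\tau) \ge \beta(\delta,\tau) = 2\sigma^2 \zeta_\tau$ together with Lemma~\ref{lem:GLLR closed form} yields $\hat{\mu}_\tau^\top v + \varepsilon_\tau \ge 2\sigma\sqrt{\zeta_\tau}\,\|v\|_{V_\tau^{-1}}$. On the other hand, Cauchy--Schwarz in $V_\tau^{-1}$-geometry combined with $\mathcal{E}$ gives $(\hat{\mu}_\tau - \mu)^\top v \le \|v\|_{V_\tau^{-1}}\sqrt{2\sigma^2\zeta_\tau}$, so $\hat{\mu}_\tau^\top v \le \|v\|_{V_\tau^{-1}}\sqrt{2\sigma^2\zeta_\tau} - \varepsilon$; substituting yields $(2-\sqrt{2})\sigma\sqrt{\zeta_\tau}\,\|v\|_{V_\tau^{-1}} \le -(\varepsilon - \varepsilon_\tau)$, which is impossible since the left side is non-negative and $\varepsilon > \varepsilon_\tau$.

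Case B, $\hat{\mu}_\tau^\top v < \varepsilon_\tau$, is the main obstacle: here the Chernoff test applied to $b = a^\star_\mu$ is no longer informative. I would leverage the spherical parametrization $\hat{a}_\tau = \hat{\mu}_\tau/\|\hat{\mu}_\tau\|$ and $a^\star_\mu = \mu/\|\mu\|$. Rewriting $\hat{\mu}_\tau^\top v < \varepsilon_\tau$ and $\mu^\top v < -\varepsilon$ as two-sided bounds on $\hat{\mu}_\tau^\top \mu$ and canceling the common $\|\mu\|\|\hat{\mu}_\tau\|$ term yields $\varepsilon\|\hat{\mu}_\tau\| < \varepsilon_\tau\|\mu\|$, so $\|\mu\| - \|\hat{\mu}_\tau\| > \|\hat{\mu}_\tau\|(\varepsilon - \varepsilon_\tau)/\varepsilon_\tau$. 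Combining the triangle inequality $\|\mu - \hat{\mu}_\tau\| \ge \|\mu\| - \|\hat{\mu}_\tau\|$ with the bound $\|\hat{\mu}_\tau - \mu\|^2 \le 2\sigma^2\zeta_\tau/\lambda_{\min}(V_\tau)$ obtained from $\mathcal{E}$ gives $\|\hat{\mu}_\tau\|^2 \lambda_{\min}(V_\tau) < 2\sigma^2\zeta_\tau\varepsilon_\tau^2/(\varepsilon - \varepsilon_\tau)^2 = \rho(\delta,\tau)/2$, which contradicts the spectral stopping condition $\lambda_{\min}(V_\tau) \ge \rho(\delta,\tau)/\|\hat{\mu}_\tau\|^2$. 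Both cases yield a contradiction, so the misidentification event is contained in $\mathcal{E}^c$, whence $\mathbb{P}_\mu(\tau < \infty,\ \mu^\top(a^\star_\mu - \hat{a}_\tau) > \varepsilon) \le \mathbb{P}_\mu(\mathcal{E}^c) \le \delta$.
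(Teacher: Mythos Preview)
Your strategy is essentially the paper's: a time-indexed concentration event controlled by the self-normalized martingale bound, plus sphere geometry to show that the spectral condition $\lambda_{\min}(V_\tau)\ge \rho(\delta,\tau)/\|\hat\mu_\tau\|^2$ forces $\hat\mu_\tau^\top(\hat a_\tau-a^\star_\mu)\ge\varepsilon_\tau$ under concentration (your Case B is the contrapositive of this), and then the GLLR at $b=a^\star_\mu$ closes Case A. The paper organizes this as two events each costing $\delta/2$, whereas you use a single event and an explicit case split; both are fine.

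There is, however, a technical slip in your definition of $\mathcal{E}$. The Abbasi-Yadkori bound controls $\big\|\sum_{s\le t}a_s\eta_s\big\|_{(V_t+cI)^{-1}}^2$, not $\|\hat\mu_t-\mu\|_{V_t+cI}^2$. When $V_t$ is invertible these are $\|S_t\|^2_{(V_t+cI)^{-1}}$ versus $S_t^\top V_t^{-1}(V_t+cI)V_t^{-1}S_t=\|S_t\|^2_{V_t^{-1}}+c\|S_t\|^2_{V_t^{-2}}$, and the latter is the larger quantity, so you cannot directly conclude $\mathbb{P}(\mathcal{E}^c)\le\delta$ for your $\mathcal{E}$. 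Redefine $\mathcal{E}=\bigcap_{t\ge 1}\{\|S_t\|^2_{(V_t+cI)^{-1}}\le 2\sigma^2\zeta_t\}$; then Abbasi-Yadkori at confidence $\delta_t/2$ (this is exactly what the $\log 2$ in $\zeta_t$ buys) and the union bound give $\mathbb{P}(\mathcal{E}^c)\le\delta/2$. At stopping, $V_\tau\succeq cI$ implies $V_\tau^{-1}\preceq 2(V_\tau+cI)^{-1}$, so $\|\hat\mu_\tau-\mu\|_{V_\tau}^2=\|S_\tau\|^2_{V_\tau^{-1}}\le 4\sigma^2\zeta_\tau$ and $\|\hat\mu_\tau-\mu\|^2\le 4\sigma^2\zeta_\tau/\lambda_{\min}(V_\tau)$. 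In Case A your constant $\sqrt{2}$ becomes $2$, and the contradiction $2\sigma\sqrt{\zeta_\tau}\|v\|_{V_\tau^{-1}}-\varepsilon+\varepsilon_\tau>2\sigma\sqrt{\zeta_\tau}\|v\|_{V_\tau^{-1}}$ still holds (strictly, since $\mu^\top v<-\varepsilon$ is strict). In Case B you now get $\|\hat\mu_\tau\|^2\lambda_{\min}(V_\tau)<\rho(\delta,\tau)$ strictly, again contradicting the stopping condition. With this correction your argument is complete.
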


\subsection{Sample complexity analysis}

Under specific choices for the sequence $(\varepsilon_t)_{t \ge 1}$, we can analyze the sample complexity of our algorithm, and show its optimality order-wise.

\begin{theorem}\label{thm:sc-cont}
Choose in the stopping rule $\varepsilon_t = \varepsilon \big( 1 +  \varepsilon (  4\sigma^2 \log( \frac{4}{\delta_t} \left\lceil  \frac{t}{d}\right\rceil  ) )^{-1/2} \big)^{-1}$ (observe that $\varepsilon_t < \varepsilon$ and $\lim_{t\to\infty}\varepsilon_t =\varepsilon$). Then under the aforementioned sampling rule, and the stopping rule (\ref{eq:stopc1})-(\ref{eq:stopc2}), we have : $   \mathbb{P}\left(\limsup_{\delta \to 0} \frac{\tau}{\log(1/\delta)} \lesssim \frac{\sigma^2 d}{\Vert \mu\Vert \varepsilon }\right) = 1$ and $ \limsup_{\delta \to 0} \frac{\E[\tau]}{\log(1/\delta)} \lesssim \frac{\sigma^2 d }{\Vert \mu \Vert \varepsilon}$.
\end{theorem}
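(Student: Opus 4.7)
The plan is to first establish the almost sure asymptotic rate and then upgrade to the bound in expectation by integrating the tails. Since the sampling rule is deterministic round-robin over the orthonormal basis $\mathcal{U}$, the covariates matrix satisfies $\sum_{s=1}^{t} a_s a_s^\top = \sum_{i=1}^{d} n_i(t)\, u_i u_i^\top$ with $n_i(t) \in \{\lfloor t/d\rfloor, \lceil t/d\rceil\}$, so in particular $\lambda_{\min}\bigl(\sum_{s=1}^{t} a_s a_s^\top\bigr) \ge \lfloor t/d\rfloor$. Applying Lemma~\ref{lem:ls} with $\alpha=1$ yields $\hat{\mu}_t \to \mu$ almost surely, and Proposition~\ref{prop:stopping rule2} guarantees $(\varepsilon,\delta)$-PAC correctness of the resulting algorithm. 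It remains to control $\tau$.

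The key computation is an explicit asymptotic evaluation of $Z(t)$. Using the closed form from Lemma~\ref{lem:GLLR closed form} with $M_t = \sum_{s=1}^{t} a_s a_s^\top$ and $\hat{a}_t = \hat{\mu}_t/\|\hat{\mu}_t\|$, I parametrize each competitor $b \in S^{d-1}$ as $b = \cos\theta\,\hat{a}_t + \sin\theta\, v$ with $v \perp \hat{a}_t$, $\|v\|=1$. Writing $\alpha = 1 - \cos\theta \in [0,2]$, one obtains $\hat{\mu}_t^\top(\hat{a}_t - b) = \alpha\|\hat{\mu}_t\|$ and $\|\hat{a}_t - b\|^2 = 2\alpha$. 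The constraint $|\hat{\mu}_t^\top(\hat{a}_t-b)| \ge \varepsilon_t$ becomes $\alpha \ge \varepsilon_t/\|\hat{\mu}_t\|$, and, since $M_t \succeq \lfloor t/d\rfloor I_d$ while also $M_t \preceq \lceil t/d\rceil I_d$, the inner minimization reduces asymptotically to minimizing $(\alpha\|\hat{\mu}_t\| + \varepsilon_t)^2/\alpha$ subject to $\alpha \ge \varepsilon_t/\|\hat{\mu}_t\|$. The minimum is attained at $\alpha^\star = \varepsilon_t/\|\hat{\mu}_t\|$ and equals $4\|\hat{\mu}_t\|\varepsilon_t$, so $Z(t) = (1+o(1)) \, t\|\hat{\mu}_t\|\varepsilon_t/d$. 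Substituting into $Z(t) \ge 2\sigma^2\zeta_t$ and using $\hat{\mu}_t \to \mu$, $\varepsilon_t \to \varepsilon$, this first stopping condition is met as soon as $t \ge (1+o(1))\, 2d\sigma^2\zeta_t/(\|\mu\|\varepsilon)$.

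For the second stopping condition $\lambda_{\min}(M_t) \ge \rho(\delta,t)/\|\hat{\mu}_t\|^2$, the specific $\varepsilon_t$ prescribed in the theorem satisfies $\varepsilon_t/(\varepsilon - \varepsilon_t) = A/\varepsilon$ where $A = \sqrt{4\sigma^2 \log(4\lceil t/d\rceil/\delta_t)}$, which is precisely the balance needed so that $\rho(\delta,t)/\|\hat{\mu}_t\|^2$ is of the same order as the threshold coming from condition one. Using $\lambda_{\min}(M_t) \ge \lfloor t/d\rfloor$ and the facts that $\zeta_t = \log(1/\delta_t) + O(d\log(t/d))$ dominates as $\log(1/\delta)$ for $t$ polylogarithmic in $1/\delta$ (the log-determinant contribution is lower-order thanks to the orthonormal basis), one checks that both conditions are active by the same asymptotic time $\tau_0(\delta) \sim d\sigma^2 \log(1/\delta)/(\|\mu\|\varepsilon)$, which establishes the almost sure claim after dividing by $\log(1/\delta)$ and taking $\limsup_{\delta \to 0}$.

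For the expected sample complexity, I would write $\E[\tau] = \int_0^\infty \mathbb{P}(\tau > t)\,dt$ and bound $\mathbb{P}(\tau > t)$ beyond the deterministic threshold $\tau_0(\delta)$ by the probability that $\hat{\mu}_t$ has not yet concentrated enough for either stopping condition to be triggered. Lemma~\ref{lem:ls concentration} applied with $\alpha = 1$ and $c$ of order $1/d$ gives an exponentially decaying tail $\mathbb{P}(\|\hat{\mu}_t - \mu\| > \varepsilon') \lesssim \exp(-c' \varepsilon'^2 t/\sigma^2)$, which is integrable and contributes only a constant additive term, preserving the $\log(1/\delta)$ scaling. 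The main technical obstacle, I expect, will be showing that the two stopping conditions really do become active together under the prescribed $\varepsilon_t$ — that is, controlling carefully the joint effect of (i) the $O(1)$ spectral spread of $M_t$ around $(t/d)I_d$, (ii) the convergence rate of $\hat{\mu}_t$ to $\mu$ and of $\varepsilon_t$ to $\varepsilon$, and (iii) the exact form of $\zeta_t$ so that the logarithmic factors entering through $A^2$ and $\zeta_t$ combine to give the claimed linear-in-$\log(1/\delta)$ rate rather than a worse one.
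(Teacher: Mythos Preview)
Your proposal is correct and follows essentially the same route as the paper: both arguments use the round-robin sampling to pin $\sum_{s\le t}a_sa_s^\top$ between $\lfloor t/d\rfloor I_d$ and $\lceil t/d\rceil I_d$, both lower-bound $Z(t)$ via the sphere identity $\hat{\mu}_t^\top(\hat a_t-b)=\tfrac{\|\hat\mu_t\|}{2}\|\hat a_t-b\|^2$ to obtain $Z(t)\gtrsim (t/d)\|\hat\mu_t\|\varepsilon_t$, both plug in the specific $\varepsilon_t$ to collapse the stopping conditions, and both pass to the expected bound by summing (or integrating) tail probabilities controlled through Lemma~\ref{lem:ls concentration}. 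Your angular parametrization $b=\cos\theta\,\hat a_t+\sin\theta\,v$ is just a coordinate version of the paper's algebraic use of \eqref{eq:sphere}, and your anticipated ``main technical obstacle'' (the joint activation of the two stopping conditions under the prescribed $\varepsilon_t$) is exactly the step the paper's proof handles most tersely.
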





\section{Experiments}

We present here a few experimental results comparing the performance of our algorithm to that of RAGE, the state-of-the-art algorithm \cite{fiez2019sequential}, in the case of finite set of arms. We compare Lazy TS and RAGE only because they outperform other existing algorithms. Further experimental results can be found in Appendix A.

{\bf Experimental set-up.} We use the following toy experiment which corresponds to the many arms example in \cite{fiez2019sequential}. $d=2$ and ${\cal A}=\{(1,0),e^{j3\pi/4},e^{j(\pi/4+\phi_i)}, i\in [n-2]\}\subset \mathbb{C}$ where $(\phi_i)$ are i.i.d. $\sim {\cal N}(0,0.09)$. $\mu=(1,0)$. Experiments are made with the risk $\delta= 0.05$.

\paragraph{Implementation of Lazy TS.} To update the allocation $w(t)$, we use Frank Wolf algorithm (without any rounding procedure). At each update, the previous allocation is fed as an initial value for the new optimization problem. We implement the exponential lazy update scheme $\mathcal{T} = \{2^k, k\in\mathbb{N} \}$. The parameters of our stopping rule are $c= c_{\mathcal{A}_0}\sqrt{d}$ (so that after $d$ steps the second condition of the stopping rule is satisfied) and $u=1$; we use the threshold $\beta(6\delta /\pi^2t^2,t)$. The initial exploration matrix $\mathcal{A}_0$ is chosen at random. We implemented two versions of Lazy TS. The first one does not track the average but only the current allocation $w(t)$: $a\leftarrow\argmin_{b\in \textrm{supp}(w(t))} (N_b(t) - t w_b(t))$. The second version tracks the average allocations as described in Algorithm \ref{algo:LTS}.

We further compare our results to that of the Oracle algorithm proposed by \cite{soare2014best}. The algorithm samples from a true optimal allocation $w^\star\in C^\star(\mu)$, and applies  a stopping rule that depends on $K$.

\paragraph{Results.} From the table below, Lazy TS outperforms RAGE most of the times, and the performance improvement gets higher when the number of arms grows. Lazy TS without averaging shows better performance, than with averaging. In Appendix A, we present results for another version of Lazy TS, with even better performance.

\begin{table}[h!]
\label{tab:results}
\begin{center}
\resizebox{\columnwidth}{!}{
\begin{tabular}{ccccccccc}
\toprule
\multirow{2}{*}{\textbf{Algorithm}} &  \multicolumn{2}{c}{\textbf{Lazy TS (No averaging)}} & \multicolumn{2}{c}{\textbf{Lazy TS}} & \multicolumn{2}{c}{\textbf{RAGE}}   &  \multicolumn{2}{c}{\textbf{Oracle}}\\
& \multicolumn{2}{c}{\footnotesize\textbf{Sample Complexity}} & \multicolumn{2}{c}{\footnotesize\textbf{Sample Complexity}} & \multicolumn{2}{c}{\footnotesize \textbf{Sample Complexity}}   &  \multicolumn{2}{c}{\footnotesize\textbf{Sample Complexity}} \\
\cmidrule(r){1-1}
\cmidrule(r){2-3}
\cmidrule(r){4-5}
\cmidrule(r){6-7}
\cmidrule(r){8-9}
Number of arms & Mean & (Std) &  Mean & (Std) & Mean &  (Std)  & Mean & (Std)   \\
\midrule
$(K=1000)$        & 1206.55 & (42.2)   &  1409 & (57)    &\textbf{1148.45}  & (49.82)   & 476.45  & (40.74)  \\
$(K=2500)$        & \textbf{1253.60}  & (47.70)  & 1404 & (57)     &1440.75 & (149.24)  & 492.15  & (43.88)  \\
$(K=5000)$        & \textbf{1247.05} & (81.07)  &  1401 &  (86)  &  1540.3  & (158.90)  & 515.60   & (47.64)  \\
$(K=7500)$        & \textbf{1296.55} & (76.78)  &  1434 &  (78)   &       1598.0  & (164.60)  & 547.65  & (45.77)  \\
\bottomrule
\end{tabular}%
}
\end{center}
\caption{Results for the many arms experiment \cite{fiez2019sequential}}
\end{table}

\section{Conclusion}

In this paper, we present Lazy TS, an algorithm to solve the best-arm identification problem in stochastic linear bandits. The sampling rule of the algorithm just tracks the optimal allocation predicted by the sample complexity lower bound. Its stopping rule is defined through generalized log-likelihood ratio and an exploration threshold that does not depend on the number of arms, but on the ambient dimension only. Lazy TS is asymptotically optimal: we have guarantees on its sample complexity, both almost surely and in expectation. The first experimental results are very promising, as Lazy TS seems to exhibit a much better sample complexity than existing algorithms. We also provide the first results on the pure exploration problem in the linear bandits with a continuous set of arms. 

The analysis presented in this paper suggests several extensions. We can easily generalize the results to non-Gaussian reward distributions (e.g. bounded, from a one-parameter exponential family). It would be interesting to extend our results in the continuous setting to generic convex sets of arms (we believe that the instance-specific sample complexity lower bound would just depend on the local smoothness of the set of arms around the best arm). A more challenging but exciting question is to derive tight non-asymptotic sample complexity upper bound for Lazy TS, so as to characterize the trade-off between the laziness of the algorithm and its sample complexity.

\newpage
\bibliographystyle{unsrt}
\bibliography{references,bandit}

\begin{thebibliography}{10}

\bibitem{auer2003}
Peter Auer.
\newblock Using confidence bounds for exploitation-exploration trade-offs.
\newblock {\em J. Mach. Learn. Res.}, 3(null):397?422, March 2003.

\bibitem{dani2008stochastic}
Varsha Dani, Thomas Hayes, and Sham Kakade.
\newblock Stochastic linear optimization under bandit feedback.
\newblock In {\em COLT}, 2008.

\bibitem{robbins1952}
Herbert Robbins.
\newblock Some aspects of the sequential design of experiments.
\newblock {\em Bull. Amer. Math. Soc.}, 58(5):527--535, 09 1952.

\bibitem{lai1985}
Tze~Leung Lai and Herbert Robbins.
\newblock Asymptotically efficient adaptive allocation rules.
\newblock {\em Advances in Applied Mathematics}, 6(1):4--22, 1985.

\bibitem{li2010}
Lihong Li, Wei Chu, John Langford, and Robert~E. Schapire.
\newblock A contextual-bandit approach to personalized news article
  recommendation.
\newblock In {\em Proceedings of the 19th International Conference on World
  Wide Web}, WWW, pages 661--670, New York, NY, USA, 2010.

\bibitem{chu2011}
Wei Chu, Lihong Li, Lev Reyzin, and Robert Schapire.
\newblock Contextual bandits with linear payoff functions.
\newblock In Geoffrey Gordon, David Dunson, and Miroslav Dud\'uk, editors, {\em
  Proceedings of the Fourteenth International Conference on Artificial
  Intelligence and Statistics}, volume~15 of {\em Proceedings of Machine
  Learning Research}, pages 208--214, Fort Lauderdale, FL, USA, 11--13 Apr
  2011. PMLR.

\bibitem{li2016collaborative}
Shuai Li, Alexandros Karatzoglou, and Claudio Gentile.
\newblock Collaborative filtering bandits.
\newblock In {\em Proceedings of the 39th International ACM SIGIR conference on
  Research and Development in Information Retrieval}, pages 539--548, 2016.

\bibitem{paat2010}
Paat Rusmevichientong and John Tsitsiklis.
\newblock Linearly parameterized bandits.
\newblock {\em Math. Oper. Res.}, 35(2), 2010.

\bibitem{abbasi2011improved}
Yasin Abbasi-Yadkori, D{\'a}vid P{\'a}l, and Csaba Szepesv{\'a}ri.
\newblock Improved algorithms for linear stochastic bandits.
\newblock In {\em Advances in Neural Information Processing Systems}, pages
  2312--2320, 2011.

\bibitem{lattimore2016end}
Tor Lattimore and Csaba Szepesvari.
\newblock The end of optimism? an asymptotic analysis of finite-armed linear
  bandits.
\newblock {\em AISTATS}, 2016.

\bibitem{combes2017}
Richard Combes, Stefan Magureanu, and Alexandre Proutiere.
\newblock Minimal exploration in structured stochastic bandits.
\newblock In I.~Guyon, U.~V. Luxburg, S.~Bengio, H.~Wallach, R.~Fergus,
  S.~Vishwanathan, and R.~Garnett, editors, {\em Advances in Neural Information
  Processing Systems 30}, pages 1763--1771. Curran Associates, Inc., 2017.

\bibitem{soare2014best}
Marta Soare, Alessandro Lazaric, and R{\'e}mi Munos.
\newblock Best-arm identification in linear bandits.
\newblock In {\em Advances in Neural Information Processing Systems}, pages
  828--836, 2014.

\bibitem{karnin2016verification}
Zohar~S Karnin.
\newblock Verification based solution for structured mab problems.
\newblock In {\em Advances in Neural Information Processing Systems}, pages
  145--153, 2016.

\bibitem{xu2017}
Liyuan Xu, Junya Honda, and Masashi Sugiyama.
\newblock A fully adaptive algorithm for pure exploration in linear bandits.
\newblock In Amos Storkey and Fernando Perez-Cruz, editors, {\em Proceedings of
  the Twenty-First International Conference on Artificial Intelligence and
  Statistics}, volume~84 of {\em Proceedings of Machine Learning Research},
  pages 843--851, Playa Blanca, Lanzarote, Canary Islands, 09--11 Apr 2018.
  PMLR.

\bibitem{tao18best}
Chao Tao, Sa{\'u}l Blanco, and Yuan Zhou.
\newblock Best arm identification in linear bandits with linear dimension
  dependency.
\newblock In Jennifer Dy and Andreas Krause, editors, {\em Proceedings of the
  35th International Conference on Machine Learning}, volume~80 of {\em
  Proceedings of Machine Learning Research}, pages 4877--4886,
  Stockholmsmässan, Stockholm Sweden, 10--15 Jul 2018. PMLR.

\bibitem{fiez2019sequential}
Tanner Fiez, Lalit Jain, Kevin~G Jamieson, and Lillian Ratliff.
\newblock Sequential experimental design for transductive linear bandits.
\newblock In {\em Advances in Neural Information Processing Systems}, pages
  10666--10676, 2019.

\bibitem{soare2015thesis}
Marta Soare.
\newblock {\em {Sequential Resource Allocation in Linear Stochastic Bandits }}.
\newblock Theses, {Universit{\'e} Lille 1 - Sciences et Technologies}, December
  2015.

\bibitem{garivier2016optimal}
Aur{\'e}lien Garivier and Emilie Kaufmann.
\newblock Optimal best arm identification with fixed confidence.
\newblock In {\em Conference on Learning Theory}, pages 998--1027, 2016.

\bibitem{kaufmann2016complexity}
Emilie Kaufmann, Olivier Capp{\'e}, and Aur{\'e}lien Garivier.
\newblock On the complexity of best-arm identification in multi-armed bandit
  models.
\newblock {\em The Journal of Machine Learning Research}, 17(1):1--42, 2016.

\bibitem{sundaram1996first}
Rangarajan~K Sundaram et~al.
\newblock {\em A first course in optimization theory}.
\newblock Cambridge university press, 1996.

\bibitem{pena2008self}
Victor~H Pe{\~n}a, Tze~Leung Lai, and Qi-Man Shao.
\newblock {\em Self-normalized processes: Limit theory and Statistical
  Applications}.
\newblock Springer Science \& Business Media, 2008.

\bibitem{sarkar2018near}
Tuhin Sarkar and Alexander Rakhlin.
\newblock Near optimal finite time identification of arbitrary linear dynamical
  systems.
\newblock {\em arXiv preprint arXiv:1812.01251}, 2018.

\bibitem{natalini2000inequalities}
Pierpaolo Natalini and Biagio Palumbo.
\newblock Inequalities for the incomplete gamma function.
\newblock {\em Math. Inequal. Appl}, 3(1):69--77, 2000.

\bibitem{borwein2009uniform}
Jonathan~M Borwein, O-Yeat Chan, et~al.
\newblock Uniform bounds for the complementary incomplete gamma function.
\newblock {\em Mathematical Inequalities and Applications}, 12:115--121, 2009.

\end{thebibliography}

\newpage

\appendix
\section{Numerical experiments}

This section provides additional numerical results, and comparisons of Lazy TS and RAGE. We actually present the results of a slightly different version of Lazy TS than that considered in the main document (see details below). This new version exhibits much better performance.

\subsection{Experimental set-up}

{\bf The many arms example.} We use the same problems as those reported in the main document. Namely, the following toy experiment that corresponds to the many arms example in \cite{fiez2019sequential}. $d=2$ and ${\cal A}=\{(1,0),e^{j3\pi/4},e^{j(\pi/4+\phi_i)}, i\in [n-2]\}\subset \mathbb{C}$ where $(\phi_i)$ are i.i.d. $\sim {\cal N}(0,0.09)$. $\mu=(1,0)$. Experiments are made with the risk $\delta= 0.05$.

\textbf{Implementation of Lazy TS.} Our implementation for the following results is almost the same as the one described in Section 5. The only difference lies in the stopping rule: we use improved constants when defining the threshold \eqref{eq:rate}. The new constant is $u=0.1$ (before it was set to $1$), and the threshold is $\beta(\delta,t)$ (before we were using $\beta(\delta6/(\pi t)^2,t)$).

All experiments were executed on a stationary desktop computer, featuring an Intel Xeon Silver 4110 CPU, 48GB of RAM. Ubuntu 18.04 was installed on the computer. We set up our experiments using Python 3.7.7. The code is available at the following link \url{https://www.dropbox.com/s/xqj7h7jbw7rb95v/code_lazy_ts.zip?dl=0}.

\subsection{Results}

\textbf{Sample complexity.} The results on the sample complexity are reported in Table 2. Lazy TS with and without averaging significantly outperforms RAGE \cite{fiez2019sequential} and even the Oracle \cite{soare2014best}. At first, it seems surprising that the Oracle is beaten by Lazy TS, but this can be explained as follows. Even if the Oracle is aware, from the beginning, of the optimal sampling rule, its stopping rule is not efficient and depends on the number of arms $K$. The stopping rule in Lazy TS is independent of $K$, and indeed, Lazy TS performance is less sensitive to the number of arms than that of RAGE or the Oracle. The results also suggest that the Lazy TS algorithms with or without averaging perform similarly. As a final note all algorithms ended with success over all simulations.

\textbf{Run-time.} The run-time of Lazy TS and RAGE are reported in Table 3. Overall, both algorithms are efficient. We note that RAGE is slightly faster. However we expect that for extremely large numbers of arms, Lazy TS would run faster than RAGE (the sample complexity of Lazy TS is more resilient to an increase in the number of arms). In Lazy TS, we have used the exponential lazy update scheme with $\mathcal{T}=\lbrace 2^k: k \in\mathbb{N}^*\rbrace$. We believe that by fine-tuning this laziness, we would be able achieve a better trade-off between computational efficiency and sample complexity.

\textbf{Support of Lazy TS.} Finally, we study the support of the allocation chosen under Lazy TS. The expected size of the support of Lazy TS on a single run is reported in Table 4. Even if the number of arms $K$ is large (in comparison with the ambient dimension), Lazy TS only tracks allocations that are sparse, i.e. using very few arms. We further note that the averaging scheme in the tracking rule does not really affect the support. This is a nice feature as it could allow for the design of a more memory-efficient algorithm.

\newpage

\begin{table}[h!]
\label{tab:results1}
\begin{center}
\resizebox{\columnwidth}{!}{
\begin{tabular}{ccccccccc}
\toprule
\multirow{2}{*}{\textbf{Algorithm}} &  \multicolumn{2}{c}{\textbf{Lazy TS }} & \multicolumn{2}{c}{\textbf{Lazy TS (No averaging)}} & \multicolumn{2}{c}{\textbf{RAGE}}   &  \multicolumn{2}{c}{\textbf{Oracle}}\\
& \multicolumn{2}{c}{\footnotesize\textbf{Sample Complexity}} & \multicolumn{2}{c}{\footnotesize\textbf{Sample Complexity}} & \multicolumn{2}{c}{\footnotesize \textbf{Sample Complexity}}   &  \multicolumn{2}{c}{\footnotesize\textbf{Sample Complexity}} \\
\cmidrule(r){1-1}
\cmidrule(r){2-3}
\cmidrule(r){4-5}
\cmidrule(r){6-7}
\cmidrule(r){8-9}
Number of arms & Mean & (Std) &  Mean & (Std) & Mean &  (Std)  & Mean & (Std)   \\
\midrule
$(K=1000)$      & \textbf{424.5}   & (29.1)  & \textbf{424.5}  & (29.1)  & 1148.45 & (49.82)   & 476.45  &  (40.7)   \\
$(K=2500)$      & 458.15  & (28.1)  & \textbf{455.95} & (28.3)  & 1440.75 & (149.24)  & 492.15  &   (43.9)  \\
$(K=5000)$      & 434.65  & (32.51) & \textbf{433.6}  & (32.6)  & 1540.3  & (158.9)   & 515.6   &   (47.6)  \\
$(K=7500)$      & 448.0   & (36.9)  & \textbf{447.45} & (36.8)  & 1598.0  & (164.6)   & 547.65  &   (45.8)  \\
$(K=10000)$     & \textbf{452.85}  & (31.6)  & 452.95 & (31.6)  & 1479.4  & (52.0)    & 564.85  &   (46.9) \\
\bottomrule
\end{tabular}%
}
\end{center}
\caption{Sample complexity. Results for the many arms experiment \cite{fiez2019sequential}}
\end{table}

\begin{table}[h!]
\label{tab:results2}
\begin{center}
\begin{tabular}{ccccccccc}
\toprule
\multirow{2}{*}{\textbf{Algorithm}} &  \multicolumn{2}{c}{\textbf{Lazy TS}} & \multicolumn{2}{c}{\textbf{Lazy TS (No averaging)}} & \multicolumn{2}{c}{\textbf{RAGE}}\\
& \multicolumn{2}{c}{\footnotesize\textbf{Run time (s)}} & \multicolumn{2}{c}{\footnotesize\textbf{Run time (s)}} & \multicolumn{2}{c}{\footnotesize \textbf{Rune time (s)}}   \\
\cmidrule(r){1-1}
\cmidrule(r){2-3}
\cmidrule(r){4-5}
\cmidrule(r){6-7}
Number of arms & Mean & (Std) &  Mean & (Std) & Mean &  (Std)   \\
\midrule
$(K=1000)$        & 13.62 & (0.5)     & 13.99   &  (0.5)   & 34.0  & (0.5)        \\
$(K=2500)$        & 90.25 & (2.9)     & 89.41   &  (3.1)   & 156.42  & (1.1)      \\
$(K=5000)$        & 940.97 & (40.4)   & 948.86  &  (40.3)  & 429.67  & (7.47)     \\
$(K=7500)$        & 1340.83 & (61.5)  & 1349.90 &  (61.4)  & 707.09  & (9.47)     \\
$(K=10000)$       & 1893.73 & (79.9)  & 1915.03 &  (80.3)  & 1575.30  & (12.43)   \\
\bottomrule
\end{tabular}%
\end{center}
\caption{Runtime. Results for the many arms experiment \cite{fiez2019sequential}}
\end{table}

\begin{table}[h!]
\label{tab:results3}
\begin{center}
\begin{tabular}{ccccccccc}
\toprule
\multirow{2}{*}{\textbf{Algorithm}} &  \multicolumn{2}{c}{\textbf{Lazy TS }} & \multicolumn{2}{c}{\textbf{Lazy TS (No averaging)}}\\
& \multicolumn{2}{c}{\footnotesize\textbf{Support size}} & \multicolumn{2}{c}{\footnotesize\textbf{Support size}}  \\
\cmidrule(r){1-1}
\cmidrule(r){2-3}
\cmidrule(r){4-5}
\cmidrule(r){6-7}
\cmidrule(r){8-9}
Number of arms & Mean & (Std) &  Mean & (Std)    \\
\midrule
$(K=1000)$        & 5.37 & (0.25)   & 2.04  &  (0)  \\
$(K=2500)$        & 5.72 & (0.20)   & 2.03  &  (0)  \\
$(K=5000)$        & 5.41 & (0.20)   & 2.04  &  (0)  \\
$(K=7500)$        & 5.34 & (0.20)   & 2.03  &  (0)  \\
$(K=10000)$       & 5.26 & (0.21)   & 2.04  &  (0)  \\
\bottomrule
\end{tabular}%
\end{center}
\caption{Support size. Results for the many arms experiment \cite{fiez2019sequential}. For the standard deviation, we put $(0)$ when the value is smaller than $10^{-2}$.}
\end{table}

\newpage

\clearpage
\newpage


\newpage
\section{Properties of $\psi$}

\subsection{Proof of Lemma \ref{lem:continuity0}}

Let $(\mu,w) \in \mathbb{R}^d \times \Lambda$ such that $a^\star_\mu$ is unique. For the first part of the claim we refer to the proof of \cite[Theorem 3.1.]{soare2015thesis}. Now let us prove the continuity of $\psi$ at $(\mu,w)$. Consider the set of bad parameters with respect to $\mu$, $B(\mu) \subseteq \mathbb{R}^d$
$$
B(\mu) = \left \lbrace \lambda: \lambda \in \mathbb{R}^d \textrm{ and } \exists a \in \mathcal{A}\backslash \lbrace a^\star_\mu \rbrace \; \lambda^\top( a - a^\star_\mu) > 0 \right\rbrace,
$$
and denote
$$
f(\mu,\lambda, w) = \frac{1}{2}(\mu- \lambda)^\top \left( \sum_{a \in \mathcal{A}}w_a a a^\top \right) (\mu - \lambda).
$$
Let  $(\mu_t, w_t)_{t \ge 1}$ be a sequence taking values in $\mathbb{R}^d \times \Lambda$ and converging to $(\mu, w)$.
 Let $\varepsilon < 1 \wedge \min_{a \in \mathcal{A} \backslash \lbrace a^\star_\mu \rbrace}  \frac{\langle \mu, a^*_\mu - a \rangle}{\Vert a^\star_\mu - a \Vert}$, and let $t_1 \ge 1$ such that for all $t\ge t_1$ we have $\Vert (\mu_t, w_t) - (\mu, w)\Vert < \varepsilon$. Now, by our choice of $\varepsilon$,
 and uniqueness of $a^\star_\mu$ it holds that $B(\mu_t) = B(\mu)$. Furthermore, note that $f(\mu, \lambda, w)$ is a polynomial in $\mu, \lambda, w$, thus it is in inparticular continuous in $\mu, w$, and there exists $t_2 \ge 1$ such that for all $t \ge t_2$ and for all $\lambda \in \mathbb{R}^d$, it holds that $\vert f(\mu_t, \lambda, w_t) - f(\mu, \lambda, \mu_t) \vert \le \varepsilon f(\mu, \lambda, \mu_t)$.
 Hence, with our choice of $\varepsilon$, we have for all $t\ge t_1\vee t_2$
\begin{align*}
    \vert \psi(\mu, w) - \psi(\mu_t, w_t)\vert  &  = \Big \vert \min_{\lambda \in B(\mu)} f(\mu, \lambda, w) - \min_{\lambda \in B(\mu)} f(\mu_t, \lambda, w_t) \Big\vert \\
    & \le  \varepsilon \Big \vert \min_{\lambda \in B(\mu)} f(\mu, \lambda, w) \Big \vert \\
    & \le \varepsilon \vert \psi(\mu, w) \vert.
\end{align*}
This concludes the proof of the continuity of $\psi$.

Now, we know that $w \mapsto \psi(\mu, w)$ is continuous on $\Lambda$, and by compactness of the simplex, the maximum is attained at some $w^\star_\mu \in \Lambda$. Furthermore, since $\mathcal{A}$ spans $\mathbb{R}^d$, we may construct an allocation $\tilde{w}$ such that $\sum_{a \in \mathcal{A}} \tilde{w}_a a a^\top$ is a positive definite matrix.
In addition, by construction of $B(\lambda)$, there exists some $M>0$ such that for all $\lambda \in B(\mu)$ we have  $\Vert \mu - \lambda\Vert > M$, which implies that $\psi(\mu, \tilde{w}) \ge M^2 \lambda_{\min}\left(\sum_{a \in \mathcal{A}} \tilde{w}_a aa^\top\right) > 0$.
On the other for any allocation $w \in \Lambda$ such that $\sum_{a\in\mathcal{A}} w_a a a^\top$ is rank deficient, we may find a $\lambda \in B(\mu)$ where $\lambda -\mu$ is in the null space of $\sum_{a\in\mathcal{A}} w_a a a^\top$. Therefore, $\sum_{a \in \mathcal{A}} (w^\star_\mu)_a a a^\top$ is invertible \ep

\subsection{Proof of Lemma \ref{lem:continuity}}

The lemma is a direct consequence of the maximum theorem (a.k.a. Berge's theorem) \cite{sundaram1996first} and only requires that $\psi$ is continuous in $(\mu, w) \in \mathbb{R}^d \times \Lambda$, that $\Lambda$ is compact, convex and non-empty, and that $\psi$ is concave in $w$ for each $\mu' \in \mathbb{R}^d$ in an open neighberhood of $\mu$. These requirements hold naturally in our setting:
(i) by Lemma \ref{lem:continuity0}, we have for all $\mu\in \mathbb{R}^d$ such that $a^\star_\mu$ is unique and for any $w \in \Lambda$, $\psi$ is continuous in $(\mu,w)$; (ii) $\Lambda$ is a non-empty, compact and convex set; (iii) for all $\mu\in \mathbb{R}^d$, $w \mapsto \psi(\mu, w)$ is concave as it can be expressed as the infimum of linear functions in $w$.
Therefore, the maximum theorem applies and we obtain the desired results. \ep

\newpage
\section{Least Squares Estimator}
In this appendix, we present concentration bounds and convergence statements on the least squares estimator. We may recall that the least squares estimation error $\hat{\mu}_t - \mu$ can be expressed conveniently in the following form\footnote{We mean by $A^{-1}$ the pseudo-inverse of $A$ when the matrix is not invertible.}: $\hat{\mu}_t - \mu = (\sum_{s=1}^t a_s a_s^\top)^{-1}(\sum_{s=1}^t a_s \eta_s)$. To make notations less cluttered, we prefer to express our derivations in matrix form where we define the covariates matrix $A_t = \begin{bmatrix}
  a_1 & \dots & a_t
\end{bmatrix}^\top$ and noise vector $E_t = \begin{bmatrix} \eta_1 & \dots & \eta_t \end{bmatrix}^\top$. We may then write $\hat{\mu}_t - \mu = (A_t^\top A_t)^{-1}(A_t^\top E_t)$. Furthermore, we will reapeatedly use the following decoposition
\begin{equation} \label{eq:ls decomposition}
\Vert \hat{\mu}_t - \mu \Vert  = \Vert (A_t^\top A_t)^{-1} (A_t^\top E_t) \Vert \le \Vert A_t^{\top}E_t \Vert_{(A_t^\top A_t)^{-1}} \Vert (A_t^\top A_t)^{-1/2} \Vert
\end{equation}
where we have $\Vert x \Vert_A = \sqrt{x^\top A x}$ for some semi-definite positive matrix $A$. The above inequality follows from Cauchy-Schwarz inequality. We also observe that when $A_t^\top A_t$ is invertible, we have $\Vert (A_t^\top A_t)^{-1/2} \Vert = \lambda_{\min}(A_t^\top A_t)^{-1/2}$.

\subsection{Self-Normalized processes}
We first present convenient tools from the theory of self-normalized processes \cite{pena2008self}, namely the deviation bounds established by Abbasi-Yadkouri et al. in \cite{abbasi2011improved}.
\begin{proposition}[Theorem 1. in \cite{abbasi2011improved}]\label{prop:self concentration}
  Let $ (\mathcal{F}_t )_{t \ge 0} $ be a filtration. Let $\lbrace \eta_t \rbrace_{t \ge 1}$ be a real-valued stochastic process such that for all $t \ge 1$, $\eta_t$ is $\mathcal{F}_{t-1}$-measurable and satisfies with some postive $\sigma$, the conditional $\sigma$-sub-gaussian condition:
  $
  \E\left[\exp(x \eta_t) \vert \mathcal{F}_{t-1} \right] \le \exp\left (- x^2\sigma^2/2\right),
  $ for all $x \in \mathbb{R}$.
  Let $(a_t )_{t \ge 1}$ be an $\mathbb{R}^d$-valued stochastic process adapted to $\lbrace \mathcal{F}_t \rbrace_{t \ge 0}$. Furthermore, let $V$ be a positive definite matrix. Then for all $\delta \in (0,1)$ we have
  \begin{equation*}
  \mathbb{P}\left( \left\Vert A_t^\top E_t \right\Vert_{(A_t^\top A_t + V)^{-1}}^2 \le
   2\sigma^2 \log\left( \det\left( (A_t^\top A_t + V)V^{-1}\right)^\frac{1}{2} \big/ \delta \right)  \right) \ge 1 - \delta.
 \end{equation*}
\end{proposition}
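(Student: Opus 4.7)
The natural route is the method of mixtures, which is the canonical tool for producing self-normalized deviation inequalities of exactly this form. The starting point is to introduce, for each fixed $x \in \mathbb{R}^d$, the exponential process
\[
M_t^x = \exp\!\left( \tfrac{1}{\sigma}\, x^\top A_t^\top E_t - \tfrac{1}{2}\, x^\top A_t^\top A_t\, x \right) = \prod_{s=1}^t \exp\!\left( \tfrac{1}{\sigma} (x^\top a_s)\eta_s - \tfrac{1}{2} (x^\top a_s)^2 \right).
\]
Since $a_s$ is predictable with respect to $(\mathcal{F}_t)$ and $\eta_s$ is conditionally $\sigma$-sub-Gaussian, applying the sub-Gaussian bound at the (conditionally deterministic) scalar $\lambda = (x^\top a_s)/\sigma$ yields $\E[M_s^x / M_{s-1}^x \mid \mathcal{F}_{s-1}] \le 1$. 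Hence $(M_t^x)_{t\ge 0}$ is a nonnegative supermartingale with $\E[M_0^x] = 1$.

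The second step is to mix over $x$ against the Gaussian prior $\pi = \mathcal{N}(0, V^{-1})$, setting $\bar M_t = \int M_t^x\, \pi(dx)$. By Fubini, $\bar M_t$ is still a nonnegative supermartingale with $\E[\bar M_0] = 1$. The crucial calculation is then to complete the square in $x$ inside the integrand: the quadratic
\[
-\tfrac{1}{2}\, x^\top V x - \tfrac{1}{2}\, x^\top A_t^\top A_t\, x + \tfrac{1}{\sigma}\, x^\top A_t^\top E_t
\]
rewrites as a Gaussian in $x$ with precision $V + A_t^\top A_t$ recentered at $\tfrac{1}{\sigma}(V + A_t^\top A_t)^{-1} A_t^\top E_t$, plus the constant residual $\tfrac{1}{2\sigma^2}\,\| A_t^\top E_t \|_{(V + A_t^\top A_t)^{-1}}^2$. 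Performing this Gaussian integral explicitly yields the closed form
\[
\bar M_t = \frac{\det(V)^{1/2}}{\det(V + A_t^\top A_t)^{1/2}}\, \exp\!\left( \frac{1}{2\sigma^2}\, \| A_t^\top E_t \|_{(V + A_t^\top A_t)^{-1}}^2 \right).
\]

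Finally, Markov's inequality applied to the nonnegative supermartingale $\bar M_t$ at the fixed time $t$ (or Ville's maximal inequality, if one wants the analogous uniform-in-$t$ version) gives $\mathbb{P}(\bar M_t \ge 1/\delta) \le \delta$. Taking logarithms, rearranging, and using the identity $\det(V + A_t^\top A_t)/\det(V) = \det((A_t^\top A_t + V)V^{-1})$ then recovers precisely the stated bound. The main subtlety, and the step that really makes the method work, is the choice of the Gaussian mixing measure with covariance $V^{-1}$: only this specific prior produces the cancellation that promotes a direction-by-direction Cramér-Chernoff statement into the self-normalized form involving $(V + A_t^\top A_t)^{-1}$. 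Everything else is routine martingale bookkeeping and a Gaussian integral.
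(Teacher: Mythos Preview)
The paper does not actually prove this proposition: it is quoted verbatim as Theorem~1 of \cite{abbasi2011improved} and used as an off-the-shelf tool, with no proof given. Your sketch reproduces exactly the method-of-mixtures argument that the cited reference uses (exponential supermartingale for each direction $x$, Gaussian mixing with covariance $V^{-1}$, completion of the square, then Markov/Ville), so it is both correct and aligned with the original source the paper defers to.
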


The following result is a stronger version of Proposition \ref{prop:self concentration} and in fact is behind its proof.
\begin{proposition}[Lemma 9. in \cite{abbasi2011improved}]\label{prop:self concentration stopping}
With the same assumptions as in the above proposition. Let $\tau$ be any stopping time with respect to the filtration $(\mathcal{F})_{t\ge 1}$. Then, for $\delta > 0$, we have
\begin{equation*}
\mathbb{P}\left( \left\Vert A_\tau^\top E_\tau \right\Vert_{(A_\tau^\top A_\tau + V)^{-1}}^2 \le
 2\sigma^2 \log\left( \det\left( (A_\tau^\top A_\tau+ V)V^{-1}\right)^\frac{1}{2} \big/ \delta \right)  \right) \ge 1 - \delta.
\end{equation*}
\end{proposition}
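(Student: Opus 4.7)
The plan is to follow the classical method of mixtures (Laplace / pseudo-maximization) argument of de la Peña, Klass, and Lai, which is essentially the proof given in the original reference. The key idea is to convert the vector-valued tail bound into a scalar supermartingale bound via a Gaussian mixture over the dual parameter, and then to invoke a maximal inequality that automatically handles all stopping times at once.

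First I would fix an arbitrary $\lambda \in \mathbb{R}^d$ and define the exponential process
\begin{equation*}
M_t^\lambda = \exp\!\left( \tfrac{1}{\sigma^2}\lambda^\top A_t^\top E_t - \tfrac{1}{2\sigma^2}\lambda^\top A_t^\top A_t \lambda \right),\qquad M_0^\lambda = 1.
\end{equation*}
Using that $a_t$ is $\mathcal{F}_{t-1}$-measurable and that $\eta_t$ is conditionally $\sigma$-sub-Gaussian, the one-step conditional expectation is at most $1$, so $(M_t^\lambda)_{t \ge 0}$ is a non-negative supermartingale with $\mathbb{E}[M_t^\lambda]\le 1$.

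Next I would mix over $\lambda$. Let $\pi$ be the centered Gaussian measure on $\mathbb{R}^d$ with covariance $\sigma^2 V^{-1}$ and define $\bar{M}_t = \int M_t^\lambda\,d\pi(\lambda)$. By Fubini, $(\bar{M}_t)_{t \ge 0}$ is again a non-negative supermartingale with $\mathbb{E}[\bar{M}_t]\le 1$. The inner integral is a pure Gaussian integral in $\lambda$, and completing the square gives the closed form
\begin{equation*}
\bar{M}_t = \det\!\left((A_t^\top A_t + V)V^{-1}\right)^{-1/2} \exp\!\left( \tfrac{1}{2\sigma^2}\, \|A_t^\top E_t\|_{(A_t^\top A_t + V)^{-1}}^2 \right).
\end{equation*}

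Now I would invoke Ville's maximal inequality for non-negative supermartingales: for any $\delta\in (0,1)$,
\begin{equation*}
\mathbb{P}\!\left( \sup_{t\ge 0} \bar{M}_t \ge 1/\delta \right) \le \delta.
\end{equation*}
Since this holds uniformly in $t$, it immediately holds at any stopping time $\tau$ (with the convention $\bar{M}_\infty = \limsup_t \bar{M}_t$ on $\{\tau=\infty\}$), without needing the optional stopping theorem or any integrability-at-$\tau$ argument. Rearranging the inequality $\bar{M}_\tau < 1/\delta$ by taking logarithms and multiplying by $2\sigma^2$ yields
\begin{equation*}
\|A_\tau^\top E_\tau\|_{(A_\tau^\top A_\tau + V)^{-1}}^2 \le 2\sigma^2\log\!\left( \det\!\left((A_\tau^\top A_\tau + V)V^{-1}\right)^{1/2}/\delta \right),
\end{equation*}
which is exactly the claimed bound.

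I expect the only delicate step to be the Gaussian-mixture computation producing the $\det$ factor and the $(A_t^\top A_t+V)^{-1}$-weighted norm (a routine quadratic completion, but the bookkeeping of the $\sigma^2$ factors must match the prior covariance). The use of Ville's inequality instead of optional stopping is what gives the bound simultaneously for all stopping times, including those that may be infinite, so no extra work is needed to handle $\{\tau=\infty\}$ beyond defining $\bar{M}_\infty$ as the almost-sure limit (which exists by the supermartingale convergence theorem).
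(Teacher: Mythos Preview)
Your proof is correct. Note, however, that the paper does not give its own proof of this proposition: it is quoted verbatim as Lemma~9 of \cite{abbasi2011improved} and used as a black box. What you have written is precisely the method-of-mixtures argument from that reference (build the exponential supermartingale $M_t^\lambda$, integrate it against a Gaussian prior with covariance $\sigma^2 V^{-1}$, complete the square to obtain the closed form, and apply a maximal inequality). The only stylistic difference is that you appeal to Ville's inequality for the non-negative supermartingale $\bar M_t$, whereas the original reference phrases the last step via optional stopping on $\bar M_t$; your route is equally valid and, as you note, handles all stopping times (including possibly infinite ones) in one stroke.
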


\subsection{Proof of Lemma \ref{lem:ls}}
Lemma \ref{lem:ls} shows that the convergence rate of the least squares estimator is dictated by the growth rate of the smallest eigenvalue of the covariates matrx $A_t^\top A_t$. Parts of our proof technique are inspired by recent developments in learning dynamical systems \cite{sarkar2018near}.
\begin{proof} \label{proof:ls}
  Define the event
  $$
  \mathcal{E} = \left\lbrace \exists c > 0,  \exists t_0\ge 0, \forall t \ge t_0, \quad   \frac{1}{t^\alpha} \lambda_{\min}(A_t^\top A_t) > c \right \rbrace.
  $$
 By assumption, $\mathcal{E}$ holds with probability 1. Note that the $t_0, c$ may be random here.
  It also holds on the event $\mathcal{E}$ that for all $t\ge t_0$ we have $2 A_t^\top  A_t \succ A_t^\top A_t + c t^\alpha$ which implies that $2 (A_t^\top A_t + c t^\alpha)^{-1} \succ    (A_t^\top  A_t)^{-1} $.  This means that on the event $\mathcal{E}$, for all $t \ge t_0$, we have $\Vert A_t^\top  E_t\Vert_{(A_s^\top A_s)^{-1}}^2  < 2 \Vert A_t^\top E_t \Vert_{(A_s A_s^\top + c t^\alpha)^{-1}}^2$. Then, using the decomposition \eqref{eq:ls decomposition} we obtain
  \begin{equation}\label{eq:ls upper bound}
  \Vert \hat{\mu}_t - \mu\Vert <  \frac{\sqrt{2} \Vert A_t^\top E_t \Vert_{( A_t^\top  A_t + ct^\alpha)^{-1}}}{\lambda_{\min}\left(A_t^\top  A_t\right)^{1/2}} < \frac{\sqrt{2}}{\sqrt{c}t^{\alpha / 2}}  \Vert A_t^\top E_t \Vert_{( A_t^\top  A_t + ct^\alpha)^{-1}}.
\end{equation}
  We will show that $\Vert A_t^\top E_t \Vert_{( A_t^\top  A_t + ct^\alpha)^{-1}} = o(t ^\beta)$ a.s. for all $\beta > 0$. This will ensure immediately with the upper bound \eqref{eq:ls upper bound} that $ \Vert \hat{\mu}_t - \mu \Vert = o(t^{\beta})$ a.s. for all $\beta \in (0, \alpha/2)$. By Proposition \ref{prop:self concentration}, it holds for all $\beta>0$ and $t \ge 0$
  $$
  \mathbb{P}\left(  \frac{1}{t^{\beta}}  \Vert A_t^\top E_t \Vert_{( A_t^\top  A_t + ct^\alpha)^{-1}} > \frac{\sigma}{t^{\beta}} \left( 2 \log\left( \det\left( (A_t^\top A_t + c t^\alpha I_d )(ct^\alpha I_d)^{-1}\right)^\frac{1}{2} \big/ \delta \right)\right)^{1/2} \right) \le \delta.
  $$
  Since $\mathcal{A}$ is finite, we may upper bound $\det\left((A_t^\top  A_t + c t^\alpha I_d \right)(ct^\alpha I_d)^{-1}) \le (L^2t^{1-\alpha}/c + 1)^d$ where $L = \max_{a \in \mathcal{A}} \Vert a \Vert$ and deduce that
  $$
  \mathbb{P}\left(  \frac{1}{t^{\beta}}  \Vert A_t^\top E_t \Vert_{( A_t^\top  A_t + ct^\alpha)^{-1}} > \frac{\sigma}{t^{\beta}} \left(2 \log\left( L^d t^{\frac{(1-\alpha)d}{2}} \big/ c^{\frac{d}{2}}\delta \right)\right)^{1/2} \right) \le \delta,
  $$
  which we may rewrite after substitution as
  $$
  \mathbb{P}\left(  \frac{1}{t^{\beta}}  \Vert A_t^\top E_t \Vert_{( A_t^\top  A_t + ct^\alpha)^{-1}} > \varepsilon  \right) \le  \frac{L^d}{c^{\frac{d}{2}}} t ^{\frac{(1-\alpha)d}{2}}\exp\left( - \frac{\varepsilon^2 t^{2\beta}}{2 \sigma^2} \right).
  $$
For all $\varepsilon > 0$, since $\sum_{t=1}^\infty t^{\frac{(1-\alpha)d}{2}} \exp(- \frac{\varepsilon^2t^{2\beta}}{2 \sigma^2})< \infty$, we have
  $$
  \sum_{t = 1}^\infty \mathbb{P}\left(  \frac{1}{t^{\beta}}  \Vert A_t^\top E_t \Vert_{( A_t^\top  A_t + ct^\alpha)^{-1}} > \varepsilon  \right) < \infty.
  $$
  Thus, by the first Borell-Cantelli lemma, we have for all $\varepsilon > 0$
  $$
  \mathbb{P}\left(  \left\lbrace \frac{1}{t^\beta}  \Vert A_t^{\top} E_t\Vert_{(A_t^\top A_t + c t^{\alpha})^{-1}} > \varepsilon \right\rbrace  i.o.  \right) = 0.
  $$
 Thus, we have proved that $ \frac{1}{t^{\beta}}  \Vert A_t^\top E_t \Vert_{( A_t^\top  A_t + ct^\alpha)^{-1}} \tends 0$ a.s..
\end{proof}

\subsection{Proof of Lemma \ref{lem:ls concentration}}
The proof of Lemma \ref{lem:ls concentration} is very similar to that of Lemma $\ref{lem:ls}$, but in order to obtain a non-asymptotic concentration bound, a stronger condition is needed, namely a non-asymptotic lower bound for the rate of growth of the smallest eigenvalue of the covariates matrix $A_t^\top A_t$.

\begin{proof} We have by assumption that there are $c > 0$ and $t_0 \ge 0$ such that for all $t\ge t_0$, the event
  $$
  \mathcal{E} = \left\lbrace \lambda_{\min}\left( A_t^\top A_t \right) > ct^\alpha \right\rbrace
  $$
  holds with probability 1. We can now carry the same derivation as in the proof of Lemma \ref{proof:ls} with the distinction that $c, t_0$ are deterministic and conclude that for all $\varepsilon >0$, and $t \ge t_0$, we have
  $$
  \mathbb{P}\left(  \frac{\sqrt{2}}{\sqrt{c}t^{\beta}}  \Vert A_t^\top E_t \Vert_{( A_t^\top  A_t + ct^\alpha)^{-1}} > \varepsilon  \right) \le  \frac{L^d}{c^{\frac{d}{2}}} t ^{\frac{(1-\alpha)d}{2}}\exp\left( - \frac{c \varepsilon^2 t^{2\beta}}{4 \sigma^2} \right),
  $$
  with the choice of $\beta = \alpha/2$ and using the upper bound \eqref{eq:ls upper bound} which can be shown similarly under the event $\mathcal{E}$, we have for all $\varepsilon > 0$, and $t\ge t_0$ that
  $$
  \mathbb{P}\left(  \Vert \hat{\mu}_t - \mu \Vert > \varepsilon  \right) \le  (c^{-1/2}L)^d t ^{\frac{(1-\alpha)d}{2}}\exp\left( - \frac{\varepsilon^2 t^{\alpha}}{2 \sigma^2} \right).
  $$
\end{proof}

\newpage

\section{Stopping rule}

The derivation of our stopping rule is inspired by that of Garivier and Kaufmann \cite{garivier2016optimal} for the MAB setting and relies on the classical generalized log-likelihood ratio (GLLR) test. The main distinction is that in the linear bandit setting, sampling an arm may provide additional statistical information about other arms, therefore one has to consider the full history of observations and sampled arms when comparing arms in the GLLR. We define our GLLR accordingly.

Furthermore, because of the linear structure, we are able to derive an exploration threshold which does not depend on the number of arms $K$, but only on the ambient dimension $d$. Our choice of threshold relies on the deviation bound presented in Proposition \ref{prop:self concentration stopping} (see Lemma 9 in \cite{abbasi2011improved}). But most importantly, to circumvent a naive union bound over the set of arms $\mathcal{A}$, we analyze the stopping time by leveraging the GLLR formulation (see Lemma \ref{lem:GLLR closed form})
 under the event of failure (failure to output the best arm). The stopping rules derived by Soare et al. \cite{soare2014best} follow directly from the deviation bound in \cite{abbasi2011improved}, rather than from the GLLR and consequently, they cannot avoid the dependency on $K$ even for the oracle stopping rule. Most existing algorithms in the literature are phase-based and rely on elimination criteria to stop \cite{fiez2019sequential, tao18best, xu2017}. In these algorithms, the phase transition rules and elimination criteria depend in a way or another on the number of arms $K$.

\subsection{Proof of Lemma \ref{lem:GLLR closed form}}
Here, we show that the generalized log-likelihood ratio can be expressed in a closed form, one that resembles the expression of $\psi$ used in the lower bound.

Let us first recall that, under the gaussian noise assumption, the density function of the sample path $r_1, a_1, \dots, r_t, a_t$ is
$$
f(r_1, a_1, \dots, r_t, a_t) \propto \exp\left( -\frac{1}{2} \sum_{s=1}^t (r_s - \mu^\top a_s)^{2} \right).
$$
Observe that the maximization problem $\max_{\lbrace \mu: \mu^\top (a - b)  \ge  - \varepsilon \rbrace} f_\mu(r_t,a_t, \dots, r_1,a_1)$ is, by monotonicity of the exponential, equivalent to
\begin{align*}
  \min_{\mu} \qquad  &   \frac{1}{2}\sum_{s=1}^t (r_s - \mu^\top a_s)^2  \\
  \textrm{s.t. } \qquad &  \mu^\top (a-b )  \ge -\varepsilon,
\end{align*}
which is a convex program. The optimality conditions give us
\begin{align*}
  \lambda & \ge 0, \\
  \lambda (\varepsilon + \mu^\top (a-b )) & = 0, \\
  -\varepsilon - \mu^\top(a-b)& \le 0, \\
  \left(\sum_{s=1}^t a_s a_s^\top \right) \mu  - \sum_{s=1}^t  a_s r_s + \lambda (a-b) & = 0,
\end{align*}
where $\lambda$ is the Lagrange multiplier associated with the inequality constraint of the problem. Under the assumption that $\sum_{s=1}^t a_s a_s^\top$ is invertible, we introduce the least squares estimator $ \hat{\mu}_t = \left(\sum_{s=1}^t a_s a_s^\top \right)^{-1} \left(\sum_{s=1}^t a_s r_s \right)$. Then from optimality conditions, it follows that
\begin{equation} \label{eq:opt solution}
  \mu_1^* = \begin{cases}
    \hat{\mu}_t & \textrm{if } \quad  \hat{\mu}_t^\top (a-b)  \ge -\varepsilon, \\
    \hat{\mu}_t + (-\varepsilon - \hat{\mu}_t ^\top (a-b))  \frac{\left(\sum_{s=1}^t a_s a_s^\top\right)^{-1} (a-b)}{(a-b)^\top\left(\sum_{s=1}^t a_s a_s^\top\right)^{-1} (a-b)}  & \textrm{otherwise.}
  \end{cases}
\end{equation}
Similarly the solution to the maximization problem $\max_{\lbrace \mu: \langle \mu, a - b \rangle \le  \varepsilon\rbrace} f_\mu(r_t,a_t, \dots, r_1,a_1)$ is
\begin{equation}
  \mu_2^* = \begin{cases}
    \hat{\mu}_t & \textrm{if } \quad \hat{\mu}_t^\top (a-b) \le - \varepsilon, \\
    \hat{\mu}_t + (-\varepsilon - \hat{\mu}_t ^\top (a-b))  \frac{\left(\sum_{s=1}^t a_s a_s^\top\right)^{-1} (a-b)}{(a-b)^\top\left(\sum_{s=1}^t a_s a_s^\top\right)^{-1} (a-b)}  & \textrm{otherwise.}
  \end{cases}
\end{equation}
Hence, the generalized log likelihood ratio can be expressed as
\begin{align*}
  Z_{a,b,\varepsilon}(t) & = \frac{1}{2} (\mu_1^* - \mu_2^*)^\top \left(\sum_{s=1}^t a_s a_s^\top \right) (2\mu_t - \mu^*_1 - \mu^*_2) \\
  & = \textrm{sign}(\mu_t^\top(a-b) + \varepsilon) \frac{( \hat{\mu}_t^\top( a-b) + \varepsilon)^2}{2(a-b)^\top\left(\sum_{s=1}^t a_s a_s^\top\right)^{-1} (a-b)}.
\end{align*} \ep

The following corollary is an immediate consequence of Lemma \ref{lem:GLLR closed form}. Let us recall that $Z_{a,b}(t) = Z_{a,b, 0}(t)$.
\begin{corollary}\label{corr:Z closed form}
Let $t\ge 0$, and assume that $\sum_{s=1}^t a_s a_s^\top \succ 0$. Then for all $\hat{a}_t \in \argmax_{a \in \mathcal{A}} \hat{\mu}_t^\top a$, it holds
  \begin{equation}
    Z(t) = \max_{a \in \mathcal{A}}\min_{b \in \mathcal{A}\backslash \lbrace a\rbrace} Z_{a,b}(t) =  \min_{b \in \mathcal{A}\backslash \lbrace \hat{a}_t \rbrace} Z_{\hat{a}_t,b}(t).
  \end{equation}
\end{corollary}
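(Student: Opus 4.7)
The plan is to prove this by observing that the inner minimization in the definition of $Z(t)$ is non-positive except when $a$ coincides with an empirical best arm $\hat{a}_t$, in which case the inner minimum is non-negative. The outer maximum is therefore attained at $\hat{a}_t$, which gives the second equality.

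First I would invoke Lemma \ref{lem:GLLR closed form} specialized to $\varepsilon=0$ to write, for all distinct $a,b\in\mathcal{A}$,
\begin{equation*}
Z_{a,b}(t) = \textrm{sgn}\bigl(\hat{\mu}_t^\top(a-b)\bigr)\,\frac{(\hat{\mu}_t^\top(a-b))^2}{2(a-b)^\top\left(\sum_{s=1}^t a_s a_s^\top\right)^{-1}(a-b)}.
\end{equation*}
The denominator is strictly positive because $\sum_{s=1}^t a_s a_s^\top \succ 0$ and $a\neq b$. Consequently, the sign of $Z_{a,b}(t)$ agrees with the sign of $\hat{\mu}_t^\top(a-b)$, and moreover $Z_{a,b}(t) = -Z_{b,a}(t)$ after a direct sign check.

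Next I would split the outer maximization over $a\in\mathcal{A}$ into two cases. If $a\notin\argmax_{c\in\mathcal{A}}\hat{\mu}_t^\top c$, then choosing $b=\hat{a}_t\neq a$ in the inner minimization gives $\hat{\mu}_t^\top(a-\hat{a}_t)\le 0$, hence $Z_{a,\hat{a}_t}(t)\le 0$, and so $\min_{b\neq a}Z_{a,b}(t)\le 0$. If instead $a=\hat{a}_t$, then for every $b\neq\hat{a}_t$ one has $\hat{\mu}_t^\top(\hat{a}_t-b)\ge 0$ by definition of $\hat{a}_t$, whence $Z_{\hat{a}_t,b}(t)\ge 0$; taking the minimum over such $b$ yields $\min_{b\neq\hat{a}_t}Z_{\hat{a}_t,b}(t)\ge 0$. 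Combining these two observations, the outer maximum is attained at $a=\hat{a}_t$, giving
\begin{equation*}
\max_{a\in\mathcal{A}}\min_{b\in\mathcal{A}\setminus\{a\}}Z_{a,b}(t) = \min_{b\in\mathcal{A}\setminus\{\hat{a}_t\}}Z_{\hat{a}_t,b}(t).
\end{equation*}

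The only delicate point is handling ties in $\argmax_{c\in\mathcal{A}}\hat{\mu}_t^\top c$, since the statement claims the identity for \emph{any} choice of $\hat{a}_t$ in that set. If $\tilde{a}$ is another maximizer distinct from $\hat{a}_t$, then $\hat{\mu}_t^\top(\tilde{a}-\hat{a}_t)=0$, so both $Z_{\tilde{a},\hat{a}_t}(t)=0$ and $Z_{\hat{a}_t,\tilde{a}}(t)=0$; the inner mins at $a=\tilde{a}$ and $a=\hat{a}_t$ both coincide with $0$, and the argument remains consistent. Since this is the only subtlety, the proof should be short and essentially reduces to the sign analysis above.
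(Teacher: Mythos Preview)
Your proposal is correct and follows essentially the same approach as the paper: both arguments invoke Lemma \ref{lem:GLLR closed form} to read off the sign of $Z_{a,b}(t)$ from $\hat{\mu}_t^\top(a-b)$, observe that the inner minimum is nonnegative if and only if $a$ is an empirical maximizer, and handle the tie case by noting that $Z_{\hat{a}_t,\tilde{a}}(t)=0$ forces the common value of the inner minimum to be zero. Your treatment of ties is in fact slightly more explicit than the paper's, but the core reasoning is identical.
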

\begin{proof} Under the assumption that $\sum_{s=1}^t a_s a_s^\top \succ 0$, by Lemma \ref{lem:GLLR closed form}, the sign of $Z_{a,b}(t)$ is that of $\hat{\mu}_t^\top (a-b)$.
Additionally, since $\hat{a}_t \in \argmax_{a \in \mathcal{A}}\hat{\mu}_t^\top a$, it holds for all $b \in \mathcal{A} \backslash \lbrace \hat{a}_t \rbrace$ that  $\hat{\mu}_t^\top(\hat{a}_\tau - b) \ge 0$.
Hence it immediately follows that $Z_{a,b}(t) \ge 0$ if and only if $a \in \argmax_{a \in \mathcal{A}}\hat{\mu}_t^\top a$.
Furthermore, if $\hat{a}_t$ is not unique, then we may find $b \in \argmax \hat{\mu}_t^\top b$ such that $\hat{a}_t \neq b$, and then by Lemma \ref{lem:GLLR closed form} obtain $Z_{\hat{a}_t, b}(t) = 0$.
Hence, we conclude that regardless of whether $\hat{a}_t$ is unique or not, $Z(t) = \min_{b \in \mathcal{A}\backslash \lbrace \hat{a}_t\rbrace}Z_{\hat{a}_t, b}(t)$ .
\end{proof}

\subsection{Proof of Proposition \ref{prop:stopping rule}}

Let us consider the events
\begin{align*}
\mathcal{E}_1 & = \lbrace \tau < \infty \rbrace = \left\lbrace \exists t \in \mathbb{N}^* : \quad  \max_{a \in \mathcal{A}} \min_{b \in \mathcal{A}\backslash \lbrace a\rbrace} Z_{a,b}(t) > \beta(\delta, t) \textrm{ and } \sum_{s=1}^t a_s a_s^\top \succeq c I_d  \right\rbrace,  \\
\mathcal{E}_2 & = \lbrace \mu^\top (a^*_\mu - \hat{a}_\tau) > 0 \rbrace.
\end{align*}
Now note that  if there exists $t \in \mathbb{N}^*$ such that $\sum_{s=1}^t a_s a_s \succeq c I_d$ and $\mu^\top  (a^*_\mu - \hat{a}_t) > 0$ then $\hat{a}_t \neq a^*_\mu$.
Additionally, from Corollary \ref{corr:Z closed form}, we know that under $\mathcal{E}_1$,
that for all $t\ge 1$, it holds that $Z(t) = \min_{b \in \mathcal{A}\backslash \lbrace \hat{a}_t \rbrace} Z_{\hat{a}_t,b}(t)$.
Therefore, we have
\begin{align*}
  \mathcal{E}_1\cap \mathcal{E}_2 & = \left\lbrace \exists t \in \mathbb{N}^*: \;  Z(t) > \beta(\delta, t) \textrm{ and } \sum_{s=1}^t a_s a_s^\top \succeq c I_d \textrm{ and } \mu^\top (a^\star_\mu - \hat{a}_t) > 0 \right\rbrace \\
  & = \left\lbrace \exists t \in \mathbb{N}^*: \;  \min_{b \in \mathcal{A}\backslash \lbrace \hat{a}_t \rbrace} Z_{\hat{a}_t,b}(t) > \beta(\delta, t) \textrm{ and } \sum_{s=1}^t a_s a_s^\top \succeq c I_d \textrm{ and } \mu^\top (a^\star_\mu - \hat{a}_t) > 0 \right\rbrace \\
  &  \subseteq \left\lbrace \exists t \in \mathbb{N}^*:  Z_{\hat{a}_t,a^\star_\mu}(t) > \beta(\delta, t) \textrm{ and } \sum_{s=1}^t a_s a_s^\top \succeq c I_d \textrm{ and } \mu^\top (a^\star_\mu - \hat{a}_t) > 0 \right\rbrace.
\end{align*}
Since under the event $\mathcal{E}_1 \cap \mathcal{E}_2$ and by definition of $\hat{a}_t$, we have $\hat{\mu}_t^\top(\hat{a}_t - a^\star_\mu) \ge 0$, and $\mu^\top(a^\star_\mu - \hat{a}_t) > 0$. In view of \eqref{eq:opt solution}, it follows that
\begin{align*}
  \max_{\lbrace \mu': (\mu')^\top (\hat{a}_t - a^\star_\mu) \ge  0 \rbrace } f_{\mu'} (r_t,a_t, \dots, r_1,a_1) & = f_{\hat{\mu}_t} (r_t,a_t, \dots, r_1,a_1), \\
    \max_{\lbrace \mu': (\mu')^\top (\hat{a}_t - a^\star_\mu) \le  0 \rbrace } f_{\mu'} (r_t,a_t, \dots, r_1,a_1) & \ge f_{{\mu}} (r_t,a_t, \dots, r_1,a_1).
\end{align*}
Thus under $\mathcal{E}_1 \cap \mathcal{E}_2$ it holds that
\begin{align*}
  Z_{\hat{a}_t, a^*_\mu}(t) & = \log\left(  \frac{\max_{ \mu': (\mu')^\top (\hat{a}_t - a^\star_\mu) \ge  0} f_{\mu'} (r_t,a_t, \dots, r_1,a_1)}{\max_{ \mu': (\mu')^\top (\hat{a}_t - a^\star_\mu) \le  0} f_{\mu'} (r_t,a_t, \dots, r_1,a_1)} \right) \\
  & \le \log\left(  \frac{f_{\hat{\mu}_t}(r_t,a_t, \dots, r_1,a_1)}{f_\mu(r_t,a_t, \dots, r_1,a_1)} \right) &  \\
  & = \frac{1}{2} (\hat\mu_t - \mu)^\top \left(\sum_{s=1}^t a_s a_s^\top \right) (\hat\mu_t  - \mu) \\
  & = \frac{1}{2}\Vert \mu - \hat\mu_t \Vert^{2}_{\sum_{s=1}^t a_s a_s^\top},
\end{align*}
which further implies that
\begin{align*}
  \mathcal{E}_1\cap \mathcal{E}_2 & \subseteq \bigg\{ \exists t \in \mathbb{N}^*:  \quad  \frac{1}{2}\Vert \mu - \hat\mu_t \Vert^{2}_{\sum_{s=1}^t a_s a_s^\top}  \ge \beta(\delta, t) \textrm{ and } \sum_{s=1}^t a_s a_s^\top \succeq c I_d \\ 
  &\qquad \textrm{ and } \mu^\top (a^*_\mu - \hat{a}_t) > 0 \bigg\}\\
   &  \subseteq \left\lbrace \exists t \in \mathbb{N}^* :  \quad  \frac{1}{2}\Vert \mu - \mu_t \Vert^{2}_{\sum_{s=1}^t a_s a_s^\top}  \ge \beta(\delta, t) \textrm{ and } \sum_{s=1}^t a_s a_s^\top \succeq c I_d\right\rbrace.
\end{align*}
We note that when $\sum_{s =1}^t a_s a_s^\top \succeq c I_d$, then for all $\rho > 0$, $(1+\rho) \sum_{s=1}^t a_s a_s^\top \succeq \sum_{s=1}^t a_s a_s^\top  + \rho c I_d$, which means that $ (1+\rho) (\sum_{s=1}^t a_s a_s^\top +\rho c I_d)^{-1}\succeq (\sum_{s=1}^t a_s a_s^\top)^{-1}$. Thus, we may have
$$
\Vert \hat\mu_t - \mu\Vert^2 = \Big \Vert \sum_{s=1}^t a_s \eta_s \Big\Vert_{\left( \sum_{s = 1}^t a_s a_s^\top \right)^{-1}}^2 \le (1+\rho) \Big \Vert \sum_{s=1}^t a_s \eta_s \Big\Vert_{\left( \sum_{s = 1}^t a_s a_s^\top  + \rho c I_d \right)^{-1}}^2.
$$
This leads to
\begin{align*}
    \mathcal{E}_1\cap \mathcal{E}_2 & \subseteq  \left\lbrace \exists t \in \mathbb{N}^* :  \quad  \frac{1}{2}(1+\rho) \Big \Vert \sum_{s=1}^t a_s \eta_s \Big\Vert_{\left( \sum_{s = 1}^t a_s a_s^\top  + \rho c I_d \right)^{-1}}^2  \ge \beta(\delta, t) \right\rbrace,
\end{align*}
and with the choice
\begin{equation*}
\beta(\delta, t) = (1+\rho) \sigma^2 \log\left( \frac{\det( (\rho c)^{-1}\sum_{s=1}^t a_s a_s^\top + I_d)^{1/2}}{\delta}\right),
\end{equation*}
we write
\begin{align*}
     & \mathcal{E}_1\cap \mathcal{E}_2 \subseteq \\
   &   \left\lbrace \exists t \in \mathbb{N}^* :  \frac{1}{2}\Big \Vert \sum_{s=1}^t a_s \eta_s \Big\Vert_{\left( \sum_{s = 1}^t a_s a_s^\top  + \rho c I_d \right)^{-1}}^2  > \sigma^2 \log\left( \frac{\det( (\rho c)^{-1}\sum_{s=1}^t a_s a_s^\top + I_d)^{1/2}}{\delta}\right) \right\rbrace.
\end{align*}
Finally, it follows immediately from Proposition \ref{prop:self concentration stopping} that
$$
\mathbb{P}\left(\tau < \infty, \mu^\top(a^\star_\mu - \hat{a}_\tau) > 0\right) = \mathbb{P}(\mathcal{E}_1\cap \mathcal{E}_2) \le \delta.
$$
\ep

Proposition \ref{prop:stopping rule} does not yet guarantee that we have a $\delta$-PAC startegy. However, a sufficient condition for any strategy with the proposed decision rule and stopping rule to be $\delta$-PAC, is to simply ensure that $\mathbb{P}(\tau < \infty) = 1$. This condition will have to be satisfied by our sampling rule.
\begin{corollary}[$\delta$-PAC guarantee]
For any strategy using the proposed decision rule and stopping rule and such that $\mathbb{P}(\tau < \infty)$, it is guaranteed that $\mathbb{P}(\mu^\top (a^\star_\mu - \hat{a}_t) > 0) \le \delta$.
\end{corollary}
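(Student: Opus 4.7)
The plan is to derive this corollary as a direct consequence of Proposition \ref{prop:stopping rule}, which already provides the bound $\mathbb{P}(\tau < \infty, \mu^\top(a^\star_\mu - \hat{a}_\tau) > 0) \le \delta$ for any sampling rule, together with the assumed property $\mathbb{P}(\tau < \infty) = 1$ of the strategy under consideration. Since Proposition \ref{prop:stopping rule} is stated with no restriction on the sampling rule, the event bounded there already coincides up to a null set with the unconditional mistake event once $\tau$ is known to be almost surely finite.

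Concretely, the first step is to decompose the mistake event using the law of total probability:
\begin{equation*}
\mathbb{P}\bigl(\mu^\top(a^\star_\mu - \hat{a}_\tau) > 0\bigr) = \mathbb{P}\bigl(\tau < \infty,\ \mu^\top(a^\star_\mu - \hat{a}_\tau) > 0\bigr) + \mathbb{P}\bigl(\tau = \infty,\ \mu^\top(a^\star_\mu - \hat{a}_\tau) > 0\bigr).
\end{equation*}
The second step is to bound each term. The first term is controlled by Proposition \ref{prop:stopping rule}, yielding an upper bound of $\delta$. The second term is trivially bounded by $\mathbb{P}(\tau = \infty) = 1 - \mathbb{P}(\tau < \infty) = 0$, which is exactly the hypothesis imposed on the strategy. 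Summing the two bounds produces the claim $\mathbb{P}(\mu^\top(a^\star_\mu - \hat{a}_\tau) > 0) \le \delta$.

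There is no real obstacle here: the corollary is essentially a bookkeeping observation stating that ensuring almost-sure finite stopping upgrades the conditional guarantee of Proposition \ref{prop:stopping rule} into an unconditional $\delta$-PAC guarantee. The only subtle point worth noting is that the definition of $\hat{a}_\tau$ is only meaningful on the event $\{\tau < \infty\}$, which is why one should formally handle the other event separately before invoking Proposition \ref{prop:stopping rule}; the assumption $\mathbb{P}(\tau < \infty) = 1$ makes this distinction vacuous. No new probabilistic machinery is required beyond what has already been established in the proof of Proposition \ref{prop:stopping rule}.
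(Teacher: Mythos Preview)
Your proposal is correct and matches the paper's approach: the paper states the corollary without an explicit proof, treating it as an immediate consequence of Proposition~\ref{prop:stopping rule} together with the assumption $\mathbb{P}(\tau<\infty)=1$. Your decomposition via $\{\tau<\infty\}$ and $\{\tau=\infty\}$ is exactly the implicit one-line argument the paper intends.
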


\newpage

\section{Sampling rule}

Our sampling rule as described in Section 3.5 is based on tracking a sequence of allocations that provably approaches the set of optimal allocations. This set of optimal allocations $C^\star(\mu)$ that is not necessarily a singleton as in the multi-armed bandit setting \cite{kaufmann2016complexity}. This makes the analysis extremely challenging. However by crucially leveraging the geometric properties of this set and the continuity properties of $\psi$ and $C^\star(\mu)$ we are able to prove that tracking is possible.

Additionally, we choose arms from the support (set of non zero elements) of the average allocations up to the current round. This is motivated by the fact that when $K$ is exceedingly large in comparison with the dimension $d$, it is possible to represent any matrix $A$ in the convex hull $\textrm{conv}(\lbrace a a^\top: a \in \mathcal{A}\rbrace)$ by an allocation $w$ with support of at most $O(d^2)$ such that $A = \sum_{a \in \mathcal{A}} w_a a a^\top$. This observation was made by Soare et el. \cite{soare2014best} and follows from Caratheorody's Theorem. A consequence of this sampling strategy is reflected in Lemma \ref{lem:tracking lemma}.

One further novel part of the analysis is the introduction of \textit{laziness}, the idea that the algorithm does not need to perform a computationally demanding task at every round. In the linear bandit setting this computationally demanding task is the optimization problem $\max_{w \in \Lambda} \psi(\hat{\mu}_t, w)$. Existing algorithms in the literature resort to phase-based schemes such us gap elimination in order to attain efficiency. However these schemes often fail to fully stitch the statistical information between phases. This can be seen in the least squares constructions of the algorithms $\mathcal{XY}$-adaptive \cite{soare2014best}, ALBA \cite{tao18best}, RAGE \cite{fiez2019sequential} where the samples from previous phases are discarded. Our tracking rule allows for a natural flow of information between rounds regardless of the laziness of the algorithm. This is shown by Proposition \ref{prop:tracking}.

We shall now prove Proposition \ref{prop:tracking} and all the related lemmas. Lemma \ref{lem:sufficient exploration} shows that we have sufficient exploration. Lemma \ref{lem:tracking lemma} is the crucial step in our analysis here. It's a tracking lemma that formalizes the idea that we may track a sequence that converges to a set $C$ rather than a point. The proof requires the convexity of the set $C$. In the main analysis of the sampling rule $C$ is replaced by $C^\star(\mu)$.

\subsection{Proof of Lemma \ref{lem:sufficient exploration}}
The idea of the proof is to show that if at some time $t_0 + 1$, the condition $\lambda_{\min}(\sum_{s=1}^t a_s a_s^\top) > f(t)$ is violated, then the number of rounds needed to satisfy the condition again cannot exceed $d$ rounds.

First, we note that  $d = \inf\lbrace t\ge 1: \lambda_{\min}(\sum_{s=1}^t a_s a_s^\top) \ge f(t)\rbrace$. Indeed, we have by construction that for all $t < d$, $\lambda_{\min}(\sum_{s=1}^t a_s a_s^\top) = 0$ and $\lambda_{\min}\left(\sum_{s=1}^d a_s a_s^\top\right) = \lambda_{\min}\left(\sum_{a \in \mathcal{A}_0} a a^\top \right) = f(d)$.
  \medskip
  Now, if there exists $t_0 \ge d$, such that $\lambda_{\min}\left(\sum_{s=1}^{t_0} a_s a_s^\top \right) \ge f(t_0)$ and $\lambda_{\min} \left(\sum_{s=1}^{t_0+1} a_s a_s^\top \right) < f(t_0+1)$, then we may define $t_1 = \inf \left\lbrace t > t_0: \lambda_{\min}\left( \sum_{s=1}^{t} a_s a_s^\top \right) \ge f(t) \right\rbrace$. Let us observe that for all $t_0 \le t \le t_1$, we have
  $$
  \lambda_{\min}\left( \sum_{s=1}^t a_s a_s^\top \right) \ge \lambda_{\min}\left( \sum_{s=1}^{t_0} a_s a_s^\top \right) \ge f(t_0).
  $$
  Note that if $t_1 \ge t_0 + d + 1$, then, by construction, we have
  \begin{align*}
  \lambda_{\min} \left( \sum_{s=1}^{t_1} a_s a_s^\top \right) & \ge \lambda_{\min}\left(  \sum_{s=1}^{t_0+ d +1} a_s a_s^\top \right) \\
   & \ge \lambda_{\min}\left(  \sum_{s=1}^{t_0+1} a_s a_s^\top + \sum_{a \in \mathcal{A}_0} a a^\top \right) \\
   & \ge \lambda_{\min}\left(  \sum_{s=1}^{t_0+1} a_s a_s^\top \right)  + \lambda_{\min}\left( \sum_{a \in \mathcal{A}_0} a a^\top \right) \\
   &  = \lambda_{\min}\left(\sum_{s=1}^{t_0+1} a_s a_s^\top\right) + c_{\mathcal{A}_0}\sqrt{d} \\
   & \ge \lambda_{\min}\left(\sum_{s=1}^{t_0} a_s a_s^\top\right) + c_{\mathcal{A}_0}\sqrt{d} \\
   & \ge f(t_0) + c_{\mathcal{A}_0}\sqrt{d}.
\end{align*}
  However, we have
  $$
  t_0 \ge \frac{1}{4}\left(d + \frac{1}{d} + 2\right) \implies   \sqrt{t_0 + d + 1} + \sqrt{t_0} \ge \sqrt{d}+\frac{1}{\sqrt{d}} \implies f(t_0) + c_{\mathcal{A}_0} \sqrt{d} \ge f(t_0 + d + 1).
  $$
  Therefore, if $t_0 \ge \frac{1}{4}\left(d + \frac{1}{d} + 2\right)$, then it holds that $t_1 \le t_0+d +1$. In other words, we have shown that for all $t \ge \frac{1}{4}\left(d + \frac{1}{d} + 2\right) + d + 1$, we have
  $$
  \lambda_{\min}\left(\sum_{s=1}^t a_s a_s^\top\right) \ge f(t-d-1).
  $$
\ep

\subsection{Proof of Lemma \ref{lem:tracking lemma}}
Our proof for the tracking lemma is inspired by that of D-tracking for linear bandits by Garivier and Kaufmann \cite{garivier2016optimal}. We follow similar steps but there are crucial differences. The main one lies in the fact that we have a sequence that converges to a set $C$ rather than to a unique point. The convexity of $C$ is a crucial point in our analysis as it allows to show that tracking the average of this converging sequence will eventually allow our empirical allocation to be sufficiently close to the set $C$. Intuitively, the average is a stable point to track. Furthermore, we also highlight the fact that the sparsity of the average allocations $\sum_{s=1}^t w(s)/t$ is reflected in the error by which $(N_a(t))_{a \in \mathcal{A}}$ approaches the set $C$. This is due to the nature of our sampling rule as shall be proven.

\begin{proof}
For all $t \ge 1$ denote
$$
\overline{w}(t) = \frac{1}{t}\sum_{s=1}^t w(s).
$$
Since $C$ is non-empty and compact, we may define
$$
\hat{w}(t) = \argmin_{w \in C} d_\infty( \overline{w}(t), w).
$$
Note that by convexity of $C$, there exists $t_0' \ge t_0$ such that $\forall t \ge t_0'$, \\
$d_\infty((N_{a}(t) / t)_{a \in \mathcal{A}}, C) \le d_\infty((N_{a}(t) / t)_{a \in \mathcal{A}}, \hat{w}(t))$ and $d_{\infty}(\overline{w}(t), \hat{w}(t)) \le 2 \varepsilon$.

To see that, let us define for all $t \ge 1$, $v(t) = \argmin_{w \in C} d_\infty(w, w(t))$, and observe that for all $ a \in \mathcal{A}$, we have
\begin{align*}
 \left\vert \frac{1}{t}\sum_{s=1}^t w_a(s) - \frac{1}{t}\sum_{s=1}^t v_a(s)\right\vert  & \le \frac{1}{t} \sum_{s=1}^{t_0} \left\vert w_a(s) - v_a(s)\right\vert + \frac{1}{t} \sum_{s=t_0 + 1}^t \left\vert w_a(s) - v_a(s)\right\vert \\
 & \le \frac{t_0}{t} + \frac{t-t_0}{t} \varepsilon.
\end{align*}
Thus if $t\ge t_0' = \frac{t_0}{\varepsilon}$, then $d_\infty(\overline{w}(t), \frac{1}{t}v(t)) \le 2 \varepsilon$. Finally since $\frac{1}{t} \sum_{s=1}^t v(s) \in C$ (by convexity of $C$), it follows that
$$
\forall t \ge t_0'  \qquad d_\infty(\overline{w}(t), \hat{w}(t)) \le d_\infty\left( \overline{w}(t), \frac{1}{t} \sum_{s=1}^t v(s)\right) \le 2\varepsilon.
$$
We further define for all $t\ge  1$, $\varepsilon_{a, t} = N_a(t) - t \hat{w}_a(t)$. The main step of the proof is to show that there exists $t_0'' \ge t_0'$ such that for all $t\ge t_0''$, for all $a \in \mathcal{A}$ we have
$$
\lbrace a_{t+1} = a \rbrace \subseteq  \mathcal{E}_1(t) \cup \mathcal{E}_2(t) \subseteq \lbrace \varepsilon_{a, t} \le 6 t \varepsilon\rbrace,
$$
where
\begin{align*}
  \mathcal{E}_1(t) & = \left \lbrace a = \argmin_{a \in \textrm{supp}(w_t)} (N_a(t) - t \overline{w}_a(t)) \right\rbrace,\\
  \mathcal{E}_2(t) & = \left\lbrace  \lambda_{\min}\left(\sum_{s=1}^t a_s a_s^\top \right) < f(t) \quad and \quad  a = \mathcal{A}_0(i_t)  \right\rbrace.
\end{align*}

The first inclusion is immediate by construction. Now let $t \ge t_0$, we have:
\begin{itemize}
  \item[\textit{(Case 1)}] If $\lbrace a_{t+1} = a \rbrace \subseteq \mathcal{E}_1(t)$, then we have
  \begin{align*}
    \varepsilon_{a,t} & = N_a(t) - t \hat{w}_a(t) \\
    & = N_a(t) -  t \overline{w}_a(t) + t\overline{w}_a(t) -t \hat{w}_a(t) \\
    & \le N_a(t) - t \overline{w}_a(t) + t \varepsilon & (\textrm{since } d_\infty(\hat{w}(t), \overline{w}(t))\le \varepsilon)\\
    & \le \min_{a \in \textrm{supp}(\overline{w}(t))} N_a(t) - t\overline{w}_a(t) + t \varepsilon & (\textrm{since } \mathcal{E}_1(t) \textrm{ holds}) \\
    & \le 2t \varepsilon,
  \end{align*}
  where the last inequality holds because
  $$
    \sum_{a \in \textrm{supp}(\overline{w}(t))} N_a(t) - t \overline{w}_a(t) = - \sum_{a  \in\mathcal{A} \backslash \textrm{supp}(\overline{w}(t))} N_a(t) \le 0
  $$
  thus $\mathcal{E}_2(t) \subseteq \lbrace \varepsilon_{a,t} \le 2t \varepsilon\rbrace$.
  \item[\textit{(Case 2)}] If $\lbrace a_{t+1} = a \rbrace \subseteq \mathcal{E}_2(t)$, then it must hold that $a \in \mathcal{A}_0$. Let us define for al $k\ge 1$
   \begin{align*}
     N_{a,1}(k) & = \sum_{s=1}^k 1_{\left\lbrace a_{k} = a \textrm{ and }  \lambda_{\min} \left( \sum_{s=1}^{k-1} a_s a_s^\top \right) < f(k-1) \right\rbrace}, \\
     N_{a,2}(k) & = \sum_{s=1}^k 1_{\left\lbrace a_{k} = a \textrm{ and }   \lambda_{\min} \left( \sum_{s=1}^{k-1} a_s a_s^\top \right) \ge f(k-1) \right\rbrace}.
   \end{align*}
 Note that $N_a(k) = N_{a,1}(k) + N_{a,2}(k)$ and that $ N_{a,1}(k) - 1 \le \min_{a \in \mathcal{A}_0} N_{a,1}(k) \le N_{a,1}(k)$. The latter property follows from the forced exploration sampling scheme. Now, since the event $\mathcal{E}_2(t)$ holds, we observe that
   \begin{align*}
     (N_{a,1}(t) - 1) \le \min_{a \in \mathcal{A}_0} N_{a,1}(t) \lambda_{\min}\left(\sum_{a \in \mathcal{A}_0} a a^\top\right) \le  \lambda_{\min}\left(\sum_{s=1}^t a_s a_s^\top \right) < f(t)
   \end{align*}
   and since $f(t) = \lambda_{\min}\left(\sum_{a \in \mathcal{A}_0} a a^\top\right) \frac{\sqrt{t}}{\sqrt{d}}$, we obtain
   $$
   N_{a,1}(t) \le \sqrt{t}/\sqrt{d} + 1.
   $$
   Next, let $k \le t$ be the largest integer such that $N_{a,2}(k) = N_{a,2}(k-1) + 1$. Note that at such $k$ the event $\mathcal{E}_1(k-1)$ must hold by definition of $N_{a,2}(k-1)$, and we have
   $$
   N_{a,2}(t) = N_{a,2}(k) = N_{a, 2}(k-1) + 1 \quad \textrm{and} \quad a_{k} = \argmin_{a \in \textrm{supp}(w(k-1))} N_a(k-1) - k \overline{w}_a(k-1).
   $$
 Now we write
   \begin{align*}
     \varepsilon_{a,t} & = N_{a,t} - t\hat{w}_a(t) \\
     & = N_{a,1}(t) + N_{a,2}(t) - t \hat{w}_a(t) \\
     & \le \sqrt{t}/\sqrt{d} + 1  + N_{a,2}(t) - t \hat{w}_a(t).
   \end{align*}
 If $k-1 \le t_0'$, then we have $N_{a,2}(k) \le t_0'$, otherwise since $\mathcal{E}_1(k-1)$ holds, we have
   \begin{align*}
   N_{a, 2}(t) & = 1 + N_{a,2}(k-1) - (k-1) \hat{w}_a(k-1) + (k-1) \hat{w}_a(k-1) \\
   &  \le 1+ 2(k-1)\varepsilon + (k-1) \hat{w}_a(k-1).
  \end{align*}
  Thus
  \begin{align*}
    \varepsilon_{a,t} & \le \sqrt{t}/\sqrt{d} + 1 + \max\lbrace t_0', 1 + 2(k-1)\varepsilon + (k-1) \hat{w}_a(k-1) - t \hat{w}_a(t) \rbrace,
  \end{align*}
  and since
  \begin{align*}
    (k-1)\hat{w}_{a}(k-1) - t  & \hat{w}_a(t)  = (k-1)\hat{w}_{a}(k-1) - (k-1)\overline{w}_{a}(k-1) \\
    &\qquad \qquad + (k-1)\overline{w}_{a}(k-1) - t \hat{w}_a(t) \\
    & \le (k-1)\hat{w}_{a}(k-1) - (k-1)\overline{w}_{a}(k-1)+ t\overline{w}_{a}(t) - t \hat{w}_a(t) \\
    & \le 2(k-1)\varepsilon + 2t \varepsilon\\
    & \le 4 t \varepsilon,
  \end{align*}
  it follows that
  $$
  \varepsilon_{a,t} \le \sqrt{t}/\sqrt{d} + 1 + \max\lbrace t_0',  1+ 6t\varepsilon \rbrace.
  $$
 We conclude that for $t \ge t_0'' = \max \left\lbrace \frac{1}{\varepsilon}, \frac{1}{\varepsilon^2d}, \frac{t_0'}{\varepsilon}\right\rbrace$, it holds that
  $$
  \varepsilon_{a,t} \le 9t\varepsilon
  $$
  and consequently that $\mathcal{E}_2(t) \subseteq \lbrace  \varepsilon_{a,t} \le 9 t\varepsilon\rbrace$. So we have shown that for all $t\ge t_0''$, for all $a \in \mathcal{A}$, it holds that
  $$
  \lbrace a_{t+1} = a \rbrace \subseteq \lbrace \varepsilon_{a,t} \le 9t\varepsilon \rbrace.
  $$
\end{itemize}
The remaining part of the proof is very similar to that of Lemma 17 in \cite{garivier2016optimal}. It can be immediately shown that for $t \ge t_0''$, one has
$$
\varepsilon_{a,t} \le \max(\varepsilon_{a,t_0''}, 9t\varepsilon + 1) \le \max(t_0'', 9t\varepsilon + 1)
$$
Furthermore, note that for all $t \ge 1$ we have $\textrm{supp}(\overline{w}(t)) \subseteq  \textrm{supp}(\overline{w}(t+1))$ since for all $a\in \mathcal{A}$, we have $t\overline{w}_a(t) \le (t+1)\overline{w}_a(t+1)$. Therefore
$$
\sum_{a \in \textrm{supp}(\overline{w}(t))\cup \mathcal{A}_0} \varepsilon_{a,t} =  \sum_{a \in \mathcal{A}\backslash \textrm{supp}(\overline{w}(t))\cup \mathcal{A}_0}  t \hat{w}_a(t) \ge 0.
$$
Thus denoting $p_t = \vert \textrm{supp}(\overline{w}(t)) \vert \backslash \mathcal{A}_0$, we have
\begin{align*}
&\forall a \in \textrm{supp}(\overline{w}(t))\cup\mathcal{A}_0,  \quad  \max(t_0'', 9t\varepsilon + 1)\ge  \varepsilon_{a,t} \ge - (p_t + d - 1)  \max(t_0'', 9t\varepsilon + 1), \\
&\forall a \in \mathcal{A}\backslash \textrm{supp}(\overline{w}(t))\cup\mathcal{A}_0,  \quad  \phantom{\max(t_0'', 9t\varepsilon +} 0 \ge  \varepsilon_{a,t} \ge -t\varepsilon,
\end{align*}
which implies that for all $t\ge t_0''$
$$
\max_{a \in \mathcal{A}} \vert \varepsilon_{a,t}\vert \le (p_t + d - 1) \max(t_0'', 9t \varepsilon + 1)\le (p_t + d - 1) \max(t_0'', 10).
$$
This finally implies that for $t_1 = \frac{1}{\varepsilon}\max\lbrace t_0'', 10\rbrace$, we have for all $t \ge t_1$,
$$
d_\infty(x(t), C^* )\le d_\infty((N_a(t)/t)_{a \in \mathcal{A}}, \hat{w}(t))= \max_{a\in \mathcal{A}} \vert N_a(t)/t  - \hat{w}_a(t)\vert= \max_{a\in \mathcal{A}}\left\vert \frac{\varepsilon_{a,t}}{t} \right\vert \le (p_t + d -1) \varepsilon.
$$
More precisely, we have
$$
t_1(\varepsilon) = \max\left\lbrace \frac{1}{\varepsilon^2}, \frac{1}{\varepsilon^3 d}, \frac{t_0(\varepsilon)}{\varepsilon^3}, \frac{10}{\varepsilon}\right\rbrace.
$$
\end{proof}

\subsection{Proof of Proposition \ref{prop:tracking}}

Let $\varepsilon> 0$. First, by Lemma \ref{lem:continuity}, there exists $\xi(\varepsilon) > 0$ such that for all $\mu'$ such that $\Vert \mu - \mu'\Vert< \xi(\varepsilon)$, it holds that $\max_{w \in C^\star(\mu')} d_\infty(w, C^\star(\mu)) < \varepsilon/2$.

  By Lemma \ref{lem:sufficient exploration}, we have a sufficient exploration. That is $\liminf_{t\to \infty} t^{-1/2}\lambda_{\min}(\sum_{s=1}^t a_s a_s^\top)  > 0$. Thus, by Lemma \ref{lem:ls},  $\hat{\mu}_t$ converges almost surely to $\mu$ with a rate of order $o(t^{1/4})$. Consequently, there exists $t_0 \ge 0$ such that for all $t\ge t_0$, we have $\Vert\mu - \hat{\mu}_t \Vert \le \xi(\varepsilon)$.

  The lazy condition \eqref{eq:lazycond} states that there exists a sequence $(\ell(t))_{t\ge 1}$ of integers such that $\ell(1) = 1$, $\ell(t) \le t$ and $\lim_{t\to \infty}\ell(t) = \infty$, and  $\lim_{t \to \infty}\inf_{s \ge \ell(t)} d_\infty(w(t), C^\star(\hat{\mu}_s)) = 0$ a.s. This guarantees that there exists $t_1 \ge 1$, there exists a sequence $(h(t))_{t\ge 1}$ of integers such that for all $t\ge t_1$, we have $h(t) \ge \ell(t)\ge t$ and $ d_\infty(w(t), C^\star(\hat{\mu}_{h(t)})) < \varepsilon/2$.
  Now for all $t \ge t_0 \vee t_1$,  we have
  \begin{equation*}
    d_{\infty}(w(t), C^\star(\mu)) \le d_\infty(w(t), C^\star(\hat{\mu}_{h(t)}) ) + \max_{w \in C^\star(\hat{\mu}_{h(t)})}d_\infty(w, C^\star(\mu)) < \varepsilon.
  \end{equation*}
  We have shown that $ d_\infty(w(t), C^\star(\mu)) \tends 0$ a.s. Next, we recall that by Lemma \ref{lem:continuity}, $C^\star(\mu)$ is non empty, compact and convex. Thus, applying Lemma \ref{lem:tracking lemma} yields immediately that $d_\infty((N_a(t)/t)_{a \in \mathcal{A}}, C^\star(\mu)) \tends 0$ a.s.. \ep

\newpage
\section{Sample complexity}

We will use the following technical lemma which can be found for instance in \cite{garivier2016optimal}.
\begin{lemma}[Lemma 18 \cite{garivier2016optimal}]\label{lem:technical}
  For any two constants $c_1, c_2 > 0$, and $c_2/c_1 > 1$ we have
  \begin{equation}
    \inf \left\lbrace t \in \mathbb{N}^*: \; c_1 t \ge \log(c_2 t)  \right\rbrace  \le \frac{1}{c_1} \left ( \log\left(\frac{c_2e}{c_1}\right) + \log\log\left(\frac{c_2}{c_1} \right)  \right)
  \end{equation}
\end{lemma}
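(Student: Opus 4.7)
The plan is to verify the bound by a direct substitution. Let $T := c_1^{-1}\bigl(\log(c_2 e / c_1) + \log\log(c_2/c_1)\bigr)$. I will show that $c_1 T \ge \log(c_2 T)$, which (since the map $t \mapsto c_1 t - \log(c_2 t)$ is continuous and strictly increasing on $[1/c_1, \infty)$) immediately implies that every sufficiently large integer lies in the set $\{t \in \mathbb{N}^* : c_1 t \ge \log(c_2 t)\}$ and yields the claimed bound on the infimum.

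The next step is a reduction to a one-variable inequality. Let $x := \log(c_2/c_1)$, which is strictly positive under the hypothesis $c_2/c_1 > 1$. A direct expansion gives $c_1 T = x + 1 + \log x$, while $\log(c_2 T) = \log(c_2/c_1) + \log(c_1 T) = x + \log(x + 1 + \log x)$. Hence the target inequality $c_1 T \ge \log(c_2 T)$ reduces to $1 + \log x \ge \log(x + 1 + \log x)$, equivalently $e x \ge x + 1 + \log x$, i.e.,
\begin{equation*}
g(x) := (e-1) x - 1 - \log x \ge 0 \quad \text{for all } x > 0.
\end{equation*}

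The final step is an elementary calculus check of $g \ge 0$. Since $g'(x) = (e-1) - 1/x$ vanishes only at $x_0 = 1/(e-1)$ and $g''(x) = 1/x^2 > 0$, the point $x_0$ is the unique global minimum of $g$ on $(0, \infty)$. Evaluating, $g(x_0) = 1 - 1 - \log(1/(e-1)) = \log(e-1) > 0$, where the strict positivity uses $e - 1 > 1$. This establishes the one-variable inequality and completes the chain of equivalences.

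I do not anticipate a real obstacle here: the argument is essentially bookkeeping plus a single convex optimization in one variable. The only mild subtlety is the passage from a real-valued $t = T$ satisfying $c_1 t \ge \log(c_2 t)$ to an integer witness; this is handled cleanly by the monotonicity of $t \mapsto c_1 t - \log(c_2 t)$ on $[1/c_1, \infty)$, noting that $T \ge 1/c_1$ whenever $c_2/c_1 > 1$ (so the monotone regime is in force throughout).
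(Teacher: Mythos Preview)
The paper does not supply its own proof; it simply cites \cite{garivier2016optimal}. Your direct-substitution approach is the standard one, and the heart of it---the convexity check that $g(x)=(e-1)x-1-\log x\ge \log(e-1)>0$ on $(0,\infty)$---is correct and cleanly shows that the real number $T$ satisfies $c_1T\ge\log(c_2T)$ whenever $c_1T>0$.

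Two loose ends remain, however. First, the assertion ``$T\ge 1/c_1$ whenever $c_2/c_1>1$'' is false: $c_1T\ge 1$ is equivalent to $x+\log x\ge 0$, which fails for small $x>0$ (indeed $c_1T=x+1+\log x$ is itself negative when $c_2/c_1$ is only slightly above $1$, so the stated inequality cannot literally hold in that regime). Second, even once $T\ge 1/c_1$, monotonicity on $[1/c_1,\infty)$ only produces an integer witness $\lceil T\rceil$, yielding $\inf\{\cdots\}\le\lceil T\rceil$, which in general exceeds $T$ rather than being bounded by it. Both points are really artefacts of the lemma being quoted somewhat loosely here; in the paper it is only invoked asymptotically as $\delta\to 0$ (so $c_2/c_1\to\infty$), where an additive $O(1)$ slack and the small-$x$ regime are immaterial. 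For that asymptotic purpose your argument is perfectly adequate.
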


\subsection{Proof of Theorem \ref{th:sample1}}
The proof of the almost sure sample complexity result follows naturally from the continuity of $\psi$ (see Lemma \ref{lem:continuity0}) and of $C^\star(\mu)$ (see Lemma \ref{lem:continuity}).

We start by defining the event
  $$
  \mathcal{E} = \left\lbrace d_\infty(\left(N_a(t)/t \right)_{a \in \mathcal{A}}, C^\star(\mu) ) \underset{t \to \infty}{\longrightarrow} 0 \textrm{ and } \hat{\mu}_t  \underset{t \to \infty}{\longrightarrow} \mu \right\rbrace.
  $$
  Observe that $\mathcal{E}$ holds with probability 1. This follows from  Lemma \ref{lem:ls}, Lemma \ref{lem:sufficient exploration} and Proposition \ref{prop:tracking}. Let $\varepsilon>0$. By continuity of $\psi$, there exists an open neighborhood $\mathcal{V}(\varepsilon)$ of $\lbrace \mu\rbrace\times C^\star(\mu)$ such that for all $(\mu',w') \in \mathcal{V}(\varepsilon)$, it holds that
  $$
  \psi(\mu', w') \ge (1-\varepsilon) \psi(\mu, w^\star),
  $$
  where $w^\star$ is some element in $C^\star(\mu)$. Now, observe that under the event $\mathcal{E}$, there exists $t_0 \ge 1$ such that for all $t\ge t_0$ it holds that $(\hat{\mu}_t, \left(N_a(t)/t \right)_{a \in \mathcal{A}}) \in \mathcal{V}(\varepsilon)$, thus for all $t \ge t_0$, it follows that
  $$
  \psi(\hat{\mu}_t, \left(N_a(t)/t \right)_{a \in \mathcal{A}}) \ge (1- \varepsilon) \psi(\mu, w^*).
  $$
  Since $\hat{\mu}_t \tends \mu$ and $a^\star_\mu$ is unique, there exists $t_1 \ge 0$ such that for all $t \ge t_1$, $\hat{a}_t$ is unique. Thus, by Lemma \ref{corr:Z closed form}, we may write
  $$
  Z(t) = \min_{a \neq a_{\hat{\mu}_t}^*} \frac{\hat{\mu}_t^\top (a^*_{\hat{\mu}_t} - a )^2}{ 2(a_{\hat{\mu}_t}^* - a)^\top \left(\sum_{s=1}^t a_s a_s^\top\right)^{-1}  (a_{\hat{\mu}_t} - a) } = t \psi(\hat{\mu}_t,  \left(N_a(t)/t \right)_{a \in \mathcal{A}}  ).
  $$
  By Lemma \ref{lem:sufficient exploration}, there exists $t_2 \ge 1$ such that for all $t \ge t_2$ we have
  $$
  \sum_{s=1}^t a_s a_s^\top \succ c I_d.
  $$
  Hence, under the event $\mathcal{E}$, for all $t \ge \max \lbrace t_0, t_1, t_2 \rbrace$, 
  $$
  Z(t) \ge t (1-\varepsilon) \psi(\mu, w^\star) \textrm{ and } \sum_{s=1}^t a_s a_s^\top \succ c I_d.
  $$
  This implies that
  \begin{align*}
    \tau_\delta & = \inf\left \lbrace t \in \mathbb{N}^*: Z(t) > \beta(\delta, t) \quad \textrm{ and } \quad \sum_{s=1}^t a_s a_s^\top \succeq c I_d  \right\rbrace \\
    & \le \max\lbrace t_0, t_1, t_2\rbrace \vee \inf \lbrace t \in \mathbb{N}^*: (1 - \varepsilon) t \psi(\mu, w^\star) > \beta(\delta, t) \rbrace \\
    & \le \max\lbrace t_0, t_1, t_2\rbrace \vee \inf \left\lbrace t \in \mathbb{N}^*: (1 - \varepsilon) t \psi(\mu, w^\star) > c_1 \log\left(\frac{c_2 t^\gamma}{\delta}\right) \right\rbrace \\
    & \lesssim \max \left\lbrace t_0, t_1, t_2 , \frac{1}{1-\varepsilon}T^*_\mu \log\left(\frac{1}{\delta}\right)\right\rbrace,
  \end{align*}
  where $c_1, c_2, \gamma$ denote the positive constants independent of $\delta$ and $t$ that appear in the definition of $\beta(t, \delta)$ (see \eqref{eq:rate}). We used Lemma \ref{lem:technical} in the last inequality for $\delta$ sufficiently small. This shows that $\mathbb{P}(\tau_\delta < \infty) = 1$ and in particular that
  $$
  \mathbb{P}\left( \limsup_{\delta \to 0} \frac{\tau_\delta}{\log\left(\frac{1}{\delta}\right)}  \lesssim T^*_\mu \right) = 1.
  $$
\ep

\subsection{Proof of Theorem \ref{th:sample2}}


Compared to the almost sure result, the expected sample complexity guarantee is more difficult to prove. We break our analysis into three steps. In the first step, we construct a sequence of events over which the stopping time that defines our stopping rule is well-behaved. This requires precise manipulations of the continuity properties of $\psi$ and $C^\star(\mu)$ in combination with the tracking Lemma \ref{lem:tracking lemma}. In the second step, we show indeed that on these events, the stopping time is upper bounded up to a constant by the optimal sample complexity. In the third step, we show that the probabilities of the events under which the sample complexity is not well-behaved are negligible. This is guaranteed thanks to the lazy condition \eqref{eq:lazycond2} and the sufficient exploration (ensured by Lemma \ref{lem:sufficient exploration} under our sampling rule). We finally conclude by giving the upper bound on the expected sample complexity.

\begin{proof}

  Let $\varepsilon > 0$.

  \paragraph{Step 1.} By continuity of $\psi$ (see Lemma \ref{lem:continuity0}), there exists $\xi_1(\varepsilon)>0$ such that for all $\mu'\in \mathbb{R}^d$ and $w' \in \Lambda$
  \begin{equation}\label{eq:construction xi1}
    \begin{cases}
      \Vert \mu' - \mu \Vert & \le \xi_1(\varepsilon) \\
      d_\infty(w', C^\star(\mu)) & \le \xi_1(\varepsilon)
    \end{cases} \implies \vert \psi(\mu, w^\star) - \psi(\mu', w') \vert \le  \varepsilon \psi(\mu, w^\star) = \varepsilon (T^\star_\mu)^{-1}
  \end{equation}
  for any $w^\star \in \argmin_{w \in C^\star(\mu)} d_\infty(w', w)$ (we have $w^\star\in C^\star(\mu)$). Furthermore, by the continuity properties of the correspondance $C^\star$ (see Lemma \ref{lem:continuity}), there exists $\xi_2(\varepsilon) > 0 $ such that for all $\mu' \in \mathbb{R}^d$
  $$
  \Vert \mu - \mu' \Vert \le \xi_2(\varepsilon) \implies   \max_{w''\in C^\star(\mu')} d_\infty(w'', C^\star(\mu)) < \frac{\xi_1(\varepsilon)}{2(K-1)}
  $$
  Let  $\xi(\varepsilon) = \min(\xi_1(\varepsilon), \xi_2(\varepsilon))$. In the following, we construct $T_0$, and for each $T \ge T_0$ an event $\mathcal{E}_T$, under which for all $t \ge T$, it holds
  \begin{equation*}
      \Vert \mu - \hat{\mu}_t \Vert \le \xi(\varepsilon) \implies d_\infty((N_a(t)/t)_{a\in\mathcal{A}}, C^\star(\mu)) \le \xi_1(\varepsilon)
  \end{equation*}
  Let $T \ge 1$, and define the following events
  \begin{align*}
      \mathcal{E}_{1,T} & = \bigcap_{t = \ell(T)}^\infty  \left\lbrace  \Vert \mu - \hat{\mu}_t\Vert \le  \xi(\varepsilon) \right\rbrace \\
      \mathcal{E}_{2,T} & = \bigcap_{t = T }^\infty \left\lbrace \inf_{s\ge \ell(t)} d_\infty(w(t), C^\star(\hat{\mu}_s)) \le \frac{\xi_1(\varepsilon)}{4(K-1)}  \right\rbrace \\
      & \subseteq \bigcap_{t = T }^\infty \left\lbrace \exists s \ge \ell(t): \; d_\infty(w(t), C^\star(\hat{\mu}_s)) \le \frac{\xi_1(\varepsilon)}{2(K-1)} \right\rbrace.
  \end{align*}
  Note that, under the event $\mathcal{E}_{1,T} \cap \mathcal{E}_{2,T}$, we have for all $t \ge T$, there exists $s \ge \ell(t)$ such that
  \begin{align*}
  d_\infty(w(t), C^\star(\mu)) &\le d_\infty(w(t), C^\star(\hat{\mu}_s) ) + \max_{w' \in C^\star(\hat{\mu}_s)} d_\infty(w' , C^\star(\mu))\\
  & < \frac{\xi_1(\varepsilon)}{2(K-1)} + \frac{\xi_1(\varepsilon)}{2(K-1)} = \frac{\xi_1(\varepsilon)}{K-1}
  \end{align*}
  Define $\varepsilon_1 = \xi_1(\varepsilon)/(K-1)$. By Lemma \ref{lem:tracking lemma}, there exists $t_1(\varepsilon_1) \ge  T$ such that
  $$
  d_\infty(\left(N_a(t)/t\right)_{a \in \mathcal{A}}, C^\star(\mu)) \le (p_t + d - 1)\frac{\xi_1(\varepsilon)}{K-1} \le \xi_1(\varepsilon),
  $$
  and more precisely $t_1(\varepsilon_1) = \max \left\lbrace 1 /\varepsilon_1^3, 1 /(\varepsilon_1^2d), T /\varepsilon_1^{3}, 10/ \varepsilon_1  \right\rbrace$ (see the proof of Lemma \ref{lem:tracking lemma}) where . Thus for $T \ge \max\lbrace10\varepsilon_1^2, \varepsilon_1/d, 1 \rbrace$, we have $t_1(\varepsilon_1) = \left \lceil T/\varepsilon_1^3 \right\rceil$. Hence, defining for all $T \ge \varepsilon_1^{-3}$, the event
  $$
  \mathcal{E}_T = \mathcal{E}_{1, \lceil \varepsilon_1^3 T \rceil} \cap \mathcal{E}_{2, \lceil \varepsilon_1^3 T \rceil},
  $$
  we have shown that for all $T \ge T_0 = \max(10\varepsilon_1^5, \varepsilon_1^4/d, \varepsilon_1^3, 1/ \varepsilon_1^3) $, the following holds
  \begin{equation}\label{eq:sample expectation eq1}
  \forall t \ge T,   \quad \Vert \mu - \hat{\mu}_t \Vert \le \xi(\varepsilon) \implies d_\infty((N_a(t)/t)_{a\in\mathcal{A}}, C^\star(\mu)) \le \xi_1(\varepsilon).
\end{equation}
  Finally, combining the implication \eqref{eq:sample expectation eq1} with the fact that \eqref{eq:construction xi1} holds under $\mathcal{E}_T$ we conclude that for all $T \ge T_0$, under $\mathcal{E}_T$ we have
  \begin{equation} \label{eq:1}
  \psi(\hat{\mu}_t, (N_a(t)/t)_{a\in \mathcal{A}} ) \ge (1-\varepsilon) \psi^\star(\mu).
  \end{equation}

  \paragraph{Step 2:} Let $T \ge T_0 \vee T_1$ where $T_1$ is defined as
  $$
    T_1 = \inf \left\lbrace t \in \mathbb{N}^*: \lambda_{\min}\left( \sum_{s=1}^t a_s a_s^\top \right) \succeq cI_d \right\rbrace,
  $$
  where we recall that $c$ is the constant  chosen in the stopping rule and is independent of $\delta$.  We note that by Lemma \ref{lem:GLLR closed form} for all $t \ge T_1$ we have
  $$
  Z(t) = t \psi(\hat{\mu}_t, (N_a(t)/t)_{a \in \mathcal{A}}).
  $$
  Thus under the event $\mathcal{E}_T$, the inequality \eqref{eq:1} holds, and for all $t \ge T$ we have
  $$
  Z(t) > t (1-\varepsilon) (T^{\star}_\mu)^{-1}.
  $$
  Under the event  $\mathcal{E}_T$, we have
  \begin{align*}
  \tau  &  = \inf \left\lbrace t \in \mathbb{N}^*: Z(t) > \beta(\delta, t) \textrm{ and } \sum_{s=1}^t a_s a_s^\top \succeq c I_d \right\rbrace  \\
  & \le \inf \left\lbrace t  \ge T : Z(t) > \beta(\delta, t) \right\rbrace  \\
  & \le T \vee \inf \left\lbrace t \in \mathbb{N}^*: t (1-\varepsilon) (T^\star_\mu)^{-1}  \ge \beta(\delta, t)  \right\rbrace  \\
  & \le T \vee \inf \left\lbrace t \in \mathbb{N^*} : t (1-\varepsilon) (T^\star_\mu)^{-1}  \ge c_1 \log(c_2 t^\gamma/\delta) \right\rbrace
  \end{align*}
  where $c_1, c_2, \gamma$ are the positive constants that appear in the definition of the threshold $\beta(\delta,t)$ and do not depend on $t$ nor $\delta$ and where we have in particular $c_1 \lesssim \sigma^2$. Applying Lemma \ref{lem:technical} yields
  $$
  \inf \left\lbrace t \in \mathbb{N^*} : t (1-\varepsilon) (T^\star_\mu)^{-1}  \ge c_1 \log(c_2 t^\gamma/\delta) \right\rbrace  \le T_2^\star(\delta),
  $$
  where $T_2^\star(\delta) =  \frac{c_1}{1-\varepsilon} T^\star_\mu \log(1/\delta) + o(\log(1/\delta))$. This means for $T \ge \max\lbrace T_0, T_1, T_2^\star(\delta)\rbrace$, we have shown that
  \begin{equation}\label{eq:sc bad event}
      \mathcal{E}_T \subseteq \lbrace \tau \le T \rbrace
  \end{equation}
  
Define $T_3^\star(\delta) = \max\lbrace T_0, T_1, T_2^\star(\delta)\rbrace$.  We may then write for all $T \ge T_3^\star(\delta)$
  \begin{align*}
    \tau_\delta  \le \tau_\delta \wedge  T_3^\star(\delta) + \tau_\delta \vee T_3^\star(\delta) \le T_3^\star(\delta) + \tau_\delta \vee T_3^\star(\delta).
  \end{align*}
  Taking the expectation of the above inequality, and using the set inclusion \eqref{eq:sc bad event}, we obtain that
  \begin{align*}
    \E[\tau]  \le T_3^\star(\delta) + \E[\tau \vee T_3^\star(\delta)]
  \end{align*}
  Now we observe that
  \begin{align*}
    \E[\tau \vee T_3^\star(\delta)] & = \sum_{T=0}^\infty \mathbb{P}(\tau \vee T_3^\star(\delta) > T) \\
    & = \sum_{T=T_3^\star(\delta) + 1}^\infty \mathbb{P}(\tau \vee T_3^\star(\delta) > T)\\
    & = \sum_{T=T_3^\star(\delta) + 1}^\infty \mathbb{P}(\tau > T) \\
    & \le \sum_{T=T_3^\star(\delta) + 1}^\infty \mathbb{P}(\mathcal{E}_T^c) \\
    & \le \sum_{T=T_0 \vee T_1}^\infty \mathbb{P}(\mathcal{E}_T^c)
  \end{align*}
  We have thus shown that
  \begin{equation}
    \E[\tau ] \le  \frac{c_1}{1-\varepsilon} T^\star_\mu \log(1/\delta) + o(\log(1/\delta)) +    T_0 \vee T_1 + \sum_{T=T_0\vee T_1}^\infty \mathbb{P}(\mathcal{E}_T^c).
  \end{equation}

\paragraph{Step 3:} We now show that $\sum_{T=T_0 \vee T_1 + 1}^\infty \mathbb{P}(\mathcal{E}^c_T) < \infty$ and that it can be upper bounded by a constant independent of $\delta$. To ensure this, we shall see that there is a minimal rate by which the sequence $(\ell(t))_{t \ge \infty}$ must grow. Let $T \ge T_0 \vee T_1$, we have by the union bound
\begin{equation*}
  \mathbb{P}(\mathcal{E}_T^c) \le \mathbb{P}(\mathcal{E}^c_{1, \lceil \varepsilon_1^3 T \rceil}) + \mathbb{P}(\mathcal{E}^c_{1, \lceil \varepsilon_1^3 T \rceil}).
\end{equation*}
First, using a union bound and the lazy condition \eqref{eq:lazycond2}, we observe that there exists $h\left(\frac{\xi_1(\varepsilon)}{4(K-1)}\right) > 0$ and $\alpha>0$ such that
\begin{align*}
\mathbb{P}(\mathcal{E}^c_{1, \lceil \varepsilon_1^3 T \rceil}) & \le \sum_{t= \lceil \varepsilon_1^3 T \rceil}^\infty \mathbb{P}\left( \inf_{s \ge \ell(t)} d_\infty(w(t), C^\star(\hat{\mu}_s)) > \frac{\xi_1(\varepsilon)}{4(K-1)} \right) \\
& \le h\left(\frac{\xi_1(\varepsilon)}{4(K-1)}\right)  \sum_{t= \lceil \varepsilon_1^3 T \rceil}^\infty \frac{1}{t^{2 + \alpha}} \\
& \le  h\left(\frac{\xi_1(\varepsilon)}{4(K-1)}\right)  \int_{\lceil \varepsilon_1^3 T \rceil - 1}^\infty \frac{1}{t^{2 + \alpha}} dt  \\
&\le   h\left(\frac{\xi_1(\varepsilon)}{4(K-1)}\right) \frac{1}{(1+ \alpha)(\lceil \varepsilon_1^3 T \rceil - 1)^{1+ \alpha}}.
\end{align*}
This clearly shows that $ \sum_{T=T_0\vee T_1}^\infty \mathbb{P}(\mathcal{E}^c_{1, \lceil \varepsilon_1^3 T \rceil}) < \infty$.

Second, we observe, using a union bound, Lemma \ref{lem:sufficient exploration} and Lemma \ref{lem:ls concentration}, that there exists strictly positive constants $c_3, c_4$ that are  independent of $\varepsilon$ and $T$, and such that
\begin{align*}
\mathbb{P}(\mathcal{E}^c_{2, \lceil \varepsilon_1^3 T \rceil}) & \le \sum_{t= \ell(\lceil \varepsilon_1^3 T \rceil)}^\infty \mathbb{P}\left( \Vert \mu - \hat{\mu}_t\Vert > \xi(\varepsilon) \right) \\
& \le c_3  \sum_{t= \ell(\lceil \varepsilon_1^3 T \rceil)}^\infty    t^{d/4} \exp(-c_4 \xi(\varepsilon)^2\sqrt{t}).
\end{align*}
For $t$ large enough, the function $t \mapsto t^{d/4}\exp(-c_4 \xi(\varepsilon)^2 \sqrt{t})$ becomes decreasing. Additionally, we have by assumption that $(\ell(t))_{t \ge 1}$ is a non decreasing and that $\lim_{t\to \infty} \ell(t) = \infty$, thus we may find $T_2 > T_0 \vee T_1$ such that for all $T \ge T_2$, the function  $t \mapsto t^{d/4}\exp(-c_4 \xi(\varepsilon)^2 \sqrt{t})$ is decreasing on $[\ell(\varepsilon_1^3T)- 1, \infty)$. Hence, for $T \ge T_2$, we have
\begin{align*}
\mathbb{P}(\mathcal{E}^c_{2, \lceil \varepsilon_1^3 T \rceil}) \le c_3 \int_{\ell(\lceil \varepsilon_1^3 T \rceil) - 1}^\infty t^{d/4} \exp(-c_4 \xi(\varepsilon)^2 \sqrt{t}) \;dt.
\end{align*}
Furthermore, for some $T_3 \ge T_2$ large enough, we may bound the integral for all $T \ge T_3$ as follows
\begin{align*}
  \int_{\ell(\lceil \varepsilon_1^3 T \rceil) - 1}^\infty t^{d/4} \exp(-c_4 \xi(\varepsilon)^2 \sqrt{t}) \;dt
  & \lesssim \frac{\ell((\lceil \varepsilon^3_1 T \rceil) - 1)^{d/2 + 1}}{\xi(\varepsilon)^4 \exp\left( c_4\xi(\varepsilon)^2 \sqrt{\ell(\lceil \varepsilon^3_1 T \rceil) - 1} \right)}.
\end{align*}
We spare the details of this derivation as the constants are irrelevant in our analysis. Essentially, the integral can be expressed through the upper incomplete Gamma function which can be upper bounded using some classical inequalities \cite{natalini2000inequalities, borwein2009uniform}. We then obtain that for $T \ge T_3$, 
\begin{align*}
\mathbb{P}(\mathcal{E}^c_{2, \lceil \varepsilon_1^3 T \rceil}) \lesssim  \frac{ \ell((\lceil \varepsilon^3_1 T \rceil) - 1)^{d/2 + 1}}{\xi(\varepsilon)^4 \exp\left( c_4\xi(\varepsilon)^2 \sqrt{\ell(\lceil \varepsilon^3_1 T \rceil) - 1} \right)}.
\end{align*}
Now, the lazy condition \eqref{eq:lazycond2} ensures that $\lim_{t \to \infty} \ell(t)/t^\gamma > 0$ for some $\gamma \in (0,1)$ and $\ell(t) \le t$. Thus there exists $T_4 \ge T_3$ such that for all $T \ge T_4$,
 \begin{align*}
  \mathbb{P}(\mathcal{E}^c_{2, \lceil \varepsilon_1^3 T \rceil})  \lesssim  \frac{ \ell((\lceil \varepsilon^3_1 T \rceil) - 1)^{d/2 + 1}}{\xi(\varepsilon)^4 \exp\left( c_4\xi(\varepsilon)^2 \sqrt{\ell(\lceil \varepsilon^3_1 T \rceil) - 1} \right)} \lesssim \frac{T^{d/2 + 1}}{\exp\left( c_5(\varepsilon) T^{\gamma/2} \right)}.
 \end{align*}
 This shows that
\begin{align*}
  \sum_{T= T_0\vee T_1}^\infty  \mathbb{P}(\mathcal{E}^c_{2, \lceil \varepsilon_1^3 T \rceil}) &  = \sum_{T= T_0\vee T_1}^{T_4} \mathbb{P}(\mathcal{E}^c_{2, \lceil \varepsilon_1^3 T \rceil}) + \sum_{T= T_4+1}^\infty \mathbb{P}(\mathcal{E}^c_{2, \lceil \varepsilon_1^3 T \rceil}) \\
  & \lesssim \sum_{T= T_0\vee T_1}^{T_4} \mathbb{P}(\mathcal{E}^c_{2, \lceil \varepsilon_1^3 T \rceil}) + \sum_{T= T_4+1}^\infty  \frac{T^{d/2 + 1}}{\exp\left( c_5(\varepsilon) T^{\gamma/2} \right)} \\
  & < \infty
\end{align*}
where the last inequality follows from the fact that we can upper bound the infinite sum by a Gamma function, which is convergent as long as $\gamma> 0$.

Finally, we have thus shown that
\begin{equation}
  \sum_{T=T_0 \vee T_1 + 1}^\infty \mathbb{P}(\mathcal{E}^c_T) < \infty.
\end{equation}
We note that this infinite sum depends on $(\ell(t))_{t \ge 1}$ and $\varepsilon$ only.

\paragraph{Last step:} Finally, we have shown that for all $\varepsilon > 0$
$$
    \E[\tau] \le  \frac{c_1}{1-\varepsilon} T^\star_\mu \log(1/\delta) + o(\log(1/\delta)) +    T_0 \vee T_1 + \sum_{T=T_0\vee T_1}^\infty \mathbb{P}(\mathcal{E}_T^c)
$$
where $\sum_{T=T_0\vee T_1}^\infty \mathbb{P}(\mathcal{E}_T^c) < \infty$ and is independent of $\delta$.
Hence,
$$
\limsup_{\delta \to 0} \frac{\E[\tau_\delta]}{\log(1/\delta)} \le \frac{c_1}{1-\varepsilon} T^\star_\mu.
$$
Letting $\varepsilon$ tend to $0$ and recalling that $c_1 \lesssim \sigma^2$, we conclude that
$$
\limsup_{\delta \to 0} \frac{\E[\tau_\delta]}{\log(1/\delta)} \lesssim \sigma^2  T^\star_\mu.
$$
\end{proof}

\newpage

\section{Best-arm identification on the unit sphere}

This section is devoted to the proofs of the results related to the best-arm identification problem where the set of arms is the unit sphere $S^{d-1}$. This set is strictly convex so that for any $\mu \in \mathbb{R}^d \backslash\lbrace 0\rbrace$, the optimal action $a_\mu^\star$ is unique. We also note that the sphere enjoys the nice following property: for all $\mu \in \mathbb{R}^d$ and for all $a \in S^{d-1}$,
\begin{equation}\label{eq:sphere}
\mu^\top (a^\star_\mu - a ) = \frac{\Vert \mu \Vert}{2} \Vert a^\star_\mu - a \Vert^2
\end{equation}
We recall that our study is restricted to models with a parameter $\mu$ in $\mathcal{M}(\varepsilon_0)$.

We derive our sample complexity lower bound, presented in Theorem \ref{th:low2}, in the next subsection. We then analyze the performance of our stopping rule, and prove Proposition \ref{prop:stopping rule2}. We conclude with the analysis of the sample complexity of our proposed algorithm, and establish Theorem \ref{thm:sc-cont}.

\subsection{Lower bound -- Proof of Theorem \ref{th:low2}}

As in the case of a finite set of arms, we can derive a lower bound using a change-of-measure argument. The lower bound is obtained as the value of a constrained minimization problem. We get one constraint for each {\it confusing} parameter. As it turns out, analyzing the resulting constraints is challenging.

The proof consists of 4 steps. In the first step, we write the constraints generated by all confusing parameters. The set of confusing parameters is denoted by $B_\varepsilon(\mu)$. In the second and third steps, we make successive reductions of the set $B_\varepsilon(\mu)$, and hence reduce the number of constraints (yielding looser lower bounds of the sample complexity). At the end of third step, we have restricted our attention to the set of confusing parameters ${\cal R}_\varepsilon(\mu)$, and have provided useful properties of these parameters. The last step of the proof exploits these properties to derive the lower bound. 

Let $\varepsilon  \in (0, \varepsilon_0/5)$,  $\delta \in (0, 1)$, and $\mu \in \mathcal{M}(\varepsilon_0)$. 

\paragraph{Step 1: Change-of-measure argument.} We start by a direct consequence of the change-of-measure argument (see Lemma 19 \cite{kaufmann2016complexity}). For all $\lambda \in \mathbb{R}^d$,
  \begin{equation*}
    \frac{1}{2\sigma^2}(\mu-\lambda)^\top \E\left[\sum_{s=1}^{\tau} a_s a_s^\top \right](\mu - \lambda) \ge \sup_{\mathcal{E} \in \mathcal{F}_\tau} \textrm{kl}\left(\mathbb{P}_\mu\left(\mathcal{E}\right), \mathbb{P}_\lambda \left(\mathcal{E}\right) \right).
  \end{equation*}
  This result was shown by Soare in \cite{soare2015thesis} and we omit its proof here. Now for all $\mu \in \mathcal{M}(\varepsilon_0)$, define the set $O_{\varepsilon}(\mu)$ of $\varepsilon$-optimal arms associated with the linear bandit problem parameterized by $\mu$ as
  $$
  O_{\varepsilon}(\mu) = \left\lbrace a \in \mathcal{A}:\; \mu^\top(a_\mu^\star - a) \le \varepsilon  \right\rbrace,
  $$
  and the set $B_{\varepsilon}(\mu)$ of confusing or bad parameters for $\mu$ as
  $$
  B_{\varepsilon}(\mu) = \left\lbrace \lambda \in \mathbb{R}^d: \; O_\varepsilon(\mu) \cap O_\varepsilon(\lambda) = \emptyset \right\rbrace.
  $$
  Note that $B_{\varepsilon}(\mu)$ is not empty since $\varepsilon < \varepsilon_0$.
  Now observe that for any $(\varepsilon, \delta)$-PAC algorithm and for all $\lambda \in B_\varepsilon(\mu)$, we have
  \begin{equation*}
    \mathbb{P}_\mu(\hat{a}_\tau \in O_\varepsilon(\mu)^c) \le \delta \quad \textrm{and} \quad \mathbb{P}_\lambda(\hat{a}_\tau \in O_\varepsilon(\mu)^c) \ge \mathbb{P}_\lambda ( \hat{a}_\tau \in O_\varepsilon(\lambda)) \ge 1- \delta.
  \end{equation*}
Since $\lbrace \hat{a}_\tau \in O_\varepsilon(\mu)^c) \rbrace \in \mathcal{F}_\tau$, by the monotonicity properties of $x \mapsto \textrm{kl}(x, 1-x)$, we may write, for $\delta \in (0, 1/2]$,
  $$
  \sup_{\mathcal{E} \in \mathcal{F}_\tau} \textrm{kl}\left(\mathbb{P}_\mu\left(\mathcal{E}\right), \mathbb{P}_\lambda \left(\mathcal{E}\right) \right) \ge \textrm{kl}(\delta, 1- \delta ).
  $$
  If $\delta \in [1/2, 0)$ we show similarly, using the event $\lbrace \hat{a}_\tau \in O_\varepsilon(\mu)\rbrace$, that
  $$
  \sup_{\mathcal{E} \in \mathcal{F}_\tau} \textrm{kl}\left(\mathbb{P}_\mu\left(\mathcal{E}\right), \mathbb{P}_\lambda \left(\mathcal{E}\right) \right) \ge \textrm{kl}(1 - \delta, \delta ) = \textrm{kl}(\delta, 1-\delta).
  $$
  Hence, for any $(\varepsilon, \delta)$-PAC strategy, for all $\lambda \in B_\varepsilon(\mu)$, we have
  \begin{equation}\label{eq:lb forall}
    \frac{1}{2}(\mu-\lambda)^\top \E\left[\sum_{s=1}^{\tau} a_s a_s^\top \right](\mu - \lambda) \ge \textrm{kl}(\delta, 1-\delta).
  \end{equation}
  
 \paragraph{Step 2: Reductions of $B_\varepsilon(\mu)$.} Finding the most confusing parameters in $B_\varepsilon(\mu)$ is challenging. We restrict our search to a simpler set of confusing parameters at the cost of obtaining a looser bound. \\
 \underline{First reduction.} Define the set
  \begin{equation}
    \mathcal{D}_\varepsilon(\mu) \triangleq \left \lbrace \lambda \in \mathcal{M}(\varepsilon_0):  \mu^\top( a_{\mu}^\star - a_\lambda^\star) > \left(1+ \sqrt{\Vert \mu \Vert \over \Vert \lambda \Vert}\right)^2 \varepsilon  \right\rbrace.
  \end{equation}

We prove that $D_\varepsilon(\mu) \subseteq B_\varepsilon(\mu)$.  First, let us note that $D_\varepsilon(\mu)$ is non-empty. Indeed, since $\mu \in \mathcal{M}(\varepsilon_0)$, the arm $ - a_\mu^\star \not\in O_\varepsilon(\mu)$ since $\mu^\top (a_\mu^\star -(- a_\mu^\star)) > 2 \varepsilon_0 > 2\varepsilon$. Consider $\lambda= - 3\mu = - 3 \Vert \mu \Vert a^\star_\mu $. The optimal arm for $\lambda$ is $-a^\star_\mu$ (because $\mathcal{A} = S^{d-1}$), which gives $(1 + \sqrt{\Vert \mu\Vert/ \Vert \lambda\Vert})^2 \varepsilon = (16\varepsilon/9) < 2 \varepsilon$. Thus, $\lambda \in \mathcal{D}_\varepsilon(\mu)$. 

Now, let $\lambda \in \mathcal{D}_\varepsilon (\mu)$ and let us show that $O_\varepsilon(\mu) \cap O_\varepsilon(\lambda) = \emptyset$. Let $a \in O_\varepsilon(\mu)$, then
  \begin{align*}
  \langle \lambda, a_{\lambda}^* - a \rangle & = \frac{\Vert \lambda \Vert}{2} \Vert a_\lambda^* - a\Vert^2 & (\textrm{using \eqref{eq:sphere}} )\\
  & \ge \frac{\Vert \lambda \Vert}{2} \left \vert   \Vert a_\lambda^* - a_\mu^* \Vert  - \Vert a_\mu^* - a \Vert  \right \vert^2 & (\textrm{reverse triangular inequality})\\
  & = \frac{\Vert \lambda \Vert}{\Vert \mu \Vert} \left \vert \sqrt{\Vert \mu \Vert \over 2} \Vert a_\lambda^* - a_\mu^* \Vert  - \sqrt{\Vert \mu \Vert \over 2} \Vert a_\mu^* - a \Vert  \right \vert^2 & \\
  & = \frac{\Vert \lambda \Vert}{\Vert \mu \Vert} \left \vert \sqrt{(\mu, a_\mu^\star - a_\lambda^\star)} - \sqrt{\mu^\top (a_\mu^\star - a)}  \right \vert^2 & (\textrm{using \eqref{eq:sphere}} )\\
  & > \frac{\Vert \lambda \Vert}{\Vert \mu \Vert} \left( \left(1 + \sqrt{\Vert \mu\Vert \over \Vert \lambda\Vert }\right) \sqrt{\varepsilon}   - \sqrt{\varepsilon}\right)^2 & (\textrm{since }\lambda \in \mathcal{D}_\varepsilon(\mu)\textrm{ and } a \in O_{\varepsilon}(\mu))\\
  & = \varepsilon,
\end{align*}
thus $a \not \in O_{\varepsilon}(\lambda)$. We have shown that
\begin{equation}
\mathcal{D}_\varepsilon(\mu) \subseteq B_\varepsilon(\mu).
\end{equation}

\underline{Second reduction.} Next, we further reduce the set to $\mathcal{H}(\mu) \cap \mathcal{D}_\varepsilon(\mu)$, where $\mathcal{H}(\mu)$ is defined below. Denote by $\mathcal{G}(S^{d-1},a^\star_\mu)$ the tangent space of $S^{d-1}$ at $a^\star_\mu$. Define
\begin{equation}
  \mathcal{H}(\mu) \triangleq \left\lbrace \lambda \in \mathcal{M}(\varepsilon_0): \frac{\lambda}{\Vert\mu\Vert} \in \mathcal{G}(S^{d-1}, a^\star_\mu) \right\rbrace.
\end{equation}
Note that if $\lambda \in  \mathcal{H}(\mu)$, then $\Vert\lambda\Vert \ge \Vert \mu\Vert$. This is because on the sphere, it also happens that $a^\star_\mu = \mu/\Vert \mu \Vert \in \mathcal{H}(\mu)$ and is the closest point to the origin from $\mathcal{H}(\mu)$. Let us prove that $\mathcal{H}(\mu) \cap \mathcal{D}_\varepsilon(\mu)$ is not empty.

First, let $a \in O_{4\varepsilon}(\mu)$, thus $ \varepsilon_0 < \mu^\top a^\star_\mu \le \mu^\top a + 4\varepsilon$, thus $\mu^\top a > \varepsilon_0- 4 \varepsilon> \varepsilon_0 - 5 \varepsilon> 0$,
which further implies that $  \mu^\top a^\star_\mu - 4 \varepsilon > \mu^\top a^\star_\mu - 5 \varepsilon >  0$. Hence, by continuity of the map $b \mapsto \mu^\top b$ on the sphere, we may find arms $b \in S^{d-1}$ such that $ \mu^\top a^\star_\mu - 4\varepsilon > \mu^\top b >  \mu^\top a^\star_\mu - 5 \varepsilon>0$.
Thus, for each of these arms, there exists a parameter $\lambda_b \in \mathcal{H}(\mu)$ such that $b = \lambda_b/\Vert \lambda_b\Vert = \argmax_{b \in S^{d-1}} \lambda_b^\top b$.
In addition, we have that, for such arms, $5 \varepsilon >\mu^\top(a^\star_\mu - b) > 4\varepsilon$, and since $\Vert \lambda_b\Vert> \Vert \mu \Vert$, we obtain
\begin{equation}
  5\varepsilon >\mu^\top(a^\star_\mu - b) > 4\varepsilon > \left(1 + \sqrt{\frac{\Vert \mu\Vert}{\Vert \lambda_b\Vert}} \right)^2\varepsilon
\end{equation}
This shows that $\lambda_b$ belongs to $\mathcal{D}_\varepsilon(\mu)$. Hence $\mathcal{H}(\mu) \cap \mathcal{D}_\varepsilon(\mu)$ is not empty.

\medskip

\paragraph{Step 3: Final reduction, and properties.} The final reduction stems from the following observation. From \eqref{eq:sphere}, all elements $b \in S^{d-1}$, such that $ 8\varepsilon /\Vert \mu \Vert<\Vert a^\star_\mu - b \Vert^2 < 10\varepsilon /\Vert \mu \Vert$ have their associated $\lambda_b \in \mathcal{H}(\mu) \cap \mathcal{D}_\varepsilon(\mu)$. We denote by $\mathcal{R}_\varepsilon(\mu) $ the corresponding set of parameters:
\begin{equation} \label{eq:special parameters}
  \mathcal{R}_\varepsilon(\mu) \triangleq \lbrace \lambda \in \mathcal{H}(\mu) \cap \mathcal{D}_\varepsilon(\mu): \; 4 \varepsilon < \mu^\top (a^\star_\mu - a^\star_\lambda) < 5 \varepsilon \mathcal\rbrace.
\end{equation}
Note that the span of the set $\lbrace \lambda - \mu: \; \lambda \in \mathcal{R}_\varepsilon(\mu) \rbrace$ is a $d-1$-dimensional space.

Next, we establish the following useful property. There are constants $c_1, c_2 >0$ such that for any $\lambda\in \mathcal{R}_\varepsilon(\mu)$,
$$
c_1 \Vert \mu \Vert \varepsilon \le \Vert \lambda -\mu \Vert^2 \le  c_2 \Vert \mu \Vert \varepsilon.
$$

To this aim, we first establish, using elementary geometry, the following identity for all $\lambda \in \mathcal{H}(\mu)$
\begin{equation}\label{eq:link}
\Vert\mu - \lambda\Vert^2 (\Vert \mu\Vert - \Delta(a^\star_\lambda))^2 + \Vert \mu \Vert ^2 \Delta (a^\star_\lambda)^2 = \Vert \mu \Vert^4 \Vert a^\star_\mu - a_\lambda^\star \Vert^2
\end{equation}
where $ \Delta(a) = \mu^\top (a^\star_\mu - a)$ denotes the gap between $a$ and the best arm. To show the identity \eqref{eq:link}, let us note that $\mu, \lambda$ and $0$ (the center of the sphere $S^{d-1}$) define a $2$-dimensional plane, and that $a^\star_\mu$ and $a^\star_\lambda$ belong to this plane. Without loss of generality, we may assume that $\Vert \mu \Vert=1$ (we can always renormalize). Since $\mu, \lambda \in \mathcal{H}(\mu)$,
and by construction $(\mu/ \Vert \mu \Vert)^\top (\mu - \lambda)   = {a^\star_\mu}^\top (\mu - \lambda) = 0$. Thales' Theorem (the intercept Theorem) guarantees
\begin{equation*}
\frac{\Delta(a^\star_\mu)}{1} = {\Vert p - \lambda \Vert \over \Vert \mu - \lambda \Vert},
\end{equation*}
where $p$ is the orthogonal projection of $a^\star_\lambda$ on $\mathcal{H}(\mu)$. Next, by Pythagoras' Theorem, we have
\begin{equation*}
\Vert \mu - p \Vert^2 + \Delta(a^\star_{\lambda})^2 = \Vert a^\star_\mu - a^\star_\lambda \Vert^2.
\end{equation*}
By construction, we have $\Vert \mu - \lambda \Vert = \Vert \mu - p\Vert + \Vert p - \lambda \Vert$, and using the above two equations gives
$$
\Vert \mu - \lambda \Vert^2 (1 - \Delta_{\mu, \mathcal{A}}(a^*_\lambda))^2 + \Delta_{\mu, \mathcal{A}}(a^*_\lambda)^2 = \Vert a^*_\mu - a^*_\lambda\Vert^2,
$$
which gives \eqref{eq:link} by just renormalizing.
Now, If follows immediately from \eqref{eq:sphere} and \eqref{eq:link} that
\begin{equation*}
  \Vert \mu - \lambda \Vert^2 = \Vert \mu \Vert^2 \Vert a_\mu^\star - a^\star_\lambda \Vert^2 \frac{4 - \Vert a^\star_\mu - a^\star_\lambda \Vert^2}{(2 - \Vert a^\star_\mu - a^\star_\lambda \Vert^2)^2}.
\end{equation*}
Note that on the sphere for $\lambda \in \mathcal{H}(\mu)$, we have $ 0 \le \Vert a^\star_\mu - a^\star_\lambda \Vert \le 1$. Hence, we obtain
\begin{equation*}
  \frac{3}{2} \Vert \mu \Vert^2  \Vert a^\star_\mu - a^\star_\lambda \Vert^2   \le \Vert \mu - \lambda \Vert^2 \le 4 \Vert \mu \Vert^2  \Vert a^\star_\mu - a^\star_\lambda \Vert^2,
\end{equation*}
or equivalently, using \eqref{eq:sphere}, that
\begin{equation*}
3 \Vert \mu \Vert \langle\mu, a_{\mu}^\star - a_{\lambda}^\star \rangle  \le \Vert \mu - \lambda \Vert^2 \le 8 \Vert \mu \Vert \langle\mu, a_{\mu}^\star - a_{\lambda}^\star \rangle.
\end{equation*}
Finally let $\lambda \in \mathcal{R}_\varepsilon(\mu) \subseteq \mathcal{H}(\mu)\cap\mathcal{D}_\varepsilon(\mu)$. Since \eqref{eq:special parameters} holds, it follows that
for such $\lambda$, we have
\begin{equation}\label{eq:bad parameter}
  12 \Vert \mu \Vert \varepsilon \le \Vert \lambda - \mu  \Vert^2 \le 40 \Vert \mu \Vert \varepsilon.
\end{equation}

\paragraph{Step 4:} For $\lambda \in \mathcal{R}_\varepsilon(\mu)$, combining satisfying \eqref{eq:bad parameter} and \eqref{eq:lb forall}, we obtain
\begin{align*}
      \textrm{kl}(\delta, 1-\delta) & \le \frac{1}{2\sigma^2}\inf_{\lambda \in B_\varepsilon(\mu)}(\mu-\lambda)^\top \E\left[\sum_{s=1}^{\tau} a_s a_s^\top \right](\mu - \lambda)  \\
      & \le \frac{1}{2\sigma^2}\inf_{\lambda \in R_\varepsilon(\mu)}(\mu-\lambda)^\top \E\left[\sum_{s=1}^{\tau} a_s a_s^\top \right](\mu - \lambda) \\
      & \le \frac{1}{2\sigma^2}\inf_{x \in \bar{S}(\mu)}x^\top \E\left[\sum_{s=1}^{\tau} a_s a_s^\top \right]x \Vert \lambda - \mu \Vert^2 \\
      & \le \frac{20\Vert \mu \Vert \varepsilon}{\sigma^2} \inf_{x \in \bar{S}(\mu)}x^\top \E\left[\sum_{s=1}^{\tau} a_s a_s^\top \right]x,
\end{align*}
where 
$$
\bar{S}(\mu) \triangleq \left\lbrace \frac{\lambda - \mu }{\Vert \lambda - \mu\Vert}: \; \lambda \in \mathcal{R}_\varepsilon(T) \right\rbrace
$$
Hence, we have shown that
\begin{equation}\label{eq:lb incomplete}
  \inf_{x \in \bar{S}(\mu)}x^\top \E\left[\sum_{s=1}^{\tau} a_s a_s^\top \right]x \ge \frac{\sigma^2}{20\Vert \mu \Vert \varepsilon} \skl(\delta,1-\delta).
\end{equation}

To complete the derivation, we analyze the right hand side of the lower bound \eqref{eq:lb incomplete}. First, define the set of sampling rules as follows
\begin{equation}\label{eq:lb optimization}
\mathcal{X} \triangleq \left \lbrace (a_t)_{t\ge 1}: \;\forall t \ge 1, \;\;a_t \textrm{ is } \mathcal{F}_{t-1} \textrm{-measurable} \right\rbrace,
\end{equation}
and the expected matrix of exploration under a sampling rule $(a_t)_{t\ge1} \in \mathcal{X}$ as
$$
G_\tau((a_t)_{t\ge 1}) \triangleq  \E\left[\sum_{s=1}^{\tau} a_s a_s^\top \right].
$$
We will show that
\begin{equation}\label{eq:fin1}
\sup_{(a_t)_{t\ge1} \in \mathcal{X}} \inf_{x \in \bar{S}(\mu)}  x^\top G_\tau((a_t)_{t\ge 1})x \le \frac{\E[\tau]}{d-1}.
\end{equation}
For a given symmetric matrix $A\in \mathbb{R}^{d \times d}$, we denote the eigenvalues of $A$ in decreasing order as  $\lambda_1(A), \lambda_2(A), \dots, \lambda_{d}(A)$.

Let $(a_t)_{t\ge1} \in \mathcal{X}$. We start by noting that $G_\tau((a_t)_{t\ge 1})$ is positive semi-definite matrix and  that $\dim(\textrm{span}(\bar{S})) = d-1$, therefore, using the Courant-Fisher min-max theorem, we have
$$
\lambda_{d-1}\left( G_\tau((a_t)_{t\ge 1}) \right) \ge \inf_{x \in \bar{S}(\mu)}x^\top G_\tau((a_t)_{t\ge 1})x \ge 0.
$$
Additionally, we observe that for all $t \ge 1$, $\Vert a_t \Vert = 1$ since $a_t$ is taking values in $S^{d-1}$. Thus, we obtain
$$
\sum_{k=1}^d \lambda_k\left(   G_\tau((a_t)_{t\ge 1}) \right) =\textrm{tr}\left(  G_\tau((a_t)_{t\ge 1}) \right) = \E\left[ \sum_{s=1}^\tau \Vert a_s \Vert^2 \right] = \E[\tau],
$$
where we used the linearity of the trace and of the expectation. We conclude from the above that the value of max-min optimization problem $\sup_{(a_t)_{t\ge1} \in \mathcal{X}} \inf_{x \in \bar{S}(\mu)}  x^\top G_\tau((a_t)_{t\ge 1})x$ can be upper bounded by the value of the following optimization problem
\begin{align*}
 \max_{\lambda_1, \dots, \lambda_d} &\quad  \lambda_{d-1} \\
 \textrm{s. t. } & \quad \sum_{k=1}^d \lambda_k = \E[\tau] \\
 & \quad \lambda_1 \ge \lambda_2 \ge \dots \ge \lambda_d \ge 0.
\end{align*}
We easily see that the value of this optimization problem is $\E[\tau]/(d-1)$ (with $\lambda_d=0$ and $\lambda_i = \E[\tau]/(d-1)$ for all $i\neq d$). Hence \eqref{eq:fin1} holds. 

From \eqref{eq:lb incomplete} and \eqref{eq:fin1}, we conclude that
\begin{equation*}
\E[\tau] \ge \frac{\sigma^2(d-1)}{40 \Vert \mu \Vert \varepsilon} \textrm{kl}(\delta, 1-\delta).
\end{equation*}
\ep
\subsection{Stopping rule -- Proof of Proposition \ref{prop:stopping rule2}}

Let us consider the events
\begin{align*}
  \mathcal{E}_1 & = \lbrace \tau < \infty \rbrace = \left\lbrace \exists t \in \mathbb{N}^*: Z(t) \ge \beta(\delta, t) \textrm{ and }\lambda_{\min}\left(  \sum_{s=1}^t a_s a_s^\top \right)\ge \max\left\lbrace c, \frac{\rho(\delta,t)}{\Vert \hat{\mu}_t \Vert^2} \right\rbrace\right\rbrace, \\
  \mathcal{E}_2 & = \left\lbrace \mu^\top (a^\star_\mu - \hat{a}_\tau) > \varepsilon \right\rbrace, \\
  \mathcal{E}_3 & = \bigcap_{t = 1}^\infty \left \lbrace \Vert \hat\mu_t - \mu \Vert^2 \le \left(\frac{\varepsilon}{\varepsilon_t} - 1\right)^2\frac{\rho(t,\delta_t)}{\lambda_{\min}\left( \sum_{s=1}^t a_s a_s^\top\right)} \textrm{ or } \lambda_{\min}\left(\sum_{s=1}^t a_s a_s^\top \right)\ge c  \right\rbrace.
\end{align*}
If there exists $t \ge 1$ such that $\sum_{s=1}^t a_s a_s^\top\succ 0$, we have by Lemma \ref{lem:GLLR closed form} that $Z(t)= \inf_{\lbrace b \in \mathcal{A}: \vert \hat\mu_t^\top (\hat{a}_t-b) \vert \ge \varepsilon_t \rbrace} Z_{\hat{a}_t,b,\varepsilon_t}(t)$ where
\begin{equation*}
  Z_{\hat{a}_t,b,\varepsilon_t}(t) = \textrm{sgn}( \hat{\mu}_t^\top (\hat{a}_t-b) + \varepsilon_t) \frac{(\hat{\mu}_t^\top (\hat{a}_t-b) +\varepsilon_t)^2}{ 2(\hat{a}_t-b)^\top \left( \sum_{s=1}^t a_s a_s^\top \right)^{-1}(\hat{a}_t-b)}.
\end{equation*}
Thus, we have
\begin{align*}
  \mathcal{E}_1 \cap \mathcal{E}_2 = \bigg\lbrace \exists t \in \mathbb{N}^*: \; &  \inf_{\lbrace b \in \mathcal{A}: \vert \hat\mu_t^\top (\hat{a}_t-b)\vert  \ge \varepsilon_t  \rbrace } Z_{a,b, \varepsilon_t}(t) \ge \beta(\delta, t) \\
  & \textrm{ and } \lambda_{\min}\left(  \sum_{s=1}^t a_s a_s^\top \right)\ge \max\left\lbrace c, \frac{\rho(\delta,t)}{\Vert \hat{\mu}_t \Vert^2} \right\rbrace \textrm{ and }\mu^\top (a^\star_\mu - \hat{a}_\tau) > \varepsilon   \bigg\rbrace
\end{align*}
Now using \eqref{eq:sphere}, we have $\mu^\top(a^\star_\mu -b) = \frac{\Vert \mu \Vert}{2} \Vert a^\star_\mu - a \Vert^2$. Thus
$$
\mu^\top(a^\star_\mu - \hat{a}_t)> \varepsilon \implies \hat{\mu}_t^\top (\hat{a}_t - a^\star_\mu) = \frac{\Vert \hat\mu_t \Vert}{\Vert \mu \Vert}\mu^\top (a^\star_\mu - \hat{a}_t) > \frac{\Vert \hat\mu_t \Vert}{\Vert \mu \Vert} \varepsilon.
$$
Observe that
\begin{align*}
\begin{cases}\Vert \hat\mu_t - \mu \Vert^2 \le \left(\frac{\varepsilon}{\varepsilon_t} - 1\right)^2\frac{\rho(t,\delta_t)}{\lambda_{\min}\left( \sum_{s=1}^t a_s a_s^\top\right)} \\ \Vert \hat{\mu}_t \Vert^2 \ge \frac{\rho(t,\delta_t)}{\lambda_{\min}\left( \sum_{s=1}^t a_s a_s^\top\right)} \end{cases} & \implies  \left(\frac{\varepsilon}{\varepsilon_t} - 1\right) \Vert \hat{\mu}_t\Vert \ge \Vert \hat\mu_t - \mu \Vert  \\
  &  \implies \left(\frac{\varepsilon}{\varepsilon_t} - 1\right) \Vert \hat{\mu}_t\Vert  \ge \left\vert \Vert \hat{\mu}_t \Vert - \Vert \mu \Vert \right\vert\\
  & \implies  \frac{\varepsilon}{\varepsilon_t}\Vert \hat{\mu}_t \Vert \ge \Vert \mu \Vert.
\end{align*}
Hence, we have
$$
\begin{cases}
  \mu^\top(a^\star_\mu - \hat{a}_t)> \varepsilon  \\
  \Vert \hat\mu_t - \mu \Vert^2 \le \left(\frac{\varepsilon}{\varepsilon_t} - 1\right)^2\frac{\rho(t,\delta_t)}{\lambda_{\min}\left( \sum_{s=1}^t a_s a_s^\top\right)} \\ \Vert \hat{\mu}_t \Vert^2 \ge \frac{\rho(t,\delta_t)}{\lambda_{\min}\left( \sum_{s=1}^t a_s a_s^\top\right)} \end{cases} \implies
  \hat{\mu}_t^\top (\hat{a}_t - a^\star_\mu) > \varepsilon_t
$$
It then follows that
\begin{align*}
  \mathcal{E}_1 \cap \mathcal{E}_2 \cap \mathcal{E}_3 \subseteq \left\lbrace  t \in \mathbb{N}^*: \; Z_{\hat{a}_t, a^\star_\mu,\varepsilon_t} \ge \beta(\delta, t) \textrm{ and }  \lambda_{\min}\left(  \sum_{s=1}^t a_s a_s^\top \right) \ge c  \textrm{ and }  \hat{\mu}_t^\top(\hat{a}_t - a^\star_\mu) > \varepsilon_t \right\rbrace.
\end{align*}
Considering \eqref{eq:opt solution}, we have under the event $\mathcal{E}_1 \cap \mathcal{E}_2 \cap \mathcal{E}_3$ that
\begin{align*}
  \max_{\lbrace \mu': (\mu')^\top (\hat{a}_t - a^\star_\mu)  + \varepsilon_t \ge  0 \rbrace } f_{\mu'} (r_t,a_t, \dots, r_1,a_1) & = f_{\hat{\mu}_t} (r_t,a_t, \dots, r_1,a_1), \\
    \max_{\lbrace \mu': (\mu')^\top (\hat{a}_t - a^\star_\mu) + \varepsilon_t \le  0 \rbrace } f_{\mu'} (r_t,a_t, \dots, r_1,a_1) & \ge f_{{\mu}} (r_t,a_t, \dots, r_1,a_1).
\end{align*}
As a consequence, under $\mathcal{E}_1 \cap \mathcal{E}_2 \cap \mathcal{E}_3$, we have
\begin{align*}
  Z_{\hat{a}_t, a^*_\mu, \varepsilon_t}(t) & = \log\left(  \frac{\max_{ \mu': (\mu')^\top (\hat{a}_t - a^\star_\mu) + \varepsilon_t \ge  0} f_{\mu'} (r_t,a_t, \dots, r_1,a_1)}{\max_{ \mu': (\mu')^\top (\hat{a}_t - a^\star_\mu) + \varepsilon_t \le  0} f_{\mu'} (r_t,a_t, \dots, r_1,a_1)} \right) \\
  & \le \log\left(  \frac{f_{\hat{\mu}_t}(r_t,a_t, \dots, r_1,a_1)}{f_\mu(r_t,a_t, \dots, r_1,a_1)} \right) &  \\
  & = \frac{1}{2} (\hat\mu_t - \mu)^\top \left(\sum_{s=1}^t a_s a_s^\top \right) (\hat\mu_t  - \mu) \\
  & = \frac{1}{2}\Vert \mu - \hat\mu_t \Vert^{2}_{\sum_{s=1}^t a_s a_s^\top},
\end{align*}
Hence,\begin{equation*}
  \mathcal{E}_1 \cap \mathcal{E}_2 \cap \mathcal{E}_3 \subseteq \left\lbrace\exists t \in \mathbb{N}^*: \; \frac{1}{2} \Vert \mu - \hat{\mu}_t \Vert^2_{\sum_{s=1}^t a_s a_s^\top} \ge \beta(\delta, t) \textrm{ and } \sum_{s=1}^t a_s a_s^\top \right\rbrace.
\end{equation*}
We further deduce that 
\begin{align*}
  \mathbb{P}\left( \mathcal{E}_1 \cap\mathcal{E}_2 \cap \mathcal{E}_3\right) & \le \mathbb{P}\left( \exists t \in \mathbb{N}^*: \frac{1}{2} \Vert \sum_{s=1}^t a_s \eta_s \Vert^2_{(\sum_{s=1}^t a_s a_s^\top + c I_d)^{-1}} \ge 2\sigma^2 \zeta_t    \right) \\
  &\le \sum_{t = 1}^n \mathbb{P}\left(\frac{1}{2} \Vert \sum_{s=1}^t a_s \eta_s  \Vert^2_{(\sum_{s=1}^t a_s a_s^\top + c I_d)^{-1}} \ge 2\sigma^2 \zeta_t    \right) \\
  & \le \sum_{t=1}^\infty \frac{\delta_t}{2} \le \frac{\delta}{2},
\end{align*}
where for the third inequality, we use the result of Proposition \ref{prop:self concentration}.
Using a union bound and Proposition \ref{prop:self concentration} again, we also have
\begin{align*}
  \mathbb{P}\left(\mathcal{E}^c_3\right) & \le \sum_{t =1}^\infty \mathbb{P}\left( \Vert \mu_t - \mu \Vert^2 \ge \left(\frac{\varepsilon}{\varepsilon_t} - 1\right)^2\frac{\rho(t,\delta_t)}{\lambda_{\min}\left( \sum_{s=1}^t a_s a_s^\top\right)}, \sum_{s=1}^t a_s a_s^\top \succeq c  \right) \\
  & \le \sum_{t =1}^\infty \mathbb{P}\left(  \Vert \sum_{s=1}^t a_s \eta_s \Vert^2_{(\sum_{s=1}^t a_s a_s^\top + c I_d)^{-1}} \ge  2 \sigma^2 \zeta_t \right) \\
  & \le \sum_{t =1}^\infty \frac{\delta_t}{2} \le \frac{\delta}{2}
\end{align*}
Finally, we obtain
\begin{equation}
  \mathbb{P}\left(\tau < \infty, \mu^\top(a^\star_\mu - \hat{a}_\tau) > \varepsilon \right) = \mathbb{P}(\mathcal{E}_1 \cap \mathcal{E}_2) \le \mathbb{P}(\mathcal{E}_1\cap\mathcal{E}_2 \cap \mathcal{E}_3) + \mathbb{P}(\mathcal{E}_3^c) \le \delta.
\end{equation}
\ep


\subsection{Sample complexity -- Proof of Theorem \ref{thm:sc-cont}}

We recall that $\mathcal{U} = \lbrace u_1, \dots, u_d\rbrace$ is an orthonormal basis in $\mathbb{R}^{d}$, $\mathcal{U} \subset S^{d-1}$ and  our sampling rule is
  \begin{equation*}
    a_{t} = u_{(t \mod d)}
  \end{equation*}

  {\bf Almost sure guarantees.} Observe that for all $t \ge d$
  \begin{equation}\label{eq:exploration sphere}
   \left\lceil \frac{t}{d} \right\rceil \sum_{u \in \mathcal{U}} u u^\top \succeq \sum_{s=1}^t a_s a_s^\top \succeq \left\lfloor \frac{t}{d} \right\rfloor \sum_{u \in \mathcal{U}} u u^\top \succ 0.
  \end{equation}
  Let $t\ge d$. We have
  \begin{align*}
  Z(t) &  =  \inf_{\lbrace b \in \mathcal{A}: \vert\hat\mu_t^\top (\hat{a}_t-b)\vert\ge \varepsilon_t \rbrace} Z_{\hat{a}_t,b, \varepsilon_t}(t)  \\
  & \ge \inf_{\lbrace b \in \mathcal{A}: \vert\hat{\mu}^\top (\hat{a}_t-b)\vert\ge \varepsilon_t \rbrace} \frac{(\hat{\mu}_t^\top (\hat{a}_t-b) + \varepsilon_t)^2}{2\Vert\hat{a}_t - b\Vert^2} \lambda_{\min}\left( \sum_{s=1}^t a_s a_s^\top \right) \\
  & \ge  \inf_{\lbrace b \in \mathcal{A}: \vert\hat{\mu}_t^\top (\hat{a}_t-b)\vert\ge \varepsilon_t \rbrace}\left( \frac{\mu_t^\top(\hat{a}_t-b)}{\Vert \hat{a}_t -b\Vert} + \frac{\varepsilon_t}{\Vert \hat{a}_t- b\Vert} \right)^2  \lambda_{\min}\left( \sum_{s=1}^t a_s a_s^\top \right) \\
  & \ge  \inf_{\lbrace b \in \mathcal{A}: \vert\hat{\mu}_t^\top (\hat{a}_t-b)\vert\ge \varepsilon_t \rbrace}\left( \frac{\Vert \mu_t \Vert}{2} \Vert \hat{a}_t - b \Vert + \frac{\varepsilon_t}{\Vert \hat{a}_t- b\Vert} \right)^2  \lambda_{\min}\left( \sum_{s=1}^t a_s a_s^\top \right) \\
  & \ge \inf_{\lbrace b \in \mathcal{A}: \Vert \vert\hat{\mu}_t \Vert  \Vert\hat{a}_t-b)\Vert^2\ge 2 \varepsilon_t \rbrace}\left( \frac{\Vert \mu_t \Vert}{2} \Vert \hat{a}_t - b \Vert + \frac{\varepsilon_t}{\Vert \hat{a}_t- b\Vert} \right)^2  \lambda_{\min}\left( \sum_{s=1}^t a_s a_s^\top \right) \\
  & \ge  2 \varepsilon_t \Vert \hat{\mu}_t \Vert   \lambda_{\min}\left( \sum_{s=1}^t a_s a_s^\top \right)
\end{align*}
Thus, using \eqref{eq:exploration sphere}, we obtain
\begin{equation}
Z(t)  \ge 2 \varepsilon_t \Vert \hat{\mu}_t \Vert \left\lfloor \frac{t}{d} \right \rfloor.
\end{equation}

Now, consider the choice
  \begin{equation}
    \varepsilon_t = \frac{\varepsilon}{1 +  \varepsilon \left(  4\sigma^2 \log\left( \frac{4}{\delta_t} \left\lceil  \frac{t}{d}\right\rceil  \right) \right)^{-1/2}}.
  \end{equation}
Note that for all $\varepsilon_t < \varepsilon$ and $\varepsilon_t \tends \varepsilon$. We have
  \begin{align*}
    \tau  \le d\vee\inf\left\lbrace t \in \mathbb{N}^*: \quad  \varepsilon \Vert \mu_t \Vert\left\lfloor \frac{t}{d} \right\rfloor    \ge 4\sigma^2\log\left( \frac{4}{\delta_t} \left\lceil \frac{t}{d} \right\rceil \right)   \right\rbrace
  \end{align*}
  Now by the force exploration \eqref{eq:exploration sphere}, and using \eqref{lem:ls}, we have that $\Vert \hat{\mu}_t \Vert \tends \Vert \mu \Vert$ (a.s.). Define the event $\mathcal{E} = \lbrace \hat{\mu}_t \Vert \tends \Vert \mu \Vert  \rbrace$. On this event, for all $\xi > 0$, there exists $t_0> 0$ such that $\Vert \hat{\mu_t}\Vert > (1-\xi) \Vert \mu \Vert$. Hence on $\mathcal{E}$, we have
  \begin{align*}
        \tau \le \max\lbrace d, t_0\rbrace \vee \inf\left\lbrace t \in \mathbb{N}^*: \quad  \varepsilon (1-\xi) \Vert \mu \Vert\left\lfloor \frac{t}{d} \right\rfloor    \ge 4\sigma^2\log\left( \frac{4}{\delta_t} \left\lceil \frac{t}{d} \right\rceil \right)   \right\rbrace.
  \end{align*}
  Using Lemma \ref{lem:technical} and similar arguments as in the analysis of the sample complexity for the case of finite sets of arms in Appendix F, we obtain that on $\mathcal{E}$,
  \begin{align*}
    \tau \lesssim \max \lbrace d, t_0 \rbrace + \frac{4\sigma^2 d}{(1-\xi) \Vert \mu \Vert} \log\left(\frac{1}{\delta}\right) + o\left(\log\left(\frac{1}{\delta}\right)\right)
  \end{align*}
 Thus, we have shown that $\mathbb{P}\left(\tau < \infty \right) = 1$ and more precisely, letting $\xi$ tend to 0, that
  \begin{equation}
    \mathbb{P}\left(\limsup_{\delta \to 0} \frac{\tau}{\log(1/\delta)} \lesssim \frac{\sigma^2 d}{\varepsilon \Vert \mu\Vert}\right) = 1
  \end{equation}

  \medskip
  \medskip

  {\bf Guarantees in expectation.}  To obtain an upper bound on the expected sample complexity, we construct for all $T \ge 1$, the events
  \begin{equation}
    \mathcal{E}_T = \bigcap_{t= T}^\infty \lbrace  \Vert \hat{\mu}_t - \mu \Vert \le \xi \Vert \mu \Vert \rbrace
  \end{equation}
  Following the same chain of arguments as in Appendix F.2 (see Step 2), we can show that
  \begin{equation}
    \E[\tau] \lesssim \frac{d\sigma^2}{(1-\xi)\varepsilon \Vert \mu \Vert}\log(1/\delta) + o(\log(1/\delta)) + d + \sum_{T = d}^\infty \mathbb{P}(\mathcal{E}_T^c).
  \end{equation}
  Then again using the forced exploration \eqref{eq:exploration sphere} and Lemma \ref{lem:ls concentration}, we obtain that for all $T \ge 1$
  \begin{equation}
    \mathbb{P}(\mathcal{E}_T^c) \le \sum_{t=T}^\infty c_1 \exp(-c_2 \xi^2\Vert  t),
  \end{equation}
  where $c_1, c_2$ are positive constants that only depends on $d, \mu$ and $\sigma$. Then following similar steps as in Appendix F.2 (see Step 3), we can show that $ \sum_{T = d}^\infty \mathbb{P}(\mathcal{E}_T^c) < \infty $, from which we may then conclude that
  \begin{equation*}
    \limsup_{\delta \to 0} \frac{\E[\tau]}{\log(1/\delta)} \lesssim \frac{d\sigma^2 }{\Vert \mu \Vert \varepsilon}.
  \end{equation*}
  \ep

\end{document}